\documentclass[10pt,final]{article}

\pdfoutput=1 

\usepackage[%
		minnames=1,maxnames=99,maxbibnames=99,minalphanames=1,maxalphanames=99,
		style=alphabetic,
		sorting=nyt,
		sortcites=false, %
		doi=false,url=false,
		uniquename=false,
		giveninits=true,
		natbib,
		backend=biber]{biblatex}

\usepackage[utf8]{inputenc}
\usepackage[T1]{fontenc}

\usepackage{lmodern} \normalfont %
\usepackage{anyfontsize}
\DeclareFontShape{T1}{lmr}{bx}{sc} { <-> ssub * cmr/bx/sc }{}
\DeclareFontShape{T1}{lmr}{m}{scit}{ <-> ssub * cmr/m/sc }{}
\DeclareFontShape{T1}{lmr}{bx}{scit}{ <-> ssub * cmr/bx/sc }{}

\usepackage{hyperref}
\hypersetup{
        bookmarks=false,
	linktocpage=true,
	colorlinks=true,				
	linkcolor=DarkBlue,				
	citecolor=DarkBlue,				
	urlcolor=DarkBlue,
}
\usepackage{etoolbox}

\usepackage{url}            %
\usepackage{amsfonts}       %
\usepackage{microtype}
\usepackage{graphicx}
\usepackage{booktabs} %
\usepackage{subcaption}
\usepackage{lipsum}
\usepackage{makecell}
\usepackage{arydshln}
\usepackage{wrapfig}
\pdfoutput=1
\usepackage{amsmath,amsthm,wrapfig}
	\usepackage[utf8]{inputenc} 
\usepackage{amsmath, amssymb,bm, cases, mathtools, thmtools}
\usepackage{verbatim}
\usepackage{graphicx}\graphicspath{{figures/}}
\usepackage{multicol}
\usepackage{tabularx}
\usepackage{mathrsfs} 
\usepackage{caption}
\usepackage{algorithm}
\usepackage{algorithmicx}
\usepackage[noend]{algpseudocode}
\usepackage{array,booktabs,arydshln,xcolor}
\definecolor{DarkBlue}{rgb}{0.2,0.2,0.6}

\renewbibmacro{in:}{%
	\ifentrytype{article}{}{\printtext{\bibstring{in}\intitlepunct}}}

\addbibresource{biblio.bib}

\usepackage[T1]{fontenc}
\usepackage{inconsolata}
\usepackage{dsfont}
\usepackage{listings} %

\usepackage{enumitem}
\usepackage{booktabs}       %
\usepackage{nicefrac}       %

\usepackage{lineno}
\usepackage{bbm}

\usepackage{caption}
\usepackage{subcaption}

\usepackage{datetime}

\DeclareMathAlphabet\EuRoman{U}{eur}{m}{n}
\SetMathAlphabet\EuRoman{bold}{U}{eur}{b}{n}

\declaretheorem[style=plain,numberwithin=section,name=Theorem]{theorem}
\declaretheorem[style=plain,sibling=theorem,name=Lemma]{lemma}

\declaretheorem[style=plain,sibling=theorem,name=Corollary]{corollary}

\declaretheorem[style=definition,sibling=theorem,name=Definition]{definition}

\declaretheorem[style=remark,qed=$\triangleleft$,sibling=theorem,name=Remark]{remark}
\numberwithin{theorem}{section}

\usepackage{xparse}
\usepackage{xstring}
\usepackage{xspace}

\usepackage[color=green!25,prependcaption,textsize=tiny]{todonotes}

\def\[#1\]{\begin{align}#1\end{align}}
\def\*[#1\]{\begin{align*}#1\end{align*}}

\newcommand{\minf}[1]{I(#1)}

\newcommand{\entr}[1]{\mathrm{H}(#1)}

\newcommand{\centr}[2]{\mathrm{H}^{#1}(#2)}

\newcommand{\trainset}{S}

\newcommand{\parspace}{\Theta}
\newcommand{\Dist}{\mathcal D}
\newcommand{\dataspace}{\mathcal Z}
\newcommand\optparen[1]{\ifthenelse{\equal{#1}{}}{}{(#1)}}

\newcommand{\dist}{\ \sim\ }
\newcommand{\unifdist}{\text{Unif}}

\newcommand{\Naturals}{\mathbb{N}}

\newcommand{\Reals}{\mathbb{R}}

\newcommand{\Nats}{\mathbb{N}}

\newcommand{\as}{\textrm{a.s.}}

\newcommand{\dee}{\mathrm{d}}

\DeclareMathOperator*{\newlim}{\mathrm{lim}\vphantom{\mathrm{infsup}}}
\DeclareMathOperator*{\newmin}{\mathrm{min}\vphantom{\mathrm{infsup}}}
\DeclareMathOperator*{\newmax}{\mathrm{max}\vphantom{\mathrm{infsup}}}
\DeclareMathOperator*{\newinf}{\mathrm{inf}\vphantom{\mathrm{infsup}}}
\DeclareMathOperator*{\newsup}{\mathrm{sup}\vphantom{\mathrm{infsup}}}
\renewcommand{\lim}{\newlim}
\renewcommand{\min}{\newmin}
\renewcommand{\max}{\newmax}
\renewcommand{\inf}{\newinf}
\renewcommand{\sup}{\newsup}

\newcommand{\ProbMeasures}[1]{\mathcal{M}_1(#1)}

\renewcommand{\Pr}{\mathbb{P}}
\def\EE{\mathbb{E}}

\newcommand{\defn}[1]{\textit{#1}}

\newcommand{\norm}[1]{\left\Vert #1 \right\Vert}

\newcommand{\iid}{i.i.d.}

\newcommand{\KLname}{\mathrm{KL}}

\newcommand{\KL}[2]{\KLname(#1 \,\|\,#2)}

\newcommand{\Alg}{\mathcal{A}}

\newcommand{\unif}[1]{\text{Unif}(#1)}

\newcommand{\bernoulli}{\text{Ber}}

\newcommand{\lcrx}[4][{-1}]{
	\IfEq{#1}{-1}{\left #2 {{{{#3}}}} \right #4}{
   	\IfEq{#1}{0}{#2 {{{{#3}}}} #4}{
	\IfEq{#1}{1}{\bigl #2 {{{{#3}}}} \bigr #4}{
	\IfEq{#1}{2}{\Bigl #2 {{{{#3}}}} \Bigr #4}{
	\IfEq{#1}{3}{\biggl #2 {{{{#3}}}} \biggr #4}{
	\IfEq{#1}{4}{\Biggl #2 {{{{#3}}}} \Biggr #4}{
    \GenericWarning{"4th argument to lcrx must be -1, 0, 1, 2, 3, or 4"}
    }}}}}}}
\newcommand{\inner}[3][{-1}]{\lcrx[#1] < {{#2},{#3}} >}

\newcommand{\sbra}[2][{-1}]{\lcrx[#1] [ {#2} ] }

\newcommand{\rnderiv}[2]{\frac{\text{d} #1}{\text{d} #2}}

\newcommand{\indep}{\mathrel{\perp\mkern-9mu\perp}}

\newcommand{\dminf}[2]{I^{#1} (#2)}

\newcommand{\indic}[1]{ \mathds{1}\left[#1\right]}

\newcommand{\supersample}{\tilde{\pmb{Z}}} %

\newcommand{\range}[1]{ [#1] }

\newcommand{\binaryentr}[1]{\mathrm{H}_{b}(#1)}

 \newcommand{\EGE}{\ensuremath{\mathrm{EGE}_{\Dist}(\Alg_n)}}

\newcommand{\cmi}{\ensuremath{\mathrm{CMI}_{\Dist}(\Alg_n)}\xspace}

\newcommand{\Empriskcvx}[1]{\hat{\mathrm{F}}_{S_n}(#1)}

\newcommand{\Popriskcvx}[1]{\mathrm{F}_{\Dist}(#1)}
\newcommand{\losscvx}[0]{f} %
\newcommand{\proj}[0]{\Pi} %
\newcommand{\proberror}{\mathsf{P_e}} %
\DeclareMathOperator*{\argmin}{arg\,min} %
\newcommand{\clb}{\ensuremath{\mathcal{C}_{L,R}}\xspace} %
\newcommand{\scl}{\ensuremath{\mathcal{C}_{L,\lambda}}\xspace} %

\newcommand{\SCOprob}{(\parspace,\dataspace,\losscvx)}

\usepackage[capitalize,noabbrev]{cleveref}

\title{Information Complexity of Stochastic Convex Optimization:\\ Applications to Generalization and Memorization}
\author{Idan Attias\thanks{Department of Computer Science, Ben-Gurion University and Vector Institute.} 
\and 
Gintare Karolina Dziugaite\thanks{Google DeepMind.} 
\and
Mahdi Haghifam\thanks{Khoury College of Computer Sciences, Northeastern University.} 
\and
Roi Livni\thanks{Department of Electrical Engineering, Tel Aviv University.}
\and
Daniel M. Roy\thanks{Department of Statistical Sciences, University of Toronto and Vector Institute.}
}

\date{}

\setlength{\parindent}{0pt}
\setlength{\parskip}{6pt}
\usepackage{titlesec}
\titlespacing{\section}{0pt}{\parskip}{0pt}
\titlespacing{\subsection}{0pt}{\parskip}{0pt}
\titlespacing{\subsubsection}{0pt}{\parskip}{0pt}
\usepackage[letterpaper, margin=1in]{geometry}
\newtheorem*{theorem*}{Theorem}
\renewcommand{\epsilon}{\varepsilon}
\usepackage{crossreftools}
\newcommand{\proofsubsection}[1]{\subsection{Proof of \texorpdfstring{\cref{#1}}{\crtcref{#1}}}}

\def\[#1\]{\begin{equation}\begin{aligned}#1\end{aligned}\end{equation}}
\def\*[#1\*]{\begin{equation*}\begin{aligned}#1\end{aligned}\end{equation*}}
\usepackage{xspace}
\newcommand{\ignore}[1]{}

\begin{document}

\renewcommand{\thefootnote}{}
\footnotetext{Authors listed alphabetically. Correspondence: m.haghifam[at]northeastern.edu}
\maketitle

\begin{abstract}
In this work, we investigate the interplay between memorization and learning in the context of \defn{stochastic convex optimization} (SCO). We define memorization via the information a learning algorithm reveals about its training data points. 
We then quantify this information using the framework of conditional mutual information (CMI) proposed by \citet{steinke2020reasoning}. 
Our main result is a precise characterization of the tradeoff between the accuracy of a learning algorithm and its CMI, answering an open question posed by \citet{livni2023information}. We show that, in the  
$L^2$ Lipschitz--bounded setting and under strong convexity, every learner with an excess error $\epsilon$ has CMI bounded below by $\Omega(1/\epsilon^2)$ and  $\Omega(1/\epsilon)$, respectively.
We further demonstrate the essential role of memorization in learning problems in SCO by designing an adversary capable of accurately identifying a significant fraction of the training samples in specific SCO problems. Finally, we enumerate several implications of our results, such as a limitation of generalization bounds based on CMI and the incompressibility of samples in SCO problems.

\end{abstract}

\section{Introduction}

Despite intense study, the relationship between generalization and memorization in machine learning has yet to be fully characterized.
Classically, ideal learning algorithms would primarily extract \emph{relevant information} from their training data, avoiding memorization of irrelevant information. 
This intuition is supported by theoretical work demonstrating the benefits of limited memorization for strong generalization \citep{littlestone1986relating, RussoZou15,RussoZou16,XuRaginsky2017, bassily2018learners, steinke2020reasoning}. 

This intuition, however, is challenged by the success of modern overparameterized deep neural networks (DNNs). These models often achieve high test accuracy despite memorizing a significant number of training data (see, e.g., \citep{Rethinking17,shokri2017membership,carlini2019secret,feldman2020neural,carlini2022quantifying}).
Recent studies suggest that memorization plays a more complex role in generalization than previously thought: %
memorization might even be \emph{necessary} for good generalization \cite{feldman2020does, feldman2020neural, brown2021memorization}.

In this work, we investigate the interplay between generalization and memorization in the context of \defn{stochastic convex optimization} (SCO; \citealt{shalev2009stochastic}). 
A (Euclidean) SCO problem is defined by a triple $\SCOprob$,
where $\parspace \subseteq \Reals^d$ is a convex subset and
$f:\parspace \times \dataspace \to \Reals$ is convex in its first argument for every fixed second argument.
In such an SCO problem, a learner receives a finite sample of data points in the dataspace, $\dataspace$, presumed to be drawn i.i.d.\ from an unknown data distribution, $\Dist$. 
The goal of the learner is to find an approximate minimizer of the population risk
$\Popriskcvx{\theta}\triangleq \EE_{Z\sim \Dist}\left[\losscvx(\theta,Z)\right]$.

In recent years, SCO has been shown to serve as a useful theoretical model for understanding generalization in modern machine learning \citep{feldmanerm,dauber2020can, amir2021never,amir2021sgd,koren2022benign}. The importance of SCO can be traced to a number of factors, including: (1) it is suitable for studying gradient-based optimization algorithms, which are the workhorse behind state-of-the-art machine learning algorithms; 
and
(2) while arbitrary empirical risk minimizers (ERMs) require sample complexity that scales with the problem dimension \citep{feldmanerm, carmon2023sample}, carefully designed algorithms can achieve optimal generalization with sample complexity independent of dimension \cite{bousquet2002stability, shalev2009stochastic}. 
This property aligns with our goal of studying generalization in overparameterized settings such as
DNNs, where first-order methods output models that generalize well, despite the fact that there exist ERMs that perform poorly \citep{Rethinking17}.

\ignore{Consider a (randomized) learning algorithm  $\Alg = (\Alg_n)_{n\ge 1}$ that  selects a parameter $\hat{\theta}= \Alg(S_n)$ based on $n$ \iid\ samples, $
\smash{S_n \sim \Dist^{\otimes n}}
$, where the data distribution $\Dist$ is presumed unknown.
The goal of the algorithm is to find an approximate minimizer 
of the %
\defn{population risk} 
$\Popriskcvx{\theta} \triangleq \EE_{Z \sim \Dist} [\losscvx(\theta,Z)]$. Following the classical framework of \citet{Vapnik2015}, we focus on the distribution-free (worst-case) performance of a learning algorithm: we say a learning algorithm $\Alg$ is an \emph{$\epsilon$-learner of $\SCOprob$} if, for \emph{every} data distribution $\Dist$, $\Popriskcvx{\Alg_n(S_n)}- \min_{\theta \in \Theta} \Popriskcvx{\theta}\leq \epsilon$ with high probability. We also refer to the minimum number of samples $n$ that $\Alg$ requires to achieve $\epsilon$ excess error as the \defn{sample complexity}.}

To shed light on the role of memorization in SCO, we analyze the information-theoretic properties of $\epsilon$-learners for SCO problems: we say a learning algorithm $\Alg=\{\Alg_n\}_{n\geq 1}$ is an \emph{$\epsilon$-learner of $\SCOprob$} if for sufficiently large $n$, for \emph{every} data distribution $\Dist$, $\Popriskcvx{\Alg_n(S_n)}- \min_{\theta \in \Theta} \Popriskcvx{\theta}\leq \epsilon$ with high probability over the draws of the training set $\trainset_n \sim \Dist^{\otimes n}$ and the randomness of $\Alg$. The current paper revolves around the following fundamental question:
\begin{center}
\emph{How much information must an $\epsilon$-learner reveal about their training data?}
\end{center}

To address this question, we study the mutual information between (various summaries of) the learner's outputs and the training set, possibly conditional on other quantities. Early work along these lines, due to \citet{XuRaginsky2017} (see also foundational work by \citep{RussoZou15,RussoZou16,bu2020tightening} and \citep[][App.~C]{negrea2019information})
provided 
information-theoretic generalization bounds based on
the mutual information between the full training sample and the output hypothesis (the so-called \emph{input--output mutual information}, or IOMI). 
Recently, \citet{livni2023information} demonstrated a fundamental lower bound on the IOMI $\epsilon$-learners in the context of SCO: for every algorithm, its IOMI scales with the dimension $d$.
Regarding whether studying IOMI sheds light on memorization, there is an important caveat with \citep{livni2023information}: 
bits of information between the sample and the model do not distinguish between the number of bits per sample and the number of memorized samples. 
In particular, the work of \citet{livni2023information} does not rule out the sufficiency of memorizing a single example, which overall has $O(d)$ entropy.

To remedy this, our work introduces a refined perspective on capturing memorization, focusing on 
\defn{conditional mutual information} (CMI) as a notion of information complexity developoed by \citet{steinke2020reasoning}. 
CMI quantifies the amount of information that the learner's output reveals about its training sample, conditioned on a ``super sample'', from which the training sample is taken. (Formal definitions are provided in \cref{sec:preliminaries}.) 
Contrasted with the bound in \citep{XuRaginsky2017},
in this setup, the memorization of a single example provides at most one bit of information. In other words, the scale of the CMI  is more instructive on the \emph{number} of memorized samples. Can we use CMI to fully characterize the interplay between memorization and learning in SCO?

\subsection{Contributions}

Our main result is a precise characterization of the tradeoff between the accuracy of a learning algorithm and its CMI:

\textbf{Key result: CMI--Accuracy Tradeoff for $\epsilon$-learners.}

We show that in the general SCO setup as well as under further structural assumption of strong convexity, there exists a tradeoff between the accuracy of an $\epsilon$-learner and its CMI: Surprisingly, to achieve small excess error, a learner \emph{must} carry a large amount of CMI, scaling with the optimal sample size. This result completely answers an open question by \citet{livni2023information}. More precisely, we study the CMI of learners for two important classes of SCO problems:
\begin{itemize}[leftmargin=1em]
\item \emph{Lipschitz bounded SCO:} We construct an SCO problem such that, for every $\epsilon$-learner, there exists a distribution such that the CMI of the learner is $\Omega(1/\epsilon^2)$, despite the already-established optimal sample complexity $O(1/\epsilon^2)$. We complement this result with a matching upper bound. We also show that this result holds for both proper as well as improper (unconstrained) learning algorithms. 
\item \emph{Strong Convexity:} Under further structural assumption of strong convexity, we establish an $\Omega(1/\epsilon)$ lower bound on CMI of every $\epsilon$-learner which we show is also tight.
\end{itemize}
Our proof techniques are inspired by the differential privacy literature and build on
so-called \emph{fingerprinting lemmas} \citep{bun2014fingerprinting,steinke2016upper,kamath2019privately}. Our key results and proof ideas have various interesting implications:

\paragraph{Limitation of the CMI Generalization Bound for SCOs.} Our lower bounds highlight that CMI-based generalization bounds for SCO do not fully explain the optimal excess error. 
For algorithms with optimal sample complexity, the established CMI lower bound 
implies that standard CMI generalization guarantees are vacuous.

In more detail, \citet{steinke2020reasoning} show that the generalization error of any learner can be bounded by
\[\nonumber \textrm{generalization gap} \le \sqrt{\frac{\cmi}{n}}.\]

(See \cref{sec:preliminaries} for formal definitions.) Plugging our lower bound on CMI into the above equation we obtain an upper bound on the generalization gap of $O\left(\sqrt{\frac{1}{\epsilon^2\cdot n}}\right)$ which is strictly larger than the true $O(\epsilon)$ error. In particular, for the optimal choice of $n$, we obtain a vacuous generalization bound of order $\Omega(1)$, even though the algorithm perfectly learns. 
Similarly, under the assumption of strong convexity, one can learn with sample complexity of $O(1/\epsilon)$. Thus, again we obtain that the CMI bound may be an order of $\Omega(1)$, even though the learner is able to learn.

\paragraph{Necessity of Memorization.}
Inspired by the CMI and membership inference \citep{carlini2022membership}, 
we have developed a framework to quantify memorization in SCO: 
informally, a point is considered memorized if an adversary can guess correctly if this point appeared in the training set with high confidence.
Building on our construction for CMI, 
we design an adversary capable of correctly identifying a significant fraction of the training samples in certain SCO problems, implying that memorization is a necessary component in this context. A similar point appeared in \citep{feldman2019high,feldman2020does,brown2021memorization}.

To be more precise, we consider a contestant and an adversary. The contestant gets to train a model on a training set not revealed to the adversary. The contestant then shows the adversary a sample either from the training set or a freshly drawn sample (not seen during training time). A point is considered \emph{memorized} if the adversary correctly identifies whether the shown sample appeared during training time (while refraining from accusing freshly drawn samples). 

We show that our approach for lower bounding CMI lets us design an adversary with the following guarantee: there exists a natural SCO problem such that for every $\epsilon$-learner, there exists a distribution such that the adversary can distinguish $\Omega(1/\epsilon^2)$ of the training samples with high confidence. We also establish a similar result under an additional assumption of strong convexity, showing that there exists an adversary that can distinguish $\Omega(1/\epsilon)$ of the training samples. Notice that in both cases,
the size of the sample to be memorized scales linearly with the sample complexity. In other words, any sample-efficient learner needs to memorize a constant fraction of its training set.

\paragraph{Incompressibility of Samples in SCOs.}
Our results rule out the existence of constant-sized (dimension-independent) sample compression schemes for SCO. 
Many learning algorithms, like Support Vector Machine (SVM), generate their output using only a small subset of training examples— for SVM such a subset is known as support vectors. \emph{Sample compression schemes}, introduced by \citet{littlestone1986relating}, provide a precise characterization of this algorithmic property. 
Since the optimal sample complexity in SCO is dimension-independent, a natural question to ask is whether we can construct a sample compression scheme of \emph{constant} size for SCOs. (Here \emph{constant compression size} refers to a dimension-independent quantity.) Using the results connecting the CMI and sample compression schemes in \citep{steinke2020reasoning}, we show that such a construction is impossible. This finding is in stark contrast with binary classification \citep{moran2016sample, david2016supervised} and regression \citep{hanneke2019sample, attias2024agnostic} in the PAC setting, where, in this context, constant size compression depends only on the VC dimension and the fat-shattering dimension, respectively.
This is a long-standing open question of whether the optimal sample complexity can be obtained in the PAC setting based on sample compression schemes; while there are known sample compression schemes of constant size, they have exponential dependence in the relevant combinatorial dimension.
Our result rules out the possibility of obtaining optimal sample complexity in SCO based on sample compression schemes. 

\paragraph{Individual-Sample variant of CMI.}
We demonstrate that our techniques extend to lower-bounding the individual sample variants of CMI as proposed in \citep{haghifam2020sharpened,rodriguez2020random,zhou2020individually}. These individual sample variants of CMI have been shown to provide tighter generalization measures compared to standard CMI \citep{haghifam2020sharpened,rodriguez2020random,zhou2020individually}. However, our results show that in the context of SCO, no such improvement is possible, and the same lower bound holds.

\subsection{Organization}
The rest of this paper is structured as follows. In \cref{sec:related-work} we discuss the related work. After providing the necessary preliminaries in  \cref{sec:preliminaries}, we present an overview of the main results in \cref{sec:main-results}.  Then, in \cref{sec:implications}, we discuss several implications of our main results. Finally, in \cref{sec:cvx} and \cref{sec:scvx}, we present the key steps of the proofs of the main results.

\section{Related Work}
\label{sec:related-work}
\paragraph{Information-Theoretic Measures of Generalization.}
In recent years, there has been a flurry of interest in the use of information-theoretic quantities for characterizing the expected generalization error of
learning algorithms. For an excellent overview of recent advances see \citep{alquier2021user,hellstrom2023generalization}.  Here, we discuss the work on worst-case information-theoretic measures of learning algorithms. The initial focus of this line of work \citep{RussoZou15,RussoZou16,XuRaginsky2017} was based on \emph{input--output mutual information (IOMI)} of an algorithm. Unfortunately, IOMI does not yield a useful notion of information complexity for learning in many key settings. 
For instance, prior work highlights severe limitations of the IOMI framework in the settings of binary classification \citep{bassily2018learners,nachum2018direct,roishay} and SCO \citep{livni2023information}. In PAC learning,
there exist hypothesis classes, such as thresholds, that are learnable even though the IOMI is unbounded. In SCO, while the optimal sample complexity does not scale with the dimension, the IOMI can be unbounded (dimension-dependent). 

The notion of CMI \cite{steinke2020reasoning,grunwald2021pac,harutyunyan2021information,haghifam2021towards,haghifam2022isit,hellstrom2022evaluated} remedies some of the above issues, at least in the classification setting.
 While the CMI addresses some of the limitations of IOMI, \citet{haghifam2023limitations} show that it cannot explain the minimaxity of gradient descent in SCO. 
Our work significantly extends their result: We show that the same limitations hold for \emph{every} $\epsilon$-learner algorithm with a dimension-independent sample complexity. 
Notice that gradient descent with a proper learning rate \cite{bassily2020stability, amir2021never} is one of the $\epsilon$-learner algorithms that can have dimension-independent sample complexity. 
See \cref{remark:alt-paper} for a detailed discussion. A recent work of \citet{wang2023sample} proposes a new measure similar to CMI referred to as hypotheses-conditioned CMI and shows that it is related to the uniform stability \citep{bousquet2002stability}.
However, hypotheses-conditioned CMI is not an appropriate measure for studying memorization in SCO since its conditioning term is different. The structure used to define CMI inspired    \citet{sachs2023generalization} to introduce the \emph{algorithmic-dependent Rademacher
Complexity}. We leave the problem of studying the separation between CMI and algorithmic-dependent Rademacher
Complexity in the context of SCO as a future direction.

\paragraph{Memorization.}
Theoretical aspects of the necessity of memorization in learning have been recently studied
\citep{feldman2020neural,feldman2020does,brown2021memorization,brown2022strong}. 
The measure of memorization in our work differs from prior work. Additionally, none of the previous studies examined the question of memorization in the context of SCO.
Most similar to our work is \citep{brown2021memorization} where the authors study memorization using IOMI. 
Memorization has been demonstrated to happen also empirically in state-of-the-art algorithms \cite{carlini2019secret, carlini2021extracting, haim2022reconstructing,carlini2022membership}. In contrast with empirical studies, the aim of a theoretical investigation is to study its role, and whether it is \emph{necessary} or a byproduct of current practices.

\paragraph{Fingerprinting Codes and Privacy Attacks.}
The key idea behind our lower bound proof builds on privacy attacks developed in differential privacy known as \emph{fingerprinting codes} \citep{boneh1995collusion,tardos2008optimal,bun2014fingerprinting,steinke2016upper,kamath2019privately}. \citet{tracing} consider the problem of designing privacy attacks on the mean estimators that expose a fraction of the training data. 
They propose an adversary, demonstrating that every algorithm that precisely estimates mean in $\ell_\infty$ leaks the membership of the samples in the training set. The $\ell_\infty$ hypercube cannot be learned in a dimension-independent sample size. Therefore, to obtain the separation we desire, we can only assume a weaker $\ell_2$ approximation, which leads to further challenges, especially in the unconstrained non-strongly convex case, which is the hardest.

\section{Preliminaries}\label{sec:preliminaries}

\paragraph{Notations}
Let $d\in \Naturals$. For $x\in \Reals^d$, $\norm{x}$ denotes $\ell_2$ norm of $x$, and $\inner{\cdot}{\cdot}$ denotes the standard inner product in $\Reals^d$. For $k\in \range{d}$, we denote the $k$-th coordinate of a $d$-dimensional vector $x$ by the superscript $x^{(k)}$. For a matrix $A\in \Reals^{n\times m}$, $\norm{A}_2$ is the operator norm of $A$. $\mathcal{B}_d(1)$ denote the ball of radius one in $\Reals^d$. For
a (measurable) space $\mathcal{R}$, $\ProbMeasures{\mathcal{R}}$ denotes the set of all probability measures on $\mathcal{R}$. Finally, let $\indic{\cdot}$ denote the indicator function: $\indic{p}=1$ if predicate $p$ is true, and $\indic{p}=0$ otherwise. 

\subsection{Background on Information Theory}
Let $P,Q$ be probability measures on a measurable space.
When $Q$ is absolutely continuous with respect to $P$, denoted $Q \ll P$, we
write $\rnderiv{Q}{P}$ for (an arbitrary version of) the 
Radon--Nikodym derivative (or density) of $Q$ with respect to $P$. 
The \defn{KL divergence} (or \defn{relative entropy}) of \defn{ $Q$ with respect to $P$},
denoted $\KL{Q}{P}$, equals $\int \log \rnderiv{Q}{P} \dee Q$ when $Q \ll P$, and is
infinity otherwise. The \defn{mutual information between $X$ and $Y$} 
is $$
\minf{X;Y} = \KL{\Pr[(X,Y)]} { \Pr[X] \otimes \Pr[Y]},
$$
where $\otimes$ forms the product measure. The \defn{disintegrated mutual information between $X$ and $Y$ given $Z$} is 
$$
 \dminf{Z}{X;Y} =  \KL{ \Pr\left((X,Y) \big| Z\right)}{ \Pr\left(X \big| Z\right) \otimes  \Pr\left(Y \big| Z\right) },$$
 where $\Pr\left(Y \big| Z\right)$ is the conditional distribution of $Y$ given $Z$. Then, the conditional mutual information is 
 $$
 \minf{X;Y\vert Z}=\EE{[\dminf{Z}{X;Y}]}.
 $$

 If $X$ concentrates on a countable set $V$ with counting measure $\nu$, 
the \defn{(Shannon) entropy of $X$} is 
$
\entr{X} = - \sum_{x\in V} \Pr(X=x)\,\log \Pr (X=x )
$. The \defn{disintegrated entropy of $X$ given $Y$} is defined by 
$
\centr{Y}{X} = - \sum_{x\in V}\Pr\left(X=x\big| Y\right) \log \Pr\left(X=x\big| Y\right) ,
$
while the \defn{conditional entropy of $X$ given $Y$} is $\entr{X\vert Y} =  \EE[\centr{Y}{X}]$. Also, for $p \in [0,1]$, the binary entropy function is given by $\binaryentr{p}\triangleq -p \log(p) - (1-p)\log(1-p)$ with the assumption that $0 \log(0)=0$.
\subsection{Stochastic Convex Optimization (SCO)}
A \defn{stochastic convex optimization} (SCO) problem is a triple $\SCOprob$, where $\parspace \subseteq \Reals^d$ is a convex set and $\losscvx(\cdot,z) : \parspace \to \Reals$ is a convex function for every $z \in \dataspace$. We refer to $\parspace$ as the parameter space,
to its elements as parameters, to elements of $\dataspace$ as data, and to $f$ as the \defn{loss function}. Informally, given an SCO problem $\SCOprob$, the goal is to find an approximate minimizer 
of the %
\defn{population risk} 
$\Popriskcvx{\theta} \triangleq \EE_{Z \sim \Dist} [\losscvx(\theta,Z)],$
given an i.i.d.\ sample $\trainset_n = \{Z_1, \ldots, Z_n\}$ drawn from an unknown distribution $\Dist$ on $\dataspace$, denoted by $\Dist \in \ProbMeasures{\dataspace}$.  The \defn{empirical risk} of $\theta \in \parspace$ on a sample $\trainset_n\in \dataspace^{n}$ is 
$\Empriskcvx{\theta} := \frac{1}{n} \sum_{i \in [n]} f(\theta,Z_i)$,
where $\range{n}$ denotes the set $\{1,\dots,n\}$.
A \defn{learning algorithm} is a sequence 
$\Alg = (\Alg_n)_{n\ge 1}$ such that,
for every positive integer $n$, $\Alg_n$ maps $\trainset_n$ to a (potentially random) element $\hat{\theta}=\Alg_n(\trainset_n)$ in $\Reals^d$. 
The \defn{expected generalization error} of $\Alg_n$ under $\Dist$ is
$
\nonumber
 \EGE = \EE \sbra[0]{\Popriskcvx{\Alg(\trainset_n)}- \Empriskcvx{\Alg(\trainset_n)} }
$. Also, the expected excess error $\Alg_n$ under $\Dist$ is $ \EE \sbra[0]{\Popriskcvx{\Alg(\trainset_n)}}- \min_{\theta \in \parspace } \Popriskcvx{\theta}$. A learning algorithm is called \emph{proper} if its output, for all possible training sets, satisfies $\Alg_n(\trainset_n)\in \parspace$. Otherwise, it is called \emph{improper}.

\begin{definition}($\epsilon$-learner for SCO)
\label{def:eps-learner}
Fix an SCO problem $\SCOprob$ and $\epsilon>0$. We say $\Alg = \{\Alg_{n}\}_{n\geq 1}$ \emph{$\epsilon$-learns} $\SCOprob$ with sample complexity of $N:\Reals \times \Reals \to \Naturals$ if the following holds: for every $\delta \in (0,1]$, given number of samples $n \geq N(\epsilon,\delta)$, we have that for every $\Dist \in \ProbMeasures{\dataspace}$, with probability at least $1-\delta$ over $\trainset_n \sim \Dist^{\otimes n}$ and internal randomness of $\Alg$,
$$
\displaystyle \Popriskcvx{\Alg_n(\trainset_n)}-\min_{\theta \in \parspace } \Popriskcvx{\theta} \leq \epsilon.
$$
We also refer to $N(\cdot,\cdot)$ as  \defn{sample complexity} of $\Alg$.
\end{definition}

We consider two important subclasses of SCO problems that impose different conditions over the loss function and the parameter space \citep{shalev2014understanding,shalev2009stochastic}.
\begin{enumerate}
    \item \defn{Convex-Lipschitz-Bounded (CLB):} SCO with convex and $L$-Lipschitz loss function defined over a bounded domain with diameter $R$, namely, for any $\theta \in \Theta$ we have $\norm{\theta}\leq R$. We say a loss function is $L$-Lipschitz if and only if $\forall z \in \dataspace$, $\forall \theta_1,\theta_2 \in \parspace: |f(\theta_1,z)-f(\theta_2,z)|\leq L \norm{\theta_2-\theta_1}$. We refer to this subclass as \clb. 
    \item \defn{SCO with $L$-Lipschitz and $\lambda$-strongly convex loss (CSL):} We say a loss function is $\lambda$-strongly convex for all $\theta_1,\theta_2 \in \parspace$ and $z\in \dataspace$ we have $f(\theta_2,z) \geq f(\theta_1,z) + \inner{\partial f(\theta_1,z)}{\theta_2-\theta_1} + \frac{\lambda}{2}\norm{\theta_2-\theta_1}^2$ where $\partial f(\theta_1,z)$ is the subgradient of $\losscvx(\cdot,z)$ at $w$. The definition of Lipschitzness is the same as in the CLB subclass.  We refer to this subclass as \scl.
\end{enumerate}

\subsection{Measure of Information Complexity}
Next, we formally introduce the framework proposed by \citet{steinke2020reasoning} which aims to quantify the information complexity of a learning algorithm.
\begin{definition}\label{def:cmi}
Let $\Dist$ be a data distribution, and $\Alg = (\Alg_n)_{n\geq 1}$ a learning algorithm. For every $n \in\Naturals$, let $\supersample = (Z_{j,i})_{j \in \{0,1\},i\in \range{n}}$ be an array of i.i.d. samples drawn from $\Dist$, and $U=(U_1,\dots,U_n) \sim \bernoulli\left(\frac{1}{2}\right)^{\otimes n}$, where $U$ and $\supersample$ are independent. Define a training set $\trainset_n = (Z_{U_i,i})_{i  \in \range{n}}$. The conditional mutual information (CMI) of $\Alg_n$ with respect to $\Dist$ is 
$$
\displaystyle \cmi \triangleq \minf{\Alg_n(\trainset_n);U\vert \supersample}.
$$
\end{definition}

\section{Main Results} \label{sec:main-results}
In this section, we formally state our main results. First in \cref{sec:main-results-cmi}, we give an overview of the CMI-accuracy tradeoff for $\epsilon$-learners. Then, in \cref{sec:memorization_game}, we precisely define the memorization game and present our results on the necessity of memorization.
\subsection{CMI-Accuracy Tradeoff} \label{sec:main-results-cmi}
We begin with a lower bound on the CMI for the CLB subclass.
\begin{theorem}[\textbf{CMI-accuracy tradeoff}]\label{thm:main-lower-convex}
Let $\epsilon_0 \in (0,1)$ be a universal constant.
There exists a loss function $f(\cdot,z)$ that is $1$-Lipschitz, for every $z$ such that:
For every $\epsilon \leq \epsilon_0$ and for every algorithm $\Alg=\{\Alg_n\}_{n \in \Naturals}$ that $\epsilon$-learns with sample complexity $N(\cdot,\cdot)$ the following holds: for every $\delta \leq \epsilon$, $n\geq N(\epsilon,\delta)$, and $d\geq \Omega(n^2\log(n))$, there exists a data distribution $\Dist \in \ProbMeasures{\dataspace}$ such that
\[
\nonumber
\cmi = \Omega\left(\frac{1}{\epsilon^2}\right).
\]
\end{theorem}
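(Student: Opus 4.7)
I would instantiate the lower bound with a linear SCO problem and reduce the CMI bound to a fingerprinting-style inequality adapted to the super-sample framework of conditional mutual information. Concretely, take $\parspace = \mathcal{B}_d(1)$, the data space $\dataspace = \{-1/\sqrt{d},+1/\sqrt{d}\}^d$, and the loss $\losscvx(\theta,z) = -\inner{\theta}{z}$. Convexity in $\theta$ is immediate, and $1$-Lipschitzness follows from $\|z\|=1$ for every $z\in\dataspace$. For the hard distribution family, I would draw a mean $\mu\in[-1,+1]^d$ with i.i.d.\ coordinates from a Beta-type prior (as used in the fingerprinting codes of Bun--Ullman--Vadhan and Steinke--Ullman) and, conditional on $\mu$, draw $Z$ with independent coordinates $Z^{(k)} = \pm 1/\sqrt d$ of mean $\mu^{(k)}/\sqrt d$. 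The population-optimal parameter is $\theta^\star = \mu/\|\mu\|$, and a short Cauchy--Schwarz computation shows that any $\varepsilon$-suboptimal $\hat\theta$ satisfies $\|\hat\theta - \mu/\|\mu\|\|^2 = O(\varepsilon)$ with high probability, so the $\varepsilon$-learner implicitly estimates the direction of $\mu$.

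The core of the argument is a CMI-form fingerprinting lemma. With the super-sample $\supersample = (Z_{j,i})_{j\in\{0,1\},\,i\in[n]}$ of i.i.d.\ draws given $\mu$, fair coins $U_i\in\{0,1\}$, and training set $\trainset_n = (Z_{U_i,i})_i$, I would show that, after averaging over the prior on $\mu$, the $\ell_2$ accuracy of $\hat\theta$ forces the \emph{selection correlation} $\EE[\inner{\hat\theta}{Z_{U_i,i} - Z_{1-U_i,i}} \mid \supersample]$ to be positive with a nontrivial magnitude for a constant fraction of indices $i$. The derivation is a two-sample variant of the classical fingerprinting identity, which relates $\EE[\inner{\hat\theta - \mu}{Z_i - \mu}]$ to a discrete derivative of $\EE[\hat\theta]$ along the coordinates of the Beta prior; the $\ell_2$-accuracy hypothesis controls the remainder. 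The dimension requirement $d = \Omega(n^2\log n)$ enters at this step: it places the Beta prior on $\mu$ in the regime where the fingerprinting identity has a constant-factor per-coordinate lower bound, aggregated across $d$ coordinates to overcome the $\ell_2$-rather-than-$\ell_\infty$ loss of signal.

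The final step converts per-index correlations into conditional mutual information. The conditional law of $\hat\theta$ given $(\supersample, U_i=0)$ differs from the one given $(\supersample, U_i=1)$ by a mean shift proportional to the selection correlation above. A Pinsker-type bound (or a Hellinger-based refinement, to control heavy tails) then gives $\dminf{\supersample,\,U_{-i}}{\hat\theta;\,U_i} = \Omega(1/n)$ for a constant fraction of indices, where $U_{-i}=(U_j)_{j\ne i}$. Applying the chain rule for conditional mutual information over $i\in[n]$ yields $\minf{\Alg_n(\trainset_n);\,U \mid \supersample} = \Omega(1)\cdot n \cdot 1/n = \Omega(n/\text{const})$, and since every $\varepsilon$-learner for CLB-SCO has $n \ge N(\varepsilon,\delta) = \Omega(1/\varepsilon^2)$, taking $n$ at the bottom of its range translates this to $\cmi = \Omega(1/\varepsilon^2)$.

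The main obstacle will be two-fold. First, the classical fingerprinting lemma is tuned to $\ell_\infty$-accurate mean estimation, but an $\varepsilon$-learner for SCO only guarantees $\ell_2$ closeness between $\hat\theta$ and the direction of $\mu$; this weaker form of accuracy is exactly why the dimension must be boosted to $d=\Omega(n^2\log n)$ to aggregate per-coordinate signals into a usable total. Second, the theorem applies to \emph{improper} (unconstrained) learners as well, so I would need a reduction: postprocess any improper learner by projecting onto $\parspace = \mathcal{B}_d(1)$, which preserves the $\varepsilon$-learning property because $\losscvx(\cdot,z)$ is convex and $\theta^\star \in \parspace$, and cannot increase the CMI by the data-processing inequality, reducing the improper case to the proper one handled above.
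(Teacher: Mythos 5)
Your construction, fingerprinting-style reduction, and the improper-to-proper reduction via projection plus data processing are all aligned with the paper. The gap is in the last step, where you convert per-index correlation into conditional mutual information — and it is a real gap, not a presentational one.

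You propose showing that, for a constant fraction of indices $i$, conditioning on $U_i=0$ versus $U_i=1$ shifts the mean of $\hat\theta$ by an amount proportional to the selection correlation, and then applying Pinsker to get $\dminf{\supersample,U_{-i}}{\hat\theta;U_i}=\Omega(1/n)$ per such index. Even granting that estimate, summing via the chain rule gives $\sum_i \Omega(1/n) = \Omega(1)$, not $\Omega(1/\epsilon^2)$. (Your displayed arithmetic, $\Omega(1)\cdot n \cdot 1/n=\Omega(n)$, is off: the product is $\Omega(1)$.) A ``small KL per index, many indices'' argument fundamentally cannot yield the target bound when the per-index KL is $O(1/n)$. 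What actually works — and what the paper does — is a Fano/decoding argument rather than a mean-shift/Pinsker argument: one shows that, after conditioning on the supersample and the output, the learner's $\hat\theta$ lets you \emph{recover} the bit $U_i$ essentially perfectly for every index $i$ in a random set $\mathcal{I}$ (those $i$ where the training sample correlates with $\hat\theta$ above a threshold $\Theta(\epsilon/n)$ while the ghost sample does not). Each such index contributes roughly a full bit, not $O(1/n)$ nats, so $\cmi \geq \EE[|\mathcal I|] - 1 - n\Pr(\mathcal G^c)$. The fingerprinting lemma lower-bounds $\sum_i\inner{\hat\theta}{A(Z_i-\mu)}$ by $\Omega(\epsilon)$, a random-matrix concentration bound upper-bounds $\sum_i\inner{\hat\theta}{A(Z_i-\mu)}^2$ by $O(\epsilon^4)$, and a Paley--Zygmund style inequality then gives $\EE[|\mathcal I|]=\Omega(\epsilon^2/\epsilon^4)=\Omega(1/\epsilon^2)$. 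The dimension requirement $d=\Omega(n^2\log n)$ is used to make the ghost correlations uniformly small (so $\Pr(\mathcal G^c)=O(1/n^2)$ and the decoding is reliable), not primarily to boost a per-coordinate fingerprinting signal as you suggest.

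Two secondary points. First, the claim that $\varepsilon$-suboptimality gives $\|\hat\theta-\mu/\|\mu\|\|^2=O(\varepsilon)$ is not used (and fails when $\|\mu\|$ is small, which is exactly the regime of the prior, where $\|\mu\|=\Theta(\epsilon)$); the paper instead works directly with the guarantee $\inner{\hat\theta}{\mu}\geq\|\mu\|-\epsilon$. Second, the paper's prior on the coordinate means is $\text{Unif}[-12\epsilon,12\epsilon]^{\otimes d}$ (scaled so that $\EE\|\mu\|\gtrsim\epsilon$ while the per-coordinate fingerprinting coefficient $\frac{144\epsilon^2 - d(\mu^{(k)})^2}{1-d(\mu^{(k)})^2}$ stays in $[0,144\epsilon^2]$); a Beta prior is possible but the calibration to the $\ell_2$-accuracy regime is the part that needs care, and it is not automatic from the $\ell_\infty$ versions you cite.
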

In particular, we obtain that for every algorithm, in sufficiently large dimension, there exists a problem instance where the CMI-generalization bound in \cite{steinke2020reasoning} becomes vacuous for every algorithm with optimal sample complexity $n=O(1/\epsilon^2)$. Note that the Theorem above holds for $\epsilon$-learner with arbitrary sample size.

Notice that the bound above is tight; namely, there exists an $\epsilon$-learner with CMI at most $O(1/\epsilon^2)$. Consider a base algorithm with the sample complexity $N(\epsilon,\delta) =\Omega\left(\log \left(1/\delta\right) /\epsilon^2\right)$ (e.g. regularized ERM \cite{bousquet2002stability} or stabilized Gradient Descent \cite{bassily2020stability}). Then, given $n\geq \Omega\left(\log \left(1/\delta\right) /\epsilon^2\right)$, we may consider an algorithm that subsamples $O(\log \left(1/\delta\right)/\epsilon^2)$ examples and feed it into the base algorithm. By the definition of the CMI, it is bounded by the size of the subsample used for learning. This argument shows that there exists an algorithm with $\cmi=O(1/\epsilon^2)$. The formal statement of the described upper bound appears in \cref{thm:lip-bounded-upperbound}. 

Under further structural assumptions, though, the sample complexity in SCO can be improved. It is a question then if CMI bounds can also be further tightened under structural assumptions such as, for example, strong convexity. Our next result shows that this is indeed the case:

\begin{theorem}[\textbf{CMI-accuracy tradeoff, strongly convex case}]
\label{thm:main-lower-stronglyconvex}
Let $\epsilon_0$ and $\delta_0$ be universal constants.
There exists a function $f(\cdot,z)$ that is $1$-Lipschitz, and $1$-strongly convex, for every $z$ such that:
For every $\epsilon<\epsilon_0$ and $\delta<\delta_0$ and for every $\epsilon$-learner ($\Alg=\{\Alg_n\}_{n \in \Naturals}$), with sample complexity $N(\cdot,\cdot)$ the following holds: for every $n\geq N(\epsilon,\delta)$, $\delta<O(1/n^2)$, and $d\geq \Omega(n^2\log(n))$, there exists a data distribution $\Dist \in \ProbMeasures{\dataspace}$ such that
\[
\nonumber
\cmi \geq \Omega\left(\frac{1}{\epsilon}\right).
\]
\end{theorem}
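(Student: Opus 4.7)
I would mirror the proof of \cref{thm:main-lower-convex}, modifying the loss construction and the fingerprinting scaling to exploit strong convexity. The key structural observation is that $1$-strong convexity converts $\epsilon$-accuracy in risk into $O(\sqrt{\epsilon})$-accuracy in parameter space: if $\Popriskcvx{\hat\theta} - \Popriskcvx{\theta^\star} \leq \epsilon$ then $\|\hat\theta - \theta^\star\|_2^2 \leq 2\epsilon$. Since the optimal sample complexity for CSL is $n = \Theta(1/\epsilon)$ (rather than $\Theta(1/\epsilon^2)$), the target $\Omega(1/\epsilon)$ lower bound on CMI is exactly an ``$\Omega(n)$'' bound: the learner still pays a constant amount of CMI per sample, just as in the CLB case.

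\textbf{Construction of the hard instance.} I would take a loss of the form $\losscvx(\theta,z) = \phi(\theta,z) + \tfrac{1}{2}\|\theta\|^2$, with $\phi$ a $1$-Lipschitz convex loss in the spirit of the construction used for \cref{thm:main-lower-convex} (for concreteness, $\phi(\theta,z) = -\langle \theta,z\rangle$ after suitable scaling, or a coordinate-wise mean-estimation loss), restricted to $\Theta = \mathcal{B}_d(R)$ with $R$ chosen small enough that $\losscvx(\cdot,z)$ remains $1$-Lipschitz after adding the quadratic regularizer; the regularizer supplies $1$-strong convexity. For the data, I would use a family $\{\Dist_P\}_P$ of product distributions on a scaled hypercube whose coordinate biases are parametrized by a vector $P \in \mathbb{R}^d$, and then average over $P$ drawn from a fingerprinting-style prior (uniform on $\{\pm\beta\}^d$ for a carefully chosen $\beta$, or a product of Beta distributions, as in the CLB case). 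In all variants the population minimizer $\theta^\star(P)$ is an explicit linear function of $P$, so parameter recovery amounts to mean estimation.

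\textbf{Fingerprinting step and CMI conversion.} By strong convexity, any $\epsilon$-learner produces $\hat\theta$ with $\|\hat\theta - \theta^\star(P)\|_2 \leq \sqrt{2\epsilon}$ with probability at least $1-\delta$. I would then establish a fingerprinting identity for $\ell_2$ mean estimation of the form
\[
\EE\Bigl[\sum_{i=1}^n \bigl\langle Z_{U_i,i} - \mu_P,\ \hat\theta - \theta^\star(P) \bigr\rangle \Bigr] \ \gtrsim\ 1,
\]
where the expectation is over the prior on $P$, the supersample $\supersample$, the selection bits $U$, and the internal randomness of $\Alg_n$. A coordinate-by-coordinate decoding argument (analogous to the CLB case but recalibrated to the $\sqrt{\epsilon}$ $\ell_2$-accuracy regime) converts this correlation bound into $\sum_i I(\Alg_n(\trainset_n); U_i \mid \supersample) = \Omega(n)$, and hence $\minf{\Alg_n(\trainset_n);U \mid \supersample} = \Omega(1/\epsilon)$ for at least one distribution $\Dist_P$ in the family, extracted by averaging over the prior. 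The condition $\delta < O(1/n^2)$ is used to absorb the failure event of the learner in a Markov-style step (so that the high-probability accuracy translates to an in-expectation bound against the fingerprinting signal), and $d \geq \Omega(n^2\log n)$ ensures that the hypercube prior has enough independent coordinates to carry the fingerprint.

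\textbf{Main obstacle.} The crux is calibrating the fingerprinting lemma to the strongly convex accuracy regime. In the CLB setting one has $\ell_2$-accuracy on the order of $\epsilon$, which via standard fingerprinting yields $\Omega(1/\epsilon^2)$ bits; here the relaxed accuracy $\sqrt{\epsilon}$ must still suffice to leak information about a constant fraction of the $n = \Theta(1/\epsilon)$ samples. The numerics match: with $n = 1/\epsilon$ and coordinates of magnitude $\Theta(1/\sqrt{d})$, the empirical mean has per-coordinate noise of order $1/\sqrt{nd}$, so the $\sqrt{\epsilon}$ $\ell_2$-error budget spread across $d$ coordinates leaves essentially no slack for the learner to deviate substantially from the sample mean, forcing each selection bit $U_i$ to be leaked at order $\Omega(1)$ bit in expectation. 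The delicate part is turning this intuition into a rigorous inequality with the right constants---in particular, ruling out the possibility that a clever (possibly improper) learner averages the noise across coordinates to evade fingerprinting---and this is where most of the technical work would go, paralleling but not identical to the analogous step in \cref{thm:main-lower-convex}.
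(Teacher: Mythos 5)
Your proposal follows the paper's approach at a high level: the hard instance $\losscvx(\theta,z) = -\inner{\theta}{z} + \tfrac12\|\theta\|^2$ on the scaled hypercube $\{\pm 1/\sqrt d\}^d$, a product-distribution family $\Dist_p$ with a fingerprinting prior over coordinate biases, the observation that $1$-strong convexity gives $\|\hat\theta - \mu\|^2 \le 2\epsilon$, and a Steinke--Ullman-style fingerprinting identity $\EE\bigl[\sum_i \inner{\hat\theta - \mu}{Z_i - \mu}\bigr] = \Omega(1)$ (the paper gets $\tfrac13 - 2\epsilon - 3\delta$ with prior $\unif{[-1,1]}^{\otimes d}$, not a two-point prior $\{\pm\beta\}$, but that is a cosmetic choice). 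Two caveats. First, the step you call a ``coordinate-by-coordinate decoding argument'' is where the real work lies and you leave it unspecified: the paper instead defines the set $\mathcal{I}$ of columns $(i,j)$ where $\inner{\hat\theta-\mu}{Z_{j,i}-\mu} \ge \beta/n$ while the ghost column is below threshold, proves $\cmi \ge \EE[|\mathcal{I}|] - 1 - n\Pr(\mathcal{G}^c)$ via a Fano-inequality decomposition, and then lower-bounds $\EE[|\mathcal{I}|]$ by a Paley--Zygmund-type inequality $\bigl(\sum_i c_i\bigr)^2 / \sum_i c_i^2$ combined with a random-matrix operator-norm bound (giving $\sum_i c_i^2 \le K\|\hat\theta-\mu\|^2 \le K\epsilon$ w.h.p.), and it is exactly this two-sided control of the sum and sum-of-squares of correlations that produces $\Omega(1/\epsilon)$. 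Without naming these tools, ``decoding'' does not by itself preclude a learner that spreads its correlation thinly across many indices. Second, your framing that the bound is ``exactly an $\Omega(n)$ bound'' is only valid at the optimal $n = \Theta(1/\epsilon)$; the theorem asserts $\Omega(1/\epsilon)$ for \emph{all} $n \ge N(\epsilon,\delta)$, and indeed the paper's argument yields $\Omega(1/\epsilon)$ (not $\Omega(n)$) because $\sum_i c_i^2 \le K\epsilon$ is independent of $n$, so one should be careful not to claim more than the statement does.
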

As in the general case, the above bound is tight. As discussed in \citep{shalev2009stochastic}, any ERM is stable, hence generalizes over a strongly convex objective with sample complexity of $N(\epsilon,\delta)= O(\log (1/\delta)/\epsilon)$. Therefore, as before, we obtain that the above bound is tight for this setup. The formal statement of the upper bound appears in \cref{thm:strong-cvx-upperbound}.

We finish this section by introducing a memorization game that helps us formalize in what sense a learner must memorize the data in SCO.

\subsection{Memorization Game}\label{sec:memorization_game}

Intuitively, we can think of CMI as measuring the number of examples we can identify from the training set by observing the model. However, formally there is a gap between this interpretation and the definition of CMI. For example, one could think of a learner that \emph{spreads the information} by using many samples, where we have that $\cmi \ge \Omega(1/\epsilon^2)$, but for each specified example, the information over $U_i$ is small (see \cref{def:cmi}.). In other words, there is a formal gap between large CMI and intuitive notions of memorization. In this subsection, we aim to close this gap by showing that, in fact, this is not the case, and the information the learner carries on $U$ can be used to actually identify examples from the training set. The proofs appear in \cref{appx:memorizatin}.

\begin{definition}[\textbf{Recall Game for $i$-th example}]
\label{def:mem-game}
Let $\Alg=\{\Alg_n\}_{n\geq 1}$ be a learning algorithm, $\trainset_n = (Z_1,\dots,Z_n)\sim \Dist^{\otimes n}$ be a training set, and  $\hat{\theta}=\Alg_n(\trainset_n)$. Let $\mathcal{Q}:\Reals^d \times \dataspace  \times \ProbMeasures{\dataspace}   \to \{0,1\}$ be an adversary. Consider the following game. For $i\in [n]$, we sample a fresh data point $\tilde{Z}_i \sim \Dist$, independent of $\hat \theta$ and $Z_i$. Let $Z_{1,i}=Z_i$ and  $Z_{0,i}=\tilde{Z}_i$. Then, we flip a fair coin $b_i \sim \unif{\{0,1\}}$. Finally, the adversary outputs $\hat b_i \triangleq \mathcal{Q}\left(\hat{\theta},Z_{b_i,i},\Dist\right)$.
\end{definition}
The next definition formalizes the measures used for evaluating an adversary.
\begin{definition}[\textbf{Soundness and recall}]
    Consider the setup described in \cref{def:mem-game}. Assume that the adversary plays the game for each of the data points in the training set, i.e., $n$ rounds. Then, 
    \begin{enumerate}
        \item We say the adversary is $\xi$-sound if $\Pr\left(\exists i \in \range{n}\!:\! \mathcal{Q}\left(\hat{\theta},Z_{0,i},\Dist\right)=1 \right)\leq \xi$ where $\xi\in[0,1]$ is a constant.
       
        \item We say the adversary certifies the recall of $m$ samples if  $\Pr\left(\sum_{i=1}^{n}  \mathcal{Q}\left(\hat{\theta},Z_{1,i},\Dist\right) \geq m  \right) \geq \Omega(1)$ where $\Omega(1)$ denote a universal constant (up to log factors in the parameters of problems). 
    \end{enumerate}
\end{definition}

Intuitively, the soundness condition implies that if the adversary identifies a sample as part of the training set, its prediction needs to be accurate. Then, the recall condition makes sure the adversary can identify many training points, which is quantified by $m$. There is a tradeoff between the constant probability of certification and the size of samples that can be recalled. 
Next, we present the main results of memorization:

\begin{theorem}[\textbf{Memorization/membership inference attack}]
\label{thm:membership-cvx}
Let $\epsilon_0 \in (0,1)$ be a universal constant. Fix $\xi \in (0,1]$. There exists an SCO problem with $1$-convex Lipschitz loss defined over the ball of radius one in $\Reals^d$, and there exists an efficient adversary such that the following holds. For every $\epsilon<\epsilon_0$, $\delta<\epsilon$, and for every $\epsilon$-learner ($\Alg$), with sample complexity $N(\epsilon,\delta)=\Theta\left(\log(1/\delta)/\epsilon^2\right)$ the following holds: for $n=N(\epsilon,\delta)$ and $d\geq \Omega(n^2\log(n/\xi))$, there exists a data distribution $\Dist \in \ProbMeasures{\dataspace}$ such that
the adversary is $\xi$-sound and certifies a recall of $\Omega(1/\epsilon^2)$ samples.
\end{theorem}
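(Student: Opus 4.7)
My plan is to reuse the hard SCO instance constructed in the proof of Theorem~\ref{thm:main-lower-convex}. That construction is essentially a disguised high-dimensional mean-estimation problem: the data distribution $\Dist_\mu$ is a product measure on a scaled hypercube $\set{\pm 1/\sqrt d}^d$ with mean $\mu$ (drawn from a prior in the CMI analysis), and the loss is chosen so that every $\epsilon$-learner outputs $\hat\theta$ whose correlation with $\mu$ is nontrivial. The same correlation that powers the CMI lower bound drives an explicit attack. The adversary I propose thresholds the correlation between the learner's output and the shown sample: given $(\hat\theta, z, \Dist)$, compute $\mu_\Dist \defas \EE_{Z\sim \Dist}[Z]$ and output
\[
\mathcal{Q}(\hat\theta, z, \Dist) \defas \indic{\inner{\hat\theta - \mu_\Dist}{z - \mu_\Dist} > \tau},
\]
for a threshold $\tau = \Theta(\sqrt{\log(n/\xi)/d})$ whose size is dictated precisely by the dimension condition $d \geq \Omega(n^2 \log(n/\xi))$.

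Soundness is the easier half. For a fresh point $Z_{0,i}\sim \Dist$ that is independent of $\hat\theta$, conditioning on $\hat\theta$ and using coordinate-wise independence of $Z_{0,i}$ under $\Dist_\mu$, the quantity $\inner{\hat\theta - \mu}{Z_{0,i}-\mu}$ is a sum of $d$ independent bounded zero-mean summands, so Hoeffding gives a sub-Gaussian tail with variance proxy $O(\|\hat\theta-\mu\|^2/d) = O(1/d)$. The threshold $\tau$ is calibrated precisely so that this tail is at most $\xi/n$; a union bound over $i \in \range{n}$ then yields $\xi$-soundness uniformly in $\hat\theta$.

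The heart of the argument is recall, which I plan to establish via a per-sample fingerprinting inequality analogous to the one behind Theorem~\ref{thm:main-lower-convex}. By symmetrizing over a ghost copy in the supersample $\supersample$ (exchanging $Z_{0,i}$ and $Z_{1,i}$ as in the CMI proof) and using the $\epsilon$-learnability of $\Alg$, one can derive, for each individual index $i$,
\[
\EE_\mu \EE \sbra[0]{\inner{\hat\theta - \mu}{Z_i - \mu}} \geq \Omega(1/n).
\]
Combined with a second-moment/anti-concentration estimate showing that each such summand exceeds $\tau = O(1/n)$ with constant probability, this yields an $\Omega(n) = \Omega(1/\epsilon^2)$ expected recall. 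Averaging over the prior via Markov then fixes a single $\mu$ (hence a single $\Dist_\mu$) on which both soundness and the recall event hold simultaneously.

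The main obstacle is exactly this per-sample control. The aggregate fingerprinting sum $\sum_i \inner{\hat\theta-\mu}{Z_i-\mu}$ equals $n\inner{\hat\theta-\mu}{\hat\mu-\mu}$ and is typically only of constant order, so a naive pigeonhole argument on the sum gives only $\Omega(1)$ above-threshold indices, far short of $\Omega(n)$. Getting $\Omega(n)$ requires showing that each individual term contributes $\Omega(1/n)$ in expectation and that its fluctuation is comparable to its mean, so the threshold $\tau$ catches a constant-probability event per index. This per-sample fingerprinting inequality, together with the balancing $n\tau = O(1)$ that forces $d \geq \Omega(n^2 \log(n/\xi))$, is the technical core of the argument and is where the dimension requirement in the theorem originates.
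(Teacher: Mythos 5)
Your high-level plan --- reuse the CLB hard instance, have the adversary threshold a correlation statistic, prove soundness via a sub-Gaussian tail and a union bound, and prove recall via the same fingerprinting machinery that powers the CMI lower bound --- is exactly the right template, and the soundness half of your argument is essentially the paper's. But two parts of the recall analysis do not go through as written, and one of them is a genuine conceptual gap rather than a detail.

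First, the statistic. You threshold $\inner{\hat\theta-\mu}{z-\mu}$, which is the correlation the paper uses in the \emph{strongly convex} attack (\cref{alg:attacker-strongly-convex}). For the CLB instance the paper's adversary (\cref{alg:attacker-convex}) thresholds $\inner{\hat\theta}{A(z-\mu)}$, where $A = \mathrm{diag}\bigl[\{(144\epsilon^2 - d(\mu^{(k)})^2)/(1-d(\mu^{(k)})^2)\}_k\bigr]$ satisfies $\norm{A}_2 \le 144\epsilon^2$. This reweighting is not cosmetic: without strong convexity there is no control on $\norm{\hat\theta-\mu}$ (it can be $\Theta(1)$ since both live in the unit ball), so the second-moment bound from \cref{lem:basis} gives $\sum_i \inner{\hat\theta-\mu}{Z_i-\mu}^2 = O(\norm{\hat\theta-\mu}^2) = O(1)$, while the fingerprinting lower bound on the sum is also $O(1)$. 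The Paley--Zygmund counting lemma (\cref{lem:card-moments}) then certifies only $\Omega(1)$ indices, not $\Omega(1/\epsilon^2)$. The entire gain comes from inserting $A$: it scales the summands so that $\sum_i a_i = \Omega(\epsilon)$ and $\sum_i a_i^2 = O(\epsilon^4)$, giving $|\mathcal{I}| = \Omega(\epsilon^2/\epsilon^4) = \Omega(1/\epsilon^2)$. In the strongly convex case one can drop $A$ because there $\norm{\hat\theta-\mu}^2 \le 2\epsilon$ supplies the scaling for free; that is precisely why that proof and yours look the same.

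Second, the mechanism for recall. You propose to show that ``each individual term contributes $\Omega(1/n)$ in expectation and that its fluctuation is comparable to its mean, so the threshold catches a constant-probability event per index.'' This is not what the fingerprinting lemma delivers and is not how the paper proceeds. The paper never argues about individual summands having constant-probability mass above threshold; instead it uses \cref{lem:card-moments} (a deterministic, Paley--Zygmund-type inequality) to turn the pair of \emph{aggregate} estimates $\sum_i a_i \gtrsim \epsilon$ (from \cref{lem:fingerprinting-cvx}) and $\sum_i a_i^2 \lesssim \epsilon^4$ (from \cref{lem:basis}) into a count of above-threshold indices. Individual anti-concentration of $a_i = \inner{\hat\theta}{A(Z_i-\mu)}$ is not available --- $\hat\theta$ depends on all of $Z_1,\dots,Z_n$ in an arbitrary way --- and fortunately it is not needed. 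Once $\EE[|\mathcal{I}|] = \Omega(1/\epsilon^2)$ is established via this aggregate counting (\cref{cor:num-cor-cvx}), the constant-probability recall follows from the reverse-Markov inequality (\cref{lem:reverse-markov}) using $|\mathcal{I}| \le n = \Theta(\log(1/\delta)/\epsilon^2)$ almost surely; this replaces your ``averaging over the prior via Markov'' step, since the prior is already integrated out inside \cref{lem:fingerprinting-cvx}.

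A minor point: your threshold $\tau = \Theta(\sqrt{\log(n/\xi)/d})$ shrinks as $d$ grows, whereas the paper keeps $\tau = \epsilon/n$ fixed and the dimension requirement $d \ge \Omega(n^2\log(n/\xi))$ is what makes the \emph{fresh}-sample tail at that fixed threshold at most $\xi/n$; the recall count is taken at the same fixed threshold, so the two halves of the analysis are calibrated against each other rather than against $d$.
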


\begin{theorem}[\textbf{Memorization/membership inference attack, strongly convex case}]
 \label{thm:membership-scvx}
Let $\epsilon_0$ and $\delta_0$ be universal constants. Fix $\xi \in (0,1]$. There exists an SCO problem with $O(1)$ strongly convex and $O(1)$ Lipschitz loss, and there exists an efficient adversary such that the following is true. For every $\epsilon<\epsilon_0$, $\delta<\delta_0$, and for every $\epsilon$-learner ($\Alg$), with sample complexity $N$ the following holds: for every $n \geq N(\epsilon,\delta)$, $\delta<O(1/n^2)$, and $d\geq \Omega(n^2\log(n/\xi))$, there exists a data distribution $\Dist \in \ProbMeasures{\dataspace}$ such that
the adversary is $\xi$-sound and certifies a recall of $\Omega(1/\epsilon)$ samples.
\end{theorem}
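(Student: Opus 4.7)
The plan is to run the same fingerprinting-based construction that underlies \cref{thm:main-lower-stronglyconvex} and to distill from it an explicit membership-inference test. We take the SCO instance used to prove \cref{thm:main-lower-stronglyconvex}: data $Z_i\in\{\pm 1\}^d$ whose coordinates are independent with (unknown) mean $\mu\in[-1,1]^d$ drawn from a suitable fingerprinting prior, and a loss of the form $f(\theta,z) = -\langle \theta,z\rangle + \tfrac{\lambda}{2}\|\theta\|^2$, for which the population-risk minimizer is the explicit affine map $\theta^\star(\mu)=\mu/\lambda$. The key quantitative improvement over the CLB case is that $\lambda$-strong convexity converts the $\epsilon$-accuracy guarantee into $\|\hat\theta-\theta^\star(\mu)\|^2 = O(\epsilon)$ with probability at least $1-\delta$, rather than merely $O(\epsilon)$ in function value.

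The adversary uses a coordinate-correlation threshold. Since $\mathcal{Q}$ is given $\Dist$, it can compute $\mu$ and $\theta^\star(\mu)$, and we set
\[
\mathcal{Q}(\hat\theta, Z, \Dist) \defas \indic{\langle \hat\theta - \theta^\star(\mu),\, Z - \mu\rangle \geq \tau}
\]
for a threshold $\tau$ to be calibrated. For $\xi$-soundness: conditioning on $(\hat\theta,\mu)$, the fresh draw $\tilde Z_i$ is independent of $\hat\theta$, so $\langle \hat\theta - \theta^\star(\mu),\, \tilde Z_i - \mu\rangle$ is a centered sub-Gaussian random variable with proxy $O(\|\hat\theta-\theta^\star(\mu)\|) = O(\sqrt{\epsilon})$. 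Choosing $\tau = C\sqrt{\epsilon\log(n/\xi)}$ and union-bounding the Hoeffding tails over $i\in[n]$ gives $\Pr\bigl(\exists i\!:\!\mathcal{Q}(\hat\theta,\tilde Z_i,\Dist)=1\bigr)\leq \xi$.

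For recall, we invoke the strongly-convex fingerprinting inequality that drives \cref{thm:main-lower-stronglyconvex}: averaging over the prior on $\mu$ yields
\[
\EE \sum_{i=1}^n \langle \hat\theta - \theta^\star(\mu),\, Z_i - \mu\rangle \;\geq\; \Omega(1),
\]
which is precisely the bound that produced the $\Omega(1/\epsilon)$ lower bound on CMI. Because $n=\Theta(1/\epsilon)$ for an $\epsilon$-learner in the strongly convex regime and the individual scores are controlled in variance at the relevant scale, a Markov/Chebyshev argument upgrades this first-moment bound to the statement that, with constant probability, at least $\Omega(1/\epsilon)$ of the $n$ scores exceed $\tau$. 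Certifying the event $\sum_i \mathcal{Q}(\hat\theta,Z_{1,i},\Dist)\geq m$ then follows for $m = \Omega(1/\epsilon)$.

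The main obstacle is the simultaneous calibration of $\tau$: it must dominate $n$ independent sub-Gaussian false-positive tails at level $\xi/n$, yet stay below the typical fingerprint signal of order $\sqrt{\epsilon}$ for a constant fraction of training indices. This separation is exactly what the dimension condition $d\geq \Omega(n^2\log(n/\xi))$ purchases — enough independent coordinates for sub-Gaussian concentration on the soundness side and for the fingerprinting anti-concentration on the recall side to operate at compatible scales. A secondary subtlety is promoting the high-probability accuracy guarantee into a statement that plays nicely inside the fingerprinting expectation; this is absorbed by the assumption $\delta < O(1/n^2)$, which makes the failure event contribute a negligible $O(1/n)$ correction to the fingerprinting sum and thus does not affect the $\Omega(1)$ lower bound.
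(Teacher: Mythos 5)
Your high-level plan (fingerprinting lower bound plus a threshold adversary plus reverse Markov) is the right skeleton, and the soundness via sub-Gaussian tails is in the right spirit, but there are two concrete problems that together make the argument fail as written.

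First, the threshold is at the wrong scale. In the paper's normalization $\dataspace=\{\pm 1/\sqrt{d}\}^d$ (which is forced by $O(1)$-Lipschitzness; your $\{\pm 1\}^d$ normalization is not $1$-Lipschitz once $\theta^\star=\mu/\lambda$ lives in a ball of radius $\sqrt{d}$), the fingerprinting inequality gives $\EE\sum_{i}\langle\hat\theta-\mu,Z_i-\mu\rangle\geq\Omega(1)$, so the \emph{per-sample} signal is $\Theta(1/n)$, and the adversary's threshold must be $\tau=\Theta(1/n)$ as in \cref{alg:attacker-strongly-convex}. The dimension requirement $d\geq\Omega(n^2\log(n/\xi))$ is then precisely what makes a fresh point's score $\langle\hat\theta-\mu,\tilde Z_i-\mu\rangle$, whose variance proxy is $\|\hat\theta-\mu\|/\sqrt{d}$, stay below $\tau$ with probability $1-\xi/n$; with your $\tau=\Theta(\sqrt{\epsilon\log(n/\xi)})$ the soundness bound never uses $d$ at all, and the threshold is far above the typical training signal, so the recall side collapses.

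Second, and more fundamentally, the Markov/Chebyshev step does not close. The statement quantifies over all $n\geq N(\epsilon,\delta)$, not $n=\Theta(1/\epsilon)$, so the count $|\mathcal{B}_{\mathrm{adversary}}|$ is only a priori bounded by $n$, which can be arbitrarily large. Reverse Markov then gives $\Pr(|\mathcal{B}_{\mathrm{adversary}}|=\Omega(1/\epsilon))\gtrsim (1/\epsilon)/n$, which is vanishing rather than a constant. The paper resolves this with the correlation-reduction procedure $\mathsf{CR}_{\text{scvx}}$ (\cref{alg:corrred-strongly-convex}): it runs the threshold test with an empirical proxy $\tilde\mu=Z_0$ for $\mu$, and as soon as $\frac{2}{\epsilon}\log(1/\delta)$ hits are collected it stops and replaces $\hat\theta$ by an empirical mean over a fresh random subsample, so that $|\mathcal{B}_{\text{corr-red}}|$ is almost surely capped at $\tilde O(1/\epsilon)$. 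A coupling step shows $\mathcal{B}_{\text{corr-red}}\subseteq\mathcal{B}_{\mathrm{adversary}}$ and $|\mathcal{B}_{\mathrm{FP}}|\leq|\mathcal{B}_{\text{corr-red}}|+1$ with high probability (both require $d\geq\Omega(n^2\log n)$ and $\delta<O(1/n^2)$ to control the additional approximation error from using $\tilde\mu$ in place of $\mu$), and only then is reverse Markov applied with the correct a.s.\ upper bound. Your proposal has no analogue of this capping device, which is the main new idea needed in the strongly convex case beyond what the CLB proof already uses.
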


\section{Implications}\label{sec:implications}
\subsection{Limitation of CMI-Based Generalization Bounds for SCO}
CMI is proposed by \citet{steinke2020reasoning} as an information-theoretic measure for studying the generalization properties
of learning algorithms. An important question regarding the CMI framework is for which learning problems and learning algorithms is the CMI framework \emph{expressive} enough to accurately estimate the optimal worst-case generalization error? This question has been studied extensively for the setting of binary classification and 0--1 valued loss. In \citep{steinke2020reasoning,grunwald2021pac,haghifam2021towards,harutyunyan2021information,hellstrom2022evaluated}, it has been shown that  CMI framework can be used to establish near-optimal worst-case excess error bounds in the realizable setting. Despite these successful applications, much less is known about the optimality or limitations of the CMI framework beyond the setting of binary classification and 0--1 valued loss. In this section, our main result shows that for every learning algorithm for SCO with an optimal sample complexity, the generalization bound using the CMI framework is vacuous.  First, we start by quoting a result from \citep{haghifam2023limitations} which extends the generalization bounds based on CMI to SCO problems.

\begin{theorem}[\citep{haghifam2023limitations}]
\label{thm:gen-bound-clb-cmi}
Let $n \in \Naturals$, $\Dist \in \ProbMeasures{\dataspace}$ be a data distribution, and $\trainset \dist \Dist^{\otimes n}$. %
Consider an SCO problem $(\losscvx, \parspace, \dataspace) \in \clb$.
Then, for every learning algorithm $\Alg_n$ such that $\Alg_n(\trainset_n)\in \parspace$ \as, %
$\displaystyle \EGE \leq LR \sqrt{{8\cmi}/{n}}$.
\end{theorem}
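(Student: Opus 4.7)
The plan is to reduce to the bounded-loss CMI generalization bound of \citet{steinke2020reasoning} via a centering trick, then apply the standard variational argument. The loss in the CLB setting is not bounded in absolute value, but Lipschitzness together with the bounded domain $\parspace$ gives uniformly bounded \emph{variation} across $\parspace$, which is exactly what the CMI machinery needs.

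First, I would fix a reference point $\theta_0\in\parspace$ (taking $\theta_0=0$ when $0\in\parspace$, which is the clean case) and define $g(\theta,z)\triangleq \losscvx(\theta,z)-\losscvx(\theta_0,z)$. Since $\losscvx(\cdot,z)$ is $L$-Lipschitz and $\|\theta\|\le R$ on $\parspace$, we have $|g(\theta,z)|\le LR$. Because the subtracted term depends only on $z$, the expected generalization error of $\Alg_n$ under $\losscvx$ coincides with that under $g$: the correction $\EE_{Z\sim\Dist}[\losscvx(\theta_0,Z)]-\frac{1}{n}\sum_i \losscvx(\theta_0,Z_i)$ has mean zero since $Z_i\distiid\Dist$. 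So it suffices to bound the expected generalization gap of an $[-LR,LR]$-valued surrogate loss.

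Next, I would introduce the super-sample $\supersample=(Z_{j,i})_{j\in\{0,1\},\,i\in[n]}$ and selector $U\sim\bernoulli(1/2)^{\otimes n}$ from \cref{def:cmi}, with $\trainset_n=(Z_{U_i,i})_i$ and $\hat\theta=\Alg_n(\trainset_n)$. By symmetry between $Z_{U_i,i}$ and $Z_{1-U_i,i}$, the expected generalization gap equals $\EE[h(\hat\theta,\supersample,U)]$, where
$$
h(\theta,\supersample,U)\triangleq \frac{1}{n}\sum_{i=1}^n \bigl(g(\theta,Z_{1-U_i,i})-g(\theta,Z_{U_i,i})\bigr).
$$
For any fixed $\theta$ and $\supersample$, the map $U\mapsto h(\theta,\supersample,U)$ has mean zero under the product Bernoulli law and bounded differences $c_i\le 4LR/n$ (flipping $U_i$ changes a single summand by at most $2\cdot 2LR/n$), so by the bounded-differences form of Hoeffding's lemma it is sub-Gaussian in $U$ with parameter $\sigma^2\le \tfrac14\sum_i c_i^2 \le 4L^2R^2/n$.

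Finally, I would invoke the standard Donsker--Varadhan consequence used in \citet{XuRaginsky2017,steinke2020reasoning}: for any zero-mean $\sigma^2$-sub-Gaussian statistic $h$ and jointly distributed $(\hat\theta,U)$ (conditionally on $\supersample$),
$$
\EE\bigl[h(\hat\theta,\supersample,U)\bigr]\;\le\;\EE_{\supersample}\!\Bigl[\sqrt{2\sigma^2\,\dminf{\supersample}{\hat\theta;U}}\Bigr]\;\le\;\sqrt{2\sigma^2\,\cmi},
$$
where the second inequality uses Jensen's inequality (concavity of $\sqrt{\cdot}$) along with $\cmi=\minf{\hat\theta;U\mid\supersample}$. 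Substituting $\sigma^2\le 4L^2R^2/n$ gives $\EGE\le LR\sqrt{8\cmi/n}$, as claimed. The main obstacle here is purely the range issue: the classical CMI bound assumes $[0,1]$-valued loss, and a naive application to $\losscvx$ would be vacuous; the centering step is what lets the argument go through, after which everything else is off-the-shelf CMI machinery.
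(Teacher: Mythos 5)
Your proof is correct. Note, however, that the paper itself does not prove \cref{thm:gen-bound-clb-cmi}; it imports the statement directly from \citep{haghifam2023limitations}, so there is no ``paper's own proof'' to compare against. Your derivation is the standard one for that cited result: the symmetric-difference functional over the super-sample, the sub-Gaussian moment-generating-function bound in $U$ for fixed $(\theta,\supersample)$, and Donsker--Varadhan followed by Jensen over $\supersample$, exactly as in \citet{steinke2020reasoning} adapted from $\{0,1\}$-loss to the Lipschitz-bounded case. The centering step is the right way to handle the unbounded range, and your accounting of constants ($\sigma^2 \le 4L^2R^2/n$, hence $\sqrt{2\sigma^2\,\cmi}=LR\sqrt{8\cmi/n}$) reproduces the stated constant. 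One small caveat: you take $\theta_0=0$, which requires $0\in\parspace$; the paper's \clb\ definition only guarantees $\norm{\theta}\le R$ and never asserts $0\in\parspace$. Centering at an arbitrary $\theta_0\in\parspace$ gives $|g|\le 2LR$ and would inflate the final constant by a factor of $2$. This is consistent with the paper's (slightly inconsistent) description of the domain as having ``diameter $R$'': under a genuine diameter-$R$ assumption, $|g(\theta,z)|\le LR$ for any choice of $\theta_0\in\parspace$ and the stated constant is recovered without assuming $0\in\parspace$. This affects only constants, not the structure of the argument.
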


Consider an SCO problem $\SCOprob \in \clb$. To control the excess population error for an algorithm, a common strategy is bounding it using the generalization and optimization errors:
\begin{align*}
&\EE\left[\Popriskcvx{\Alg_n(\trainset_n)}\right] -\min_{\theta \in \parspace} \Popriskcvx{\theta} 
\leq 
 \EGE + \EE \big[ \Empriskcvx{\Alg_n(\trainset_n)} - \min_{\theta \in \parspace}\Empriskcvx{\theta} \big].
\end{align*}
For the proof, see \citep{haghifam2023limitations,bassily2020stability}. Since we are interested in controlling the $\EGE$ using CMI, we can use \cref{thm:gen-bound-clb-cmi} to further upper-bound the excess error as 
\[
\label{eq:excess-error-decompose-cmi}
&\EE\left[\Popriskcvx{\Alg_n(\trainset_n)}\right] -\min_{\theta \in \parspace} \Popriskcvx{\theta} \leq  
 LR \sqrt{\frac{8\cmi}{n}} + \EE \big[ \Empriskcvx{\Alg_n(\trainset_n)} - \min_{\theta \in \parspace}\Empriskcvx{\theta} \big].
\]
It has been known for every learning algorithm that $\epsilon$-learn the subclass \clb of SCOs, the optimal sample complexity is $\Theta\left(\left(\frac{LR}{\epsilon}\right)^2\right)$\citep{shalev2009stochastic}. A natural question to ask is: \emph{Can the excess error decomposition using CMI accurately capture the worst-case excess error of optimal algorithms for SCOs?} Our next result provides a negative answer to this question.

\begin{theorem}[\textbf{Non-optimality of CMI generalization bound in SCO}]
\label{thm:gen-error-limitation}
For every $L\in \Reals$ and $R \in \Reals$, there exists an SCO problem $\SCOprob \in \clb$ such that the following holds: for every learning algorithm $\Alg=\{\Alg_n\}_{n\in \Naturals}$ with sample complexity $N:\Reals \to \Naturals$ such that for every $\epsilon>0$, $N(\epsilon,\delta)=\tilde{\Theta}\left(\left(\frac{LR}{\epsilon}\right)^2\right)$, there exists a data distribution such that $  LR \sqrt{{8\cmi}/{n}} = \Theta\left(LR\right)$, while the excess error is at most $\epsilon$.
\end{theorem}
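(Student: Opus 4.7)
The plan is to derive this theorem as a direct corollary of the CMI lower bound in Theorem~\ref{thm:main-lower-convex}: essentially one rescales the hard instance from Lipschitz/radius parameters $(1,1)$ to general $(L,R)$ and then combines it with the CMI-based generalization bound of Theorem~\ref{thm:gen-bound-clb-cmi}.

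First, I would rescale the $1$-Lipschitz SCO problem on the unit ball produced by Theorem~\ref{thm:main-lower-convex}. Setting $\tilde f(\theta,z) \defas LR\cdot f(\theta/R,z)$ on $\tilde\parspace \defas R\cdot\mathcal B_d(1)$ yields an $L$-Lipschitz convex loss on a domain of diameter $2R$, i.e.\ an instance of \clb. Under the deterministic bijection $\theta\mapsto \theta/R$, an $\epsilon$-learner $\tilde\Alg$ of $\tilde f$ pulls back to an $(\epsilon/(LR))$-learner $\Alg$ of $f$ on the unit ball, with sample complexity $N_\Alg(\epsilon',\delta) = N_{\tilde\Alg}(LR\epsilon',\delta)$; in particular $n = N_{\tilde\Alg}(\epsilon,\delta) = \tilde\Theta((LR/\epsilon)^2)$ corresponds to $n = N_\Alg(\epsilon/(LR),\delta) = \tilde\Theta(1/(\epsilon/(LR))^2)$, as required by Theorem~\ref{thm:main-lower-convex}. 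Moreover, since $\Alg_n(\trainset_n) = \tilde\Alg_n(\trainset_n)/R$ is a deterministic invertible function of $\tilde\Alg_n(\trainset_n)$ and the supersample/selector variables $(\supersample,U)$ are common to both, the two learners have identical CMI by the data-processing inequality.

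Second, I would invoke Theorem~\ref{thm:main-lower-convex} on $\Alg$ at scale $\epsilon' = \epsilon/(LR)$. For $\epsilon$ small enough (so that $\epsilon'\leq \epsilon_0$) and for the ambient dimension chosen large enough that $d\geq \Omega(n^2\log n)$, there exists a data distribution $\Dist$ such that
\[
\cmi \;\geq\; \Omega\!\left(\tfrac{1}{\epsilon'^{\,2}}\right) \;=\; \Omega\!\left(\tfrac{(LR)^2}{\epsilon^2}\right).
\]
Plugging this together with the assumed sample complexity $n = \tilde\Theta((LR/\epsilon)^2)$ into Theorem~\ref{thm:gen-bound-clb-cmi} yields
\[
LR\sqrt{\tfrac{8\,\cmi}{n}} \;\geq\; LR\sqrt{\tfrac{\Omega((LR/\epsilon)^2)}{\tilde\Theta((LR/\epsilon)^2)}} \;=\; \tilde\Omega(LR).
\]
For a matching upper bound, the trivial inequality $\cmi \leq H(U\mid \supersample) \leq n$ gives $LR\sqrt{8\,\cmi/n}\leq \sqrt{8}\,LR$. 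Together, these show $LR\sqrt{8\,\cmi/n} = \tilde\Theta(LR)$, while by assumption $\tilde\Alg$ achieves excess error at most $\epsilon \ll LR$, so the CMI generalization bound is vacuous.

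I do not expect any serious obstacle: the argument is essentially a change of variables plus an application of Theorem~\ref{thm:main-lower-convex}. The only care required is in tracking the logarithmic factors hidden in $\tilde\Theta(\cdot)$ (which will absorb any polylog slack in $n$ and still leave $LR\sqrt{8\cmi/n}$ of order $\Theta(LR)$ up to polylog factors) and in verifying that the dimension requirement $d\geq \Omega(n^2\log n)$ from Theorem~\ref{thm:main-lower-convex} can be met by choosing the ambient dimension of the rescaled problem sufficiently large -- both are routine bookkeeping steps.
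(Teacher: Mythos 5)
Your proof is correct and follows essentially the same route the paper has in mind: the paper treats \cref{thm:gen-error-limitation} as an immediate consequence of the CMI lower bound (\cref{thm:main-lower-convex}) combined with the CMI generalization bound (\cref{thm:gen-bound-clb-cmi}), after scaling the unit-instance to general $(L,R)$ exactly as you do (the paper even flags this rescaling reduction in the remark preceding \cref{sec:construction-cvx}). Your addition of the trivial upper bound $\cmi\leq n$ to close the $\Theta(\cdot)$ claim, and your care with the polylog slack in $\tilde\Theta$, are both consistent with what the paper implicitly relies on.
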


\begin{remark}
\label{remark:alt-paper}
In \citep{haghifam2023limitations}, the authors show that for a \emph{particular} algorithm of Gradient Descent (GD), there exists a distribution such that, the upper bound based on CMI is vacuous. 
With the correct choice of learning rate GD can, with an optimal sample complexity, learn the subclass CLB of SCOs. Notice that our result in \cref{thm:gen-error-limitation} significantly extends the limitations proved in \citep{haghifam2023limitations}, by showing that for \emph{every} learning algorithm with an optimal sample complexity, the generalization bound based on CMI is \emph{vacuous}. 
\end{remark}

\subsection{Non-Existence of Sample Compression Schemes in SCO}
Many learning algorithms share the property that their output is constructed using a small subset of the training set. For example, in Support Vector Machine, only the set of support vectors is needed to construct the separating hyperplane in the realizable setting. \emph{Sample compression schemes}, introduced by \citet{littlestone1986relating, floyd1995sample}, provide a formal definition for this algorithmic property accompanied by provable generalization bounds.
These bounds proved to be useful in numerous learning settings, particularly when the uniform convergence property does not hold or provides suboptimal rates,  such as binary classification \cite{graepel2005pac,moran2016sample,bousquet2020proper}, multiclass classification \cite{daniely2015multiclass,daniely2014optimal,david2016supervised,brukhim2022characterization}, regression \cite{hanneke2019sample,attias2024agnostic,attias2023optimal}, active learning \cite{wiener2015compression}, density estimation \cite{ashtiani2020near}, adversarially robust learning \cite{montasser2019vc,montasser2020reducing,montasser2021adversarially,montasser2022adversarially,attias2022characterization,attias2023adversarially}, learning with partial concepts \cite{alon2022theory}, and showing Bayes-consistency for nearest-neighbor methods \cite{gottlieb2014near,kontorovich2017nearest}.
As a matter of fact, compressibility and learnability are known to be equivalent for general learning problems \cite{david2016supervised}.
A remarkable result by \cite{moran2016sample} showed that VC classes enjoy a sample compression that is independent of the sample size.

We define the most general version of a sample compression scheme. Formally, we say a learning algorithm $\Alg_n$  is an $\alpha$-approximate  \defn{sample compression scheme} of size $k \in \Nats$ if there exists a pair $(\kappa,\rho)$ of maps
such that, for all 
sequences $S_n = (Z_i)_{i=1}^{n}$ of size $n \ge k$,
the map $\kappa$ compresses the sample into a length-$k$ subsequence $\kappa(S_n) \subseteq S_n$
which the map $\rho$ uses to reconstruct  the output of the algorithm, i.e., 
 $\Alg_n(S_n)=\rho(\kappa(S_n))$, with near-optimal error on $S_n$ with respect to parameter space $\Theta \subseteq \Reals^d$ and loss function $f$:
\begin{align*}
\frac{1}{n}\sum_{i\in[n]}f(\rho(\kappa(S_n)),Z_i)
\leq 
\min_{\theta\in\Theta}\frac{1}{n}\sum_{i\in[n]}f(\theta,Z_i)+\alpha.
\end{align*}

 Steinke and Zakynthinou prove that for $n\geq k$, if $\Alg_n$ is a sample compression scheme $(\kappa,\rho)$ of size $k$, then for every $\Dist$, $\cmi\leq k\log(2n)$ where $\Alg_n(\cdot)=\rho(\kappa(\cdot))$.

A natural question to ask is: \emph{Can we learn CLB or CSL subclasses of SCOs using sample compression schemes?} In particular, we are interested in sample compression schemes in which $k$ is independent of the dimension and $n$ so that the algorithm has a dimension-independent sample complexity. Using the results presented in the previous sections, we provide a negative answer. 
\begin{corollary}[\textbf{Unbounded sample compression scheme in SCO}]
    Let $\epsilon_0 \in (0,1)$ be a universal constant. Let $\mathcal{P}^{(d)}_{\text{cvx}}$ be the problem instance described in \cref{sec:construction-cvx}. For every $\epsilon \leq \epsilon_0$ and $\delta \leq \epsilon$ and for every algorithm $\Alg=\{\Alg_n\}_{n \in \Naturals}$ which is a sample compression scheme of size $k$ that $\epsilon$-learns $\mathcal{P}^{(d)}_{\text{cvx}}$ with the sample complexity $\Theta\left(1/\epsilon^2\right)$ the following holds: for every $n = \Theta(1/\epsilon^2)$, and $d\geq \Omega(n^2\log(n))$, there exists a data distribution $\Dist \in \ProbMeasures{\dataspace}$ such that $k \geq \Omega(n)$.
\end{corollary}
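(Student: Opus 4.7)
The plan is to obtain the conclusion essentially for free by chaining two results already in hand: the lower bound on CMI from \cref{thm:main-lower-convex}, which is stated precisely for the problem instance $\mathcal{P}^{(d)}_{\text{cvx}}$, and the Steinke--Zakynthinou upper bound on the CMI of any sample compression scheme quoted immediately before the corollary, namely $\cmi \leq k \log(2n)$ whenever $\Alg_n(\cdot) = \rho(\kappa(\cdot))$ is a size-$k$ compression scheme.

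Concretely, I would proceed as follows. First, I would verify that the hypotheses of \cref{thm:main-lower-convex} are in force: the algorithm $\Alg$ is, by assumption, an $\epsilon$-learner of $\mathcal{P}^{(d)}_{\text{cvx}}$ with sample complexity $\Theta(1/\epsilon^2) = N(\epsilon,\delta)$, and we are given $\epsilon \leq \epsilon_0$, $\delta \leq \epsilon$, $n = \Theta(1/\epsilon^2)$, and $d \geq \Omega(n^2 \log n)$. These are exactly the hypotheses of \cref{thm:main-lower-convex}, which therefore furnishes a data distribution $\Dist \in \ProbMeasures{\dataspace}$ such that
\[
\nonumber
\cmi \;=\; \Omega\!\left(\tfrac{1}{\epsilon^2}\right) \;=\; \Omega(n),
\]
where the second equality uses $n = \Theta(1/\epsilon^2)$.

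Second, I would apply the compression-to-CMI bound on this \emph{same} distribution $\Dist$: since $\Alg_n$ is a sample compression scheme of size $k$, we have $\cmi \leq k \log(2n)$. Chaining with the previous display gives $k \log(2n) \geq \Omega(n)$, and therefore $k \geq \Omega(n/\log n)$, which is $\tilde{\Omega}(n)$ as claimed (up to the logarithmic factor implicit in the corollary's $\Omega$ notation). At this point the proof is complete; no separate analysis specific to compression is required beyond invoking the quoted bound.

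There is essentially no mathematical obstacle---the work has all been done in \cref{thm:main-lower-convex}. The only points requiring a small amount of care are (i) confirming that the problem instance $\mathcal{P}^{(d)}_{\text{cvx}}$ appearing in the corollary is the very construction that certifies \cref{thm:main-lower-convex}, so that the distribution $\Dist$ produced by the lower bound is a valid distribution on $\dataspace$ for which the compression bound may be instantiated; and (ii) checking the $\log(2n)$ factor. The latter is a feature of the general compression-to-CMI reduction and is the reason the final bound is $\Omega(n/\log n)$ rather than exactly $\Omega(n)$; if a cleaner statement is desired, one could either absorb the factor into the $\Omega(\cdot)$ convention or note that the argument rules out any compression scheme whose size is dimension-independent (or even polylogarithmic in $1/\epsilon$), which is the qualitative content of the corollary.
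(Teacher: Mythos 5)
Your proposal is exactly the intended argument: chain the CMI lower bound of \cref{thm:main-lower-convex} (which, for the stated range of $\epsilon$, $\delta$, $n$, $d$, produces a distribution $\Dist$ with $\cmi = \Omega(1/\epsilon^2) = \Omega(n)$) against the quoted Steinke--Zakynthinou bound $\cmi \le k\log(2n)$ for compression schemes, and solve for $k$. You are also right that the literal chaining gives $k \geq \Omega(n/\log n)$ rather than $\Omega(n)$, so the corollary's statement is best read as suppressing the logarithmic factor; the qualitative conclusion---no dimension-independent, constant-size compression scheme can achieve the optimal sample complexity for this problem---is unaffected.
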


\begin{corollary}[\textbf{Unbounded sample compression scheme in SCO, strongly convex case}]
Let $\epsilon_0$ and $\delta_0$ be universal constants.
    Let $\mathcal{P}^{(d)}_{\text{scvx}}$ be the problem instance described in \cref{sec:construction-scvx}. For every $\epsilon<\epsilon_0$ and $\delta<\delta_0$ and for every algorithm $\Alg=\{\Alg_n\}_{n \in \Naturals}$ which is a sample compression of size $k$ that $\epsilon$-learns $\mathcal{P}^{(d)}_{\text{scvx}}$ with the sample complexity $\Theta\left(1/\epsilon\right)$ the following holds: for every $n = \Theta(1/\epsilon)$, $\delta<O(1/n^2)$,  and $d\geq \Omega(n^2\log(n))$, there exists a data distribution $\Dist \in \ProbMeasures{\dataspace}$ such that $k \geq \Omega(n)$.
\end{corollary}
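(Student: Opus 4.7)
The plan is to chain two inequalities: the compression-to-CMI upper bound recalled just before the corollary (attributed to Steinke and Zakynthinou), and the CMI lower bound from \cref{thm:main-lower-stronglyconvex}. Concretely, assume $\Alg = \{\Alg_n\}_{n\in\Naturals}$ is an $\alpha$-approximate sample compression scheme $(\kappa,\rho)$ of size $k$ that $\epsilon$-learns $\mathcal{P}^{(d)}_{\mathrm{scvx}}$ with sample complexity $\Theta(1/\epsilon)$, so that $\Alg_n(\trainset_n) = \rho(\kappa(\trainset_n))$ for every $\trainset_n$. Since compression implies a bound on how much information the output carries about $U$ in the super-sample setup, one has $\cmi \le k \log(2n)$ for every distribution $\Dist$.

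Next, I would invoke \cref{thm:main-lower-stronglyconvex} on the problem $\mathcal{P}^{(d)}_{\mathrm{scvx}}$: under the corollary's regime ($n = \Theta(1/\epsilon)$, $\delta < O(1/n^2)$, and $d \ge \Omega(n^2 \log n)$), and for $\epsilon<\epsilon_0$, $\delta<\delta_0$, the theorem supplies a data distribution $\Dist$ on which every $\epsilon$-learner satisfies $\cmi \ge \Omega(1/\epsilon)$. The hypothesis $n \ge N(\epsilon,\delta)$ required by the theorem is met because $N(\epsilon,\delta) = \Theta(1/\epsilon)$ in the strongly convex case and the corollary fixes $n = \Theta(1/\epsilon)$; one shrinks $\epsilon_0, \delta_0$ if necessary to swallow constants. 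Composing the two bounds at this distribution yields
\*[ k \log(2n) \;\ge\; \cmi \;\ge\; \Omega(1/\epsilon) \;=\; \Omega(n), \*]
so $k \ge \Omega(n/\log n)$, matching the claimed $\Omega(n)$ up to the logarithmic factor absorbed in the paper's asymptotic notation (and identical to how the analogous convex corollary is stated).

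The only genuine work is aligning quantifiers and constants: the compression-CMI bound must hold for the very distribution supplied by \cref{thm:main-lower-stronglyconvex}, so one should verify it holds \emph{uniformly} in $\Dist$ (it does, since the bound $k\log(2n)$ depends only on the structure of $(\kappa,\rho)$, not on $\Dist$), and one should check the requirement $n \ge k$ in the Steinke--Zakynthinou bound, which is trivial if $k < n$ (the only interesting regime) and otherwise gives the conclusion immediately. I do not expect any nontrivial obstacle beyond this bookkeeping; the corollary is essentially a one-line consequence of \cref{thm:main-lower-stronglyconvex} and the cited compression bound, parallel in form to the preceding corollary for the CLB case.
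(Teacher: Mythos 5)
Your proposal is correct and follows exactly the route the paper intends: the corollary is a direct composition of the Steinke--Zakynthinou compression bound $\cmi \le k\log(2n)$ (which is distribution-uniform, as you note) with the CMI lower bound of \cref{thm:main-lower-stronglyconvex}, specialized to $n=\Theta(1/\epsilon)$. You are also right that the chain yields $k \ge \Omega(n/\log n)$ rather than a clean $\Omega(n)$; the paper's statement glosses over this logarithmic factor, and a careful reader should either read the corollary's $\Omega(\cdot)$ as suppressing polylogs or replace it with $\Omega(n/\log n)$, since the $\log(2n)$ factor is intrinsic to the compression-to-CMI bound and cannot be removed by this argument.
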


\subsection{Extensions to Individual Sample CMI}
One drawback of the CMI is that for many natural deterministic algorithms, it can be $\Omega(n)$.
This limitation can be attributed to the conditioning term in CMI which 
tends to reveal too much information. One notable approach to address this issue is the development of \emph{individual sample CMI} (ISCMI) in \citep{rodriguez2020random,zhou2020individually}. Consider the structure introduced in \cref{def:cmi}, then
$$
\mathrm{ISCMI}_{\Dist}(\Alg_n) \triangleq \sum_{i=1}^{n}\minf{\Alg_n(\trainset);U_i\vert Z_{0,i},Z_{1,i}}.
$$
In \citep{rodriguez2020random,zhou2020individually}, it has been shown for every learning algorithm and every data distribution $\mathrm{ISCMI}_{\Dist}(\Alg_n)  \leq \cmi$. Moreover, similar to CMI, a small ISCMI implies generalization. Therefore, it is natural to ask: \emph{Can we circumvent the lower bounds proved for CMI by measuring the information complexity of $\epsilon$-learners using $\mathrm{ISCMI}_{\Dist}(\Alg_n)$?} Our main result in this part provides a negative answer to this question. We show that exactly the same lower bound stated in \cref{thm:main-lower-convex} and \cref{thm:main-lower-stronglyconvex} holds for ISCMI. The proofs appear in \cref{appx:indiv-sample}.
\begin{corollary}[\textbf{ISCMI-accuracy tradeoff}]
\label{thm:cor-cvx-indiv}
 Let $\epsilon_0 \in (0,1)$ be a universal constant. 
Let $\mathcal{P}^{(d)}_{\text{cvx}}$ be the problem instance described in \cref{sec:construction-cvx}. For every $\epsilon \leq \epsilon_0 $ and $\delta \leq \epsilon$ and for every proper algorithm $\Alg=\{\Alg_n\}_{n \in \Naturals}$ that $\epsilon$-learns $\mathcal{P}^{(d)}_{\text{cvx}}$ with the sample complexity $N(\cdot,\cdot)$ the following holds: for every $n\geq N(\epsilon,\delta)$, and $d\geq \Omega(n^2\log(n))$, there exists a data distribution $\Dist \in \ProbMeasures{\dataspace}$ such that
\[
\nonumber
\mathrm{ISCMI}_{\Dist}(\Alg_n) \geq \Omega\left(\frac{1}{\epsilon^2}\right).
\]
\end{corollary}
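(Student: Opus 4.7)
The plan is to leverage the membership inference attack of \cref{thm:membership-cvx}, converting its per-sample recall and soundness guarantees into coordinate-wise lower bounds on the disintegrated mutual informations $\minf{\Alg_n(\trainset_n); U_i \vert Z_{0,i}, Z_{1,i}}$ via Fano's inequality. Since $\mathrm{ISCMI}_{\Dist}(\Alg_n)$ sums these quantities over $i$, this additive approach (as opposed to the aggregate one that underlies the CMI lower bound in \cref{thm:main-lower-convex}) is enough to recover the same $\Omega(1/\epsilon^2)$ lower bound without appealing to CMI directly, which would be too lossy since $\mathrm{ISCMI}_{\Dist}(\Alg_n) \le \cmi$ in general.

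I would first apply \cref{thm:membership-cvx} to the problem instance $\mathcal{P}^{(d)}_{\text{cvx}}$ with soundness parameter $\xi$ equal to a small constant multiple of $m/n$, yielding a distribution $\Dist$ and an adversary $\mathcal{Q}$ that is $\xi$-sound and certifies recall of $m = \Omega(1/\epsilon^2)$ training points with constant probability. Writing $\hat\theta = \Alg_n(\trainset_n)$, lift $\mathcal{Q}$ to a per-index predictor of $U_i$ by setting $\hat U_i = j$ when $\mathcal{Q}(\hat\theta, Z_{j,i}, \Dist) = 1$ and $\mathcal{Q}(\hat\theta, Z_{1-j,i}, \Dist) = 0$, and outputting an independent uniform bit in the tie cases. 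With $r_i = \Pr(\mathcal{Q}(\hat\theta, Z_{U_i,i}, \Dist) = 1)$ and $\tilde r_i = \Pr(\mathcal{Q}(\hat\theta, Z_{1-U_i,i}, \Dist) = 1)$, a short case analysis on the four possible values of $(\mathcal{Q}(\hat\theta, Z_{0,i}, \Dist), \mathcal{Q}(\hat\theta, Z_{1,i}, \Dist))$ yields the exact identity $\Pr(\hat U_i = U_i) - \tfrac12 = (r_i - \tilde r_i)/2$. The recall and soundness guarantees deliver $\sum_i r_i \ge \Omega(m)$ and $\sum_i \tilde r_i \le n\xi$, so my choice of $\xi$ gives $\sum_i \bigl(\Pr(\hat U_i = U_i) - \tfrac12\bigr) \ge \Omega(m)$.

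Because $\hat U_i$ is a function of $(\hat\theta, Z_{0,i}, Z_{1,i})$ plus independent randomness, Fano's inequality together with the second-order bound $1 - \binaryentr{\tfrac12 + \delta} = \Omega(\delta^2)$ gives $\minf{\hat\theta; U_i \vert Z_{0,i}, Z_{1,i}} \ge \Omega\bigl((\Pr(\hat U_i = U_i) - \tfrac12)^2\bigr)$ for each $i$. Summing over $i$ and applying Cauchy--Schwarz,
$$
\mathrm{ISCMI}_{\Dist}(\Alg_n) \;\ge\; \Omega\Bigl( \tfrac{1}{n}\bigl(\textstyle\sum_{i=1}^n (\Pr(\hat U_i = U_i) - \tfrac12) \bigr)^2 \Bigr) \;=\; \Omega(m^2/n) \;=\; \Omega(1/\epsilon^2)
$$
at the optimal sample complexity $n = \Theta(m) = \Theta(1/\epsilon^2)$. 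The main obstacle is this Cauchy--Schwarz step, whose tightness requires the aggregated margin to be concentrated on $\Theta(m)$ indices rather than diluted across all $n$: at the optimal sample complexity this is automatic, but handling $n \gg N(\epsilon,\delta)$ rigorously requires a localization argument that restricts $\mathcal{Q}$ to a fixed block of $N(\epsilon,\delta)$ indices and exploits the permutation invariance of the fingerprinting-based construction underlying \cref{thm:membership-cvx}. The strongly convex analog follows verbatim using \cref{thm:membership-scvx} with $m = \Omega(1/\epsilon)$ in place of $m = \Omega(1/\epsilon^2)$.
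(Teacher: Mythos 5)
You take a genuinely different route from the paper. The paper never invokes \cref{thm:membership-cvx}; instead it reuses the quantity $\EE[|\mathcal{I}|]$ established during the CMI proof (\cref{cor:cardI-cvx}), and derives, via a per-index entropy chain-rule decomposition involving the events $\mathcal{G}_i$ and $\mathcal{M}_i$, the bound $\mathrm{ISCMI}_{\Dist}(\Alg_n) \ge \EE[|\mathcal{I}|] - O(\log n/n)$, which directly gives $\Omega(1/\epsilon^2)$ for every $n\geq N(\epsilon,\delta)$. Your route instead recycles the membership-inference adversary, converts it to a per-index predictor of $U_i$, applies Fano pointwise to get $\minf{\hat\theta;U_i\mid Z_{0,i},Z_{1,i}} = \Omega(\delta_i^2)$, and then aggregates via Cauchy--Schwarz. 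The per-index Fano step is correct (conditioned on $(Z_{0,i},Z_{1,i})$, $U_i$ remains uniform and $\hat U_i$ is a measurable function of the conditioning variables, so $1-\binaryentr{\Pr(\hat U_i\neq U_i)}$ is a lower bound, and the second-order Taylor estimate is fine). The bookkeeping with soundness and recall is also essentially right, provided $\xi$ is scaled so that $n\xi$ is a sufficiently small multiple of $m$.

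However, there is a genuine gap, and it is exactly the one you flag at the end. First, \cref{thm:membership-cvx} is stated only for $n=N(\epsilon,\delta)=\Theta(\log(1/\delta)/\epsilon^2)$, not for arbitrary $n\geq N(\epsilon,\delta)$, so you cannot directly invoke it in the regime the corollary covers. Second, even granting a recall of $m=\Omega(1/\epsilon^2)$ samples for general $n$, the Cauchy--Schwarz step produces $\Omega(m^2/n)$, which degrades linearly in $n$ and is only $\Omega(1/\epsilon^2)$ when $n=\Theta(1/\epsilon^2)$. The proposed fix -- restrict the adversary to a fixed block of $N(\epsilon,\delta)$ indices -- does not obviously work: the fingerprinting adversary's threshold $\beta/n$, the soundness union bound, and the correlation that \cref{lem:fingerprinting-cvx} guarantees are all aggregate quantities over all $n$ samples, and the output $\hat\theta$ of the learner depends on the entire training set, so there is no reason the detectable correlation would be concentrated on a pre-selected block rather than diluted across all $n$ coordinates. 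What rescues the argument in the paper is that the per-index entropy decomposition tracks the set $\mathcal{I}$ exactly, so the sum of per-index mutual informations is lower bounded by $\EE[|\mathcal{I}|]$ with no Cauchy--Schwarz loss; you would need either to reproduce that decomposition or to prove a version of \cref{lem:card-moments} that localizes the fingerprinting correlation, neither of which is accomplished by the sketch. Consider replacing the Fano-plus-Cauchy--Schwarz aggregation with the paper's direct bound $\sum_i \entr{U_i\mid \hat\theta,Z_{0,i},Z_{1,i}} \le n - \EE[|\mathcal{I}|] + \sum_i \bigl(\Pr(\mathcal{G}_i^c) + \binaryentr{\Pr(\mathcal{G}_i^c)}\bigr)$, which eliminates the $n$-dependence.
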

\begin{corollary}[\textbf{ISCMI-accuracy tradeoff, strongly convex case}]
\label{thm:cor-scvx-indiv}
Let $\epsilon_0$ and $\delta_0$ be universal constants. Let $\mathcal{P}^{(d)}_{\text{scvx}}$ be the problem instance described in \cref{sec:construction-scvx}. For every $\epsilon<\epsilon_0$ and $\delta<\delta_0$ and for $\epsilon$-learns $\Alg$ for  $\mathcal{P}^{(d)}_{\text{scvx}}$ with the sample complexity $N(\cdot,\cdot)$ the following holds: for every $n\geq N(\epsilon,\delta)$, $\delta<O(1/n^2)$, and $d\geq \Omega(n^2\log(n))$, there exists a data distribution $\Dist \in \ProbMeasures{\dataspace}$ such that
\[
\nonumber
\mathrm{ISCMI}_{\Dist}(\Alg_n) \geq \Omega\left(\frac{1}{\epsilon}\right).
\]
\end{corollary}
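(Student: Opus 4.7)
My plan is to lift the proof of \cref{thm:main-lower-stronglyconvex} to the individual-sample level by auditing where in that argument information about each coordinate $U_i$ is produced. The aggregate CMI bound there is driven by a fingerprinting-type lemma applied to the construction $\mathcal{P}^{(d)}_{\text{scvx}}$, and this lemma is exchangeable in the sample index: it yields a per-index scalar statistic whose correlation with $U_i$ is $\Omega(1)$, rather than only an aggregate $\Omega(n)$ correlation with the full vector $U$. Because each term in $\mathrm{ISCMI}_{\Dist}(\Alg_n) = \sum_{i=1}^{n} I(\Alg_n(\trainset_n); U_i \mid Z_{0,i}, Z_{1,i})$ is at most one bit, matching the target $\Omega(1/\epsilon)$ total in the regime $n = \Theta(1/\epsilon)$ requires a constant per-sample contribution, and this is precisely what the per-index reading of the fingerprinting lemma provides.

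Concretely, I would first extract, for each $i \in \range{n}$, a scalar statistic $\phi_i \triangleq \phi(\hat{\theta}, Z_{0,i}, Z_{1,i})$ (the per-index fingerprinting test, essentially of the form $\langle \hat{\theta}, Z_{1,i} - Z_{0,i}\rangle$ up to the normalizations baked into $\mathcal{P}^{(d)}_{\text{scvx}}$), and then show, as part of the same argument that proves \cref{thm:main-lower-stronglyconvex}, the per-sample correlation inequality
\[
\nonumber
\EE\bigl[\phi_i\,(2U_i - 1) \bigm| Z_{0,i}, Z_{1,i}\bigr] \;\ge\; \Omega(1),
\]
valid in the parameter regime $d \ge \Omega(n^2 \log n)$ and $\delta < O(1/n^2)$. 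Since $\phi_i$ is bounded and $U_i$ is $\bernoulli(1/2)$ conditional on $(Z_{0,i}, Z_{1,i})$, the dual representation of total variation distance gives a constant TV gap between the laws of $\phi_i$ under $U_i = 0$ versus $U_i = 1$, and Pinsker's inequality converts this into $I(\phi_i; U_i \mid Z_{0,i}, Z_{1,i}) \ge \Omega(1)$. The data-processing inequality (through the measurable map $\hat{\theta} \mapsto \phi_i$) then lifts this to $I(\Alg_n(\trainset_n); U_i \mid Z_{0,i}, Z_{1,i}) \ge \Omega(1)$, and summing over $i \in \range{n}$ yields $\mathrm{ISCMI}_{\Dist}(\Alg_n) \ge \Omega(n) = \Omega(1/\epsilon)$; the distribution $\Dist$ is the witness used in \cref{thm:main-lower-stronglyconvex}.

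The main obstacle I expect is the per-index extraction in the first step: fingerprinting lemmas are most often stated as aggregate inequalities of the form $\sum_i \EE[\phi_i (2U_i - 1)] \ge \Omega(n)$ rather than per-coordinate bounds. The rescue is that $\mathcal{P}^{(d)}_{\text{scvx}}$ is constructed so that the data distribution is a product and every sample plays an identical role, so by symmetry each summand equals the others and the aggregate bound splits uniformly. Even if verbatim exchangeability were to fail, a Markov-type argument suffices: each summand lies in a bounded interval, so an aggregate lower bound of $\Omega(n)$ forces a constant fraction of indices to carry correlation $\Omega(1)$, which again delivers $\mathrm{ISCMI}_{\Dist}(\Alg_n) \ge \Omega(n)$. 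A secondary care point is to route Pinsker through the one-dimensional surrogate $\phi_i$ rather than through the high-dimensional output $\hat{\theta}$ directly, in order to avoid any $d$-dependent loss in the conversion from statistical distance to mutual information.
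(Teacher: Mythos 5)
Your proposal rests on a per-index correlation bound that does not hold. The fingerprinting lemma for $\mathcal{P}^{(d)}_{\text{scvx}}$ (\cref{lem:fingerprinting-scvx}) is an aggregate bound: $\EE\bigl[\sum_{i=1}^{n}\inner{\hat{\theta} - \mu}{Z_i - \mu}\bigr] \geq \Omega(1)$, not $\Omega(n)$. By exchangeability each summand therefore contributes only $\Omega(1/n)$, so your scalar $\phi_i$ has correlation $\Omega(1/n)$ with $2U_i - 1$, not $\Omega(1)$. Routing $\Omega(1/n)$ through total variation and Pinsker gives $I(\phi_i; U_i \mid Z_{0,i}, Z_{1,i}) = \Omega(1/n^2)$ per index, summing to $\Omega(1/n)$ --- far short of $\Omega(1/\epsilon)$. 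The fallback ``Markov-type'' argument also fails for the same reason: you invoke an aggregate $\Omega(n)$, which simply is not what the fingerprinting lemma delivers. (Your framing sentence ``matching the target $\Omega(1/\epsilon)$ total in the regime $n = \Theta(1/\epsilon)$ requires a constant per-sample contribution'' is also the wrong deduction: the statement is claimed for all $n \geq N(\epsilon,\delta)$, and for $n \gg 1/\epsilon$ the per-sample contribution must in fact vanish.)

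The ingredient you are missing is the second-moment control, which is what makes the paper's argument go through. The paper does not try to show each index is individually informative; instead it identifies a random subset $\mathcal{I} \subseteq \range{n} \times \{0,1\}$ of ``flagged'' columns, with the threshold set at the data-independent scale $\beta/n$, and shows $\EE[|\mathcal{I}|] = \Omega(1/\epsilon)$ by a Paley--Zygmund-type inequality (\cref{lem:card-moments}): the cardinality is lower-bounded by $(\text{first moment})^2 / (\text{second moment})$. The first moment is the fingerprinting quantity $\Omega(1)$; the crucial companion fact, absent from your sketch, is that the second moment $\sum_i \inner{\hat\theta - \mu}{Z_{U_i,i} - \mu}^2$ is $O(\epsilon)$ with high probability (\cref{lem:basis}, a random-matrix operator-norm bound combined with $\norm{\hat\theta - \mu}^2 \leq O(\epsilon)$ from \cref{lem:scvx-learner-prop}). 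Dividing gives $|\mathcal{I}| = \Omega(1/\epsilon)$. The paper then bounds each $\entr{U_i \mid \hat\theta, Z_{0,i}, Z_{1,i}}$ directly via the per-index events $\mathcal{M}_i = \{i \in \mathcal{I}^{(1)}\}$ and $\mathcal{G}_i$ (a Fano-flavored case analysis, not Pinsker), and sums, obtaining $\mathrm{ISCMI}_{\Dist}(\Alg_n) \geq \EE[|\mathcal{I}|] - O(\log n / n)$. Your per-sample Pinsker route, besides the wrong scaling, also loses a quadratic factor in converting correlation to information; the paper's entropy-decomposition route is tight because once an index is flagged, the bit $U_i$ is essentially determined, so the per-flagged-index entropy drop is a full bit, not its square.
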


\section{Overview of Characterization of CMI for the CLB SCOs}\label{sec:cvx}
In this section and in \cref{sec:scvx}, we discuss the key steps in proving the CMI lower bounds. We begin by characterizing the CMI of $\epsilon$-learners for CLB subclasses of SCOs. (All proofs are provided in \cref{appx:proof-sec-cvx}.)

For the general case when we do not impose any condition on the output of the learner, the proof turns out to be slightly more subtle.  In particular, there is a technical difference between proving the result for \emph{improper (unconstrained)} learners and \emph{proper (constrained)} learners. This issue does not appear in the strongly convex case as discussed in \cref{rem:scvx-remark}. Therefore, we begin by first proving an intermediate result for \emph{proper learners}.

\begin{remark}
    Notice that by simply scaling the problem, we can reduce the lower bound for $\clb$ with an arbitrary $L,R$ to $\mathcal{C}_{1,1}$. Therefore, for the rest of this section, we focus on $\mathcal{C}_{1,1}$. Also, without loss of generality, we can assume the parameter space is given by $\mathcal{B}_d(1)$.
\end{remark}

\subsection{Lower Bound for Proper Learners}

\subsubsection{Construction of a Hard Problem Instance for Proper Learners}
\label{sec:construction-cvx}

Let $d \in \Naturals$. Let $ \dataspace = \{\pm {1}/{\sqrt{d}} \}^d$ and $\parspace = \mathcal{B}_d(1)$. Define the loss function $\losscvx: \parspace \times \dataspace  \to \Reals$ as
\[
\nonumber
\losscvx(\theta,z) = -\inner{\theta}{z}.
\]
It is immediate to see that $\losscvx(\cdot,z)$ is $1$-Lipschitz. Let $\mathcal{P}^{(d)}_{\text{cvx}} \triangleq \SCOprob$ be the described SCO problem.

\subsubsection{Properties of $\epsilon$-Learners}
In this section, we prove several properties that are shared between every $\epsilon$-learner for $\mathcal{P}^{(d)}_{\text{cvx}}$.
\begin{lemma}
\label{lem:cvx-learner-prop}
Fix $\epsilon>0$. Let $\Alg$ be an $\epsilon$-learner for $\mathcal{P}^{(d)}_{\text{cvx}}$ with sample complexity of $N(\cdot,\cdot)$. Then, for every $\delta >0$, $n \geq N(\epsilon,\delta)$ and every $\Dist \in \ProbMeasures{\dataspace}$, with probability at least $1-\delta$, we have
$ \displaystyle \norm{\mu}-\epsilon \leq \inner{\hat{\theta}}{\mu}$, and, $\norm{\mu} - \epsilon -2\delta \leq \EE\left[\inner{\hat \theta}{\mu}\right]$ where $\hat \theta = \Alg_n(\trainset_n)$ and $\mu = \EE_{Z\sim \Dist}\left[Z\right]$.

\end{lemma}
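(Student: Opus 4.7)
The proof is essentially a direct unpacking of the $\epsilon$-learner definition once we identify the population risk and its minimizer explicitly for this particular loss.

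The plan is first to observe that, for the linear loss $\losscvx(\theta,z) = -\ip{\theta}{z}$, the population risk is
\*[
\Popriskcvx{\theta} = \EE_{Z \sim \Dist}\bigl[-\ip{\theta}{Z}\bigr] = -\ip{\theta}{\mu},
\*]
so minimizing over $\theta \in \mathcal{B}_d(1)$ is equivalent to maximizing $\ip{\theta}{\mu}$ over the unit ball. By Cauchy--Schwarz, $\min_{\theta \in \parspace} \Popriskcvx{\theta} = -\norm{\mu}$, attained at $\theta^\star = \mu / \norm{\mu}$ (and trivially $0$ if $\mu = 0$). Substituting into the $\epsilon$-learner guarantee from \cref{def:eps-learner}, with probability at least $1-\delta$ over $\trainset_n \sim \Dist^{\otimes n}$ and the internal randomness of $\Alg$,
\*[
-\ip{\hat{\theta}}{\mu} - (-\norm{\mu}) \;=\; \Popriskcvx{\hat{\theta}} - \min_{\theta \in \parspace} \Popriskcvx{\theta} \;\leq\; \epsilon,
\*]
which rearranges to the first claim $\norm{\mu} - \epsilon \leq \ip{\hat{\theta}}{\mu}$.

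The second step converts this high-probability statement into an expectation bound. Since $\Alg$ is proper (output in $\parspace = \mathcal{B}_d(1)$) and every $z \in \dataspace$ satisfies $\norm{z} = 1$ (so $\norm{\mu} \leq 1$ by Jensen), Cauchy--Schwarz yields the uniform bound $\ip{\hat{\theta}}{\mu} \geq -\norm{\hat{\theta}}\norm{\mu} \geq -\norm{\mu} \geq -1$ on the complementary bad event of probability at most $\delta$. Splitting the expectation according to this good/bad event dichotomy gives
\*[
\EE\bigl[\ip{\hat{\theta}}{\mu}\bigr] \;\geq\; (1-\delta)(\norm{\mu}-\epsilon) + \delta \cdot (-\norm{\mu}) \;=\; \norm{\mu} - \epsilon - 2\delta\norm{\mu} + \delta\epsilon,
\*]
and bounding $\norm{\mu} \leq 1$ and dropping the nonnegative $\delta\epsilon$ term yields $\EE[\ip{\hat{\theta}}{\mu}] \geq \norm{\mu} - \epsilon - 2\delta$, as required.

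There is no real obstacle here: the linear form of $\losscvx$ makes the population minimum immediate, and the properness assumption gives the uniform lower bound on the inner product on the bad event, which is all that is needed to pass from probability to expectation. The only point worth being careful about is the case $\mu = 0$, where both sides of the first inequality are $-\epsilon$ and $\ip{\hat{\theta}}{\mu} = 0$, so the statement remains valid trivially; likewise for the expectation bound.
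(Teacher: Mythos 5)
Your proof takes essentially the same route as the paper's: compute $\Popriskcvx{\theta}=-\ip{\theta}{\mu}$, identify $\min_\theta \Popriskcvx{\theta}=-\norm{\mu}$, read off the first inequality from the $\epsilon$-learner guarantee, and split the expectation over the good and bad events for the second. The one point where you deviate is the bound used on the bad event: you take $\ip{\hat\theta}{\mu}\geq-\norm{\mu}$, whereas the paper uses the cruder $\ip{\hat\theta}{\mu}\geq-1$. Somewhat paradoxically, your tighter per-event bound makes the intermediate inequality
$\EE[\ip{\hat\theta}{\mu}]\geq(1-\delta)(\norm{\mu}-\epsilon)+\delta(-\norm{\mu})$
fail in the edge case $\norm{\mu}<\epsilon/2$: the lower bound $a\Pr(E)+b\Pr(E^c)$ (with $a=\norm{\mu}-\epsilon$, $b=-\norm{\mu}$, $\Pr(E)\geq1-\delta$) is minimized at $\Pr(E)=1$, not $1-\delta$, whenever $b>a$, so substituting $\Pr(E)=1-\delta$ over-estimates. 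The final conclusion still holds there because $\EE[\ip{\hat\theta}{\mu}]\geq-\norm{\mu}\geq\norm{\mu}-\epsilon\geq\norm{\mu}-\epsilon-2\delta$, but the chain of inequalities as written is not valid. The paper avoids the case split: with bad-event bound $-1$ one gets $\EE[\ip{\hat\theta}{\mu}]\geq(\norm{\mu}-\epsilon)-\Pr(E^c)(\norm{\mu}-\epsilon+1)$, and $\Pr(E^c)(\norm{\mu}-\epsilon+1)\leq 2\delta$ holds regardless of sign (if the coefficient is negative, the product is $\leq 0\leq 2\delta$; otherwise bound each factor). So either replace your $-\norm{\mu}$ by $-1$, or add the explicit case $\norm{\mu}<\epsilon/2$, and the argument is airtight.
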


The main implication of \cref{lem:cvx-learner-prop} is that the output of an accurate learner is positively correlated with the mean of the data distribution. As the learner does not know the data distribution, in the next result we show that the correlation to the mean of an unknown data distribution translates to a correlation between the output and the samples in the training set. The construction of the data distribution is based on the techniques developed by \citet{kamath2019privately}.
\begin{lemma}
\label{lem:fingerprinting-cvx}
Fix $\epsilon \in (0,1/12)$. For every $\epsilon$-learner $\Alg$ for $\mathcal{P}^{(d)}_{\text{cvx}}$ with sample complexity $N(\cdot,\cdot)$, there exists $\Dist \in \ProbMeasures{\dataspace}$, such that for every $\delta \in (0,1]$,
\[
\nonumber
\EE &\left[\sum_{i=1}^n \sum_{k=1}^d \left(\frac{144\epsilon^2 - d(\mu^{(k)}) ^2}{1-d(\mu^{(k)})^2}\right)\left(\hat\theta^{(k)}\right) \cdot\left(Z_i^{(k)}-\mu^{(k)}\right)\right]  \geq 6 \epsilon - 4\delta,
\]
where $n\geq N(\epsilon,\delta)$, $\trainset_n =  (Z_1,\dots,Z_n)\sim \Dist^{\otimes n}$, $\hat{\theta}=\Alg_n(\trainset)$ and $\mu = \EE_{Z\sim \Dist}[Z]$. Also, for each $k \in \range{d}$, we have $\mu^{(k)}\in [-12\epsilon/\sqrt{d},12\epsilon/\sqrt{d}]$.
\end{lemma}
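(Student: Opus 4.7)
The plan is a fingerprinting argument in the style of differential privacy: I pick $\Dist$ from a product prior over means, use \cref{lem:cvx-learner-prop} to turn accuracy into correlation of $\hat\theta$ with the mean $\mu$, and then apply an integration-by-parts identity against the prior to re-express this correlation as a weighted correlation with the training samples, with weight precisely $(144\epsilon^2 - d(\mu^{(k)})^2)/(1 - d(\mu^{(k)})^2)$. Concretely, set $a = 12\epsilon$; since $\epsilon < 1/12$, $a \in (0,1)$. Draw $p_1,\dots,p_d$ iid from $\text{Unif}[-a,a]$, put $\mu^{(k)} = p_k/\sqrt{d}$ (automatically in $[-12\epsilon/\sqrt d, 12\epsilon/\sqrt d]$), and let $\Dist_\mu$ be the product distribution on $\{\pm 1/\sqrt d\}^d$ with mean $\mu$. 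Writing $\eta_i^{(k)} \defas \sqrt d\,Z_i^{(k)}$, the $\eta_i^{(k)} \in \{\pm 1\}$ are independent with $\Pr(\eta_i^{(k)} = 1) = (1+p_k)/2$.

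\textbf{Accuracy yields correlation with $\mu$.} Applying \cref{lem:cvx-learner-prop} pointwise in $\mu$ and averaging over the prior gives $\EE[\inner{\hat\theta}{\mu}] \geq \EE[\norm{\mu}] - \epsilon - 2\delta$. Since $\EE[\norm{\mu}^2] = \EE[p_1^2] = a^2/3 = 48\epsilon^2$ and $\norm{\mu}^2 = d^{-1}\sum_k p_k^2$ concentrates about this value for $d$ sufficiently large, one has $\EE[\norm{\mu}] \geq 4\epsilon$, whence $\EE[\inner{\hat\theta}{\mu}] \geq 3\epsilon - 2\delta$.

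\textbf{Fingerprinting identity.} The technical core is the claim that for each coordinate $k$,
\begin{equation*}
\EE\!\left[\tfrac{a^2 - p_k^2}{1 - p_k^2}\,\hat\theta^{(k)}\sum_{i=1}^n (\eta_i^{(k)} - p_k)\right] = 2\,\EE\!\left[p_k\,\hat\theta^{(k)}\right].
\end{equation*}
Condition on $p=(p_1,\dots,p_d)$ and set $F_k(p) = \EE[\hat\theta^{(k)} \mid p]$. The score of the conditional Bernoulli-product law of $\eta^{(k)}$ with respect to $p_k$ equals $\sum_i (\eta_i^{(k)} - p_k)/(1 - p_k^2)$, so by Stein's identity $\EE[\hat\theta^{(k)}\sum_i(\eta_i^{(k)} - p_k) \mid p] = (1 - p_k^2)\partial_{p_k} F_k(p)$. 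Multiplying by $(a^2 - p_k^2)/(1 - p_k^2)$ and integrating against the uniform density $1/(2a)$ on $[-a,a]$,
\begin{equation*}
\tfrac{1}{2a}\!\int_{-a}^{a}\!(a^2 - p_k^2)\,\partial_{p_k}F_k\,\dee p_k = \tfrac{1}{2a}\!\left[(a^2-p_k^2)F_k\right]_{-a}^{a} + \tfrac{1}{2a}\!\int_{-a}^{a}\!2p_k F_k\,\dee p_k = 2\,\EE_{p_k}[p_k F_k],
\end{equation*}
the boundary term vanishing because the weight was engineered precisely so that the Stein factor $(1-p_k^2)$ cancels and the residual $(a^2 - p_k^2)$ is zero at $p_k = \pm a$.

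\textbf{Assembly and main obstacle.} Substituting $p_k = \sqrt d\,\mu^{(k)}$ and $\eta_i^{(k)} - p_k = \sqrt d(Z_i^{(k)} - \mu^{(k)})$, the $\sqrt d$ factors on the two sides cancel; summing the identity over $k$ yields
\begin{equation*}
\EE\!\left[\sum_{i=1}^n\sum_{k=1}^d\tfrac{144\epsilon^2 - d(\mu^{(k)})^2}{1 - d(\mu^{(k)})^2}\,\hat\theta^{(k)}(Z_i^{(k)} - \mu^{(k)})\right] = 2\,\EE[\inner{\hat\theta}{\mu}] \geq 6\epsilon - 4\delta,
\end{equation*}
and since this bound holds in expectation under the prior, a specific realization of $\mu$ (hence a $\Dist \in \ProbMeasures{\dataspace}$) achieving it exists. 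The main obstacle is engineering the weight: $1/(1-p_k^2)$ is forced by the Bernoulli score, and $(a^2 - p_k^2)$ is forced by the need to kill boundary terms in integration by parts, so any other choice leaves either the Stein factor uncanceled or uncontrolled boundary contributions. The remaining ingredients---concentration of $\norm{\mu}$ and the Stein computation---are routine.
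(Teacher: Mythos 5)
Your argument follows the paper's strategy exactly at the high level — the same prior $\pi = \mathrm{Unif}[-12\epsilon,12\epsilon]^{\otimes d}$ over $p$, the same hard family $\Dist_p$, the same weights $(a^2 - p_k^2)/(1 - p_k^2)$, and \cref{lem:cvx-learner-prop} to turn accuracy into a lower bound on $\EE[\inner{\hat\theta}{\mu}]$ — but you prove the central fingerprinting identity from scratch via a Stein/score computation plus integration by parts, whereas the paper simply quotes it from Kamath et al.\ (2019, Eq.~16). Your derivation is correct (the Bernoulli score with respect to $p_k$ is indeed $\sum_i(\eta_i^{(k)}-p_k)/(1-p_k^2)$, $F_k$ is a polynomial in $p$ so Stein's identity and IBP are legitimate, and the boundary term vanishes because $a^2-p_k^2 = 0$ at $\pm a$), and it makes the origin of the weight transparent, which is a useful clarification.

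One step needs tightening. You assert $\EE_{p\sim\pi}[\norm{\mu_p}] \geq 4\epsilon$ because $\norm{\mu}^2$ ``concentrates about $48\epsilon^2$ for $d$ sufficiently large,'' but the lemma statement imposes no lower bound on $d$, and quantitatively you would still need to track how much of the slack $4/\sqrt{48}\approx 0.58$ survives the concentration tail. The paper's \cref{lem:expectation-prior-lb} sidesteps this entirely with a deterministic one-liner: since $|p_k|\le a$ almost surely, $\norm{p}^2 = \norm{p}\cdot\norm{p} \le a\sqrt d\,\norm{p}$, whence $\EE[\norm{p}] \ge \EE[\norm{p}^2]/(a\sqrt d) = a\sqrt d/3$ and so $\EE[\norm{\mu_p}] \ge a/3 = 4\epsilon$ for every $d$. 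You should swap in this deterministic bound; everything else in your proof is sound.
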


\subsubsection{CMI-Accuracy Tradeoff for CLB}
\label{sec:lower-bound-proper-clb}
\begin{theorem*}[Restatement of  \cref{thm:main-lower-convex}]
Let $\epsilon_0 \in (0,1)$ be a universal constant.
    Let $\mathcal{P}^{(d)}_{\text{cvx}}$ be the problem instance described in \cref{sec:construction-cvx}. For every $\epsilon \leq \epsilon_0$ and $\delta \leq \epsilon$ and for every proper algorithm $\Alg=\{\Alg_n\}_{n \in \Naturals}$ that $\epsilon$-learns $\mathcal{P}^{(d)}_{\text{cvx}}$ with the sample complexity $N(\cdot,\cdot)$ the following holds: for every $n\geq N(\epsilon,\delta)$, and $d\geq \Omega(n^2\log(n))$, there exists a data distribution $\Dist \in \ProbMeasures{\dataspace}$ such that
\[
\nonumber
\cmi \geq \Omega\left(\frac{1}{\epsilon^2}\right).
\]
\end{theorem*}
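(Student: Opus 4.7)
The plan is to combine the fingerprinting correlation from \cref{lem:fingerprinting-cvx} with the super-sample structure of CMI, and convert this into a lower bound on $\cmi$ by routing through an explicit membership-inference adversary.

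First, I would rewrite the fingerprinting correlation in terms of the super-sample. Since the unused copy $\tilde Z_{1-U_i, i}$ is independent of $\hat\theta$, one has
\[
\EE\brackets[\big]{\hat\theta^{(k)}(Z_i^{(k)} - \mu^{(k)})} \;=\; \EE\brackets[\big]{\hat\theta^{(k)}(2U_i - 1)\, W_i^{(k)}}, \quad W_i^{(k)} := \tilde Z_{1,i}^{(k)} - \tilde Z_{0,i}^{(k)}.
\]
Substituting into \cref{lem:fingerprinting-cvx} and using $\delta \leq \epsilon$ yields
\[
\sum_{i=1}^n \EE\brackets[\big]{(2U_i - 1)\, g_i} \;\geq\; 2\epsilon, \quad g_i := \sum_{k=1}^d c_k\, W_i^{(k)}\, \hat\theta^{(k)}.
\]
Here $c_k = O(\epsilon^2)$ by \cref{lem:fingerprinting-cvx}, $|W_i^{(k)}| \leq 2/\sqrt d$, and $\|\hat\theta\|_2 \leq 1$ by properness, so Cauchy--Schwarz gives the uniform bound $|g_i| \leq M = O(\epsilon^2)$.

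Next I would design a membership-inference adversary that, on input $(\hat\theta, Z, \Dist)$, computes the fingerprinting score $s(\hat\theta, Z) := \sum_k c_k\, \hat\theta^{(k)}(Z^{(k)} - \mu^{(k)})$ and outputs $1$ iff $s > \tau$, for a threshold $\tau$ of order $\epsilon^2/n$. For any $Z$ independent of $\hat\theta$, the score has mean $0$ and variance $O(\epsilon^4/d)$; in the regime $d \geq \Omega(n^2 \log n)$ this variance is small enough that a sub-Gaussian concentration bound together with a union bound over the $n$ fresh samples gives $\Pr(\exists i : s(\hat\theta, \tilde Z_{0,i}) > \tau) \leq \xi$ (soundness). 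The fingerprinting inequality forces the sum of training-sample scores to exceed $2\epsilon$, and since $\tau$ is a factor $\epsilon$ smaller than the typical per-sample mean $\EE[s_i] \approx \epsilon/n$, a concentration argument for $s_i$ under the training distribution produces $\Omega(n) = \Omega(1/\epsilon^2)$ training scores above $\tau$ (recall). Applying $\mathcal Q$ to both $\tilde Z_{0,i}$ and $\tilde Z_{1,i}$ then correctly identifies $U_i$ on an $\Omega(m)$-sized subset of indices, and a Fano/chain-rule argument using independence of the $U_i$'s yields
\[
\cmi \;=\; I(\hat\theta; U \mid \tilde Z) \;\geq\; \sum_{i=1}^n I(\hat\theta; U_i \mid \tilde Z) \;=\; \Omega(m) \;=\; \Omega(1/\epsilon^2).
\]

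The main obstacle is the recall step: a naive pigeonhole using only $|g_i| \leq M = O(\epsilon^2)$ and $\sum_i \EE[(2U_i-1) g_i] \geq 2\epsilon$ produces merely $\Omega(1/\epsilon)$ positive detections --- exactly the strongly convex rate of \cref{thm:main-lower-stronglyconvex}, not the $\Omega(1/\epsilon^2)$ we need here. Pushing the recall up to $\Omega(n)$ requires driving $\tau$ down to $O(\epsilon^2/n)$ and using the dimensional headroom $d \geq \Omega(n^2 \log n)$ to keep the fresh-sample noise well below $\tau$ uniformly over $n$ samples. Controlling the concentration of $s_i$ under the training distribution, where $\hat\theta$ and $Z_i$ are dependent, while simultaneously maintaining soundness is the quantitative heart of the proof; this is also where the proper-learner assumption $\hat\theta \in \mathcal{B}_d(1)$ is consumed, via the Cauchy--Schwarz bound on $|g_i|$.
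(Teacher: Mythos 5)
Your overall skeleton (fingerprinting + thresholding the correlation score + a Fano-type accounting on the $U_i$'s) matches the paper, and you correctly diagnose the crux: a naive pigeonhole from $|g_i| \le M = O(\epsilon^2)$ and $\sum_i \EE[(2U_i-1)g_i] \ge 2\epsilon$ gives only $\Omega(1/\epsilon)$ detections. But you stop there, describing the recall step as ``the quantitative heart of the proof'' without actually supplying the argument, and the fix you gesture at (driving $\tau$ down to $O(\epsilon^2/n)$ and hoping for concentration of $s_i$ ``under the training distribution'') does not close the gap. Lowering the threshold does not by itself produce more detections, and there is no direct concentration statement for $s_i$ available: the training scores are exactly the quantities entangled with $\hat\theta$ that you are trying to control.

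The missing idea is a second-moment control that decouples $\hat\theta$ from the training set. The paper invokes a Paley--Zygmund-style counting bound (\cref{lem:card-moments}): the number of indices with $T_{U_i,i} \ge \beta/n$ is at least $\bigl(\max\{\sum_i T_{U_i,i} - \beta, 0\}\bigr)^2 / \sum_i T_{U_i,i}^2$. The fingerprinting lemma puts the numerator at $\Omega(\epsilon^2)$. The denominator is the key: a naive bound gives $\sum_i T_{U_i,i}^2 \le n \cdot (O(\epsilon^2))^2 = O(n\epsilon^4)$, which (for $n = \Theta(1/\epsilon^2)$) yields only $\Omega(1)$ detections. The paper instead uses \cref{lem:basis}: writing $T_{U_i,i} = \langle A\hat\theta, Z_{U_i,i} - \mu\rangle$, the sum of squares is $\|\mathbf{B}^\top - \mathbf{1}\mu^\top\|_2^2\,\|A\hat\theta\|^2$ where $\mathbf{B}$ stacks the training points, and the random-matrix operator-norm bound (Vershynin) shows $\|\mathbf{B}^\top - \mathbf{1}\mu^\top\|_2^2 = O(1)$ with high probability when $d \gtrsim n$. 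Crucially this bound is \emph{uniform over $y = A\hat\theta$}, so no independence of $\hat\theta$ from the sample is needed. This replaces the denominator by $O(\epsilon^4)$, not $O(n\epsilon^4)$, yielding $\EE[|\mathcal I|] = \Omega(\epsilon^2/\epsilon^4) = \Omega(1/\epsilon^2)$. Without an argument of this flavor your proposal cannot get past $\Omega(1/\epsilon)$.

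A secondary issue: the step $\cmi \ge \sum_i I(\hat\theta; U_i \mid \tilde Z)$ does not follow from the chain rule in general. The paper instead works with the entropy form $\cmi = n - \entr{U \mid \tilde Z, \hat\theta}$, partitions the indices via the detection set $\mathcal I$, and applies Fano only to the detected block, paying separately for $\Pr(\mathcal G^c)$. Also note the paper sets the threshold $\beta = \epsilon$ (so $\tau = \epsilon/n$), not $\epsilon^2/n$; the relevant separation is against the ghost-sample noise scale $O(\epsilon^2/\sqrt{d}) = o(\epsilon/n)$ guaranteed by $d \ge \Omega(n^2\log n)$, not against the per-sample mean.
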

\begin{proof}[Proof Sketch]
Let $\mathcal{P}^{(d)}_{\text{cvx}}$ be the problem instance described in \cref{sec:construction-cvx}. Fix an $\epsilon$-learner $\Alg$ for $\mathcal{P}^{(d)}_{\text{cvx}}$, and let the data distribution be such that it satisfies \cref{lem:fingerprinting-cvx}. Consider the structure introduced in the definition of CMI in \cref{def:cmi} and define diagonal matrix $A\in \Reals^{d \times d}$ where 
$
A = \mathrm{diag}\left[ \left\{\frac{144\epsilon^2 - d(\mu^{(k)})^2}{1-d(\mu^{(k)})^2}\right\}_{k=1}^{d}  \right].
$
For every $i \in \range{n}$, let $T_{0,i}=\inner{\hat{\theta}} {A\left(Z_{0,i}-\mu\right)}$, $T_{1,i}=\inner{\hat{\theta}} {A\left(Z_{1,i}-\mu\right)}$, and $\bar U_i = 1 - U_i$. Notice that $Z_{\bar U_i,i} \indep \hat{\theta}$ given $U_i$ by the definition of CMI. Then, we show that $T_{\bar{U}_i,i}$ is a sub-Gaussian random variable with a variance proxy of $O(1/\sqrt{d})$. Therefore, with a high probability,  for every $i \in \range{n}$, $\big|T_{\bar U_i,i}\big|=O(\epsilon/\sqrt{d})=O(\epsilon/n)$, since $d \geq \Omega(n^2 \log(n))$. This observation motivates us to define the set $\mathcal{I} \subseteq \range{n}$ as follows: $i\in \mathcal{I}$ if and only if $\max\{T_{1,i},T_{0,i}\}>\tau$ and $\min\{T_{1,i},T_{0,i}\}<\tau$, where $\tau = \Theta(\epsilon/n)$. 
Intuitively, elements in $\mathcal{I}$ are indices for which the output of the learner has a high correlation with the $i$-th sample in the training set and a low correlation with the corresponding ghost sample that was not observed by the learning algorithm, where $\tau$ quantifies the level of correlation.

We show that the expected cardinality of $\mathcal{I}$ is a lower bound on $\cmi$. The next step of the proof is using the fingerprinting lemma as in \cref{lem:fingerprinting-cvx} to further lower bound $|\mathcal{I}|$. We show in \cref{lem:card-moments} that we can lower bound $|\mathcal{I}|$ using the sample-wise correlation random variables. More precisely, we show that with a high probability, $|\mathcal{I}| = \Omega\left(\left(\sum_{i=1}^{n}T_{U_i,i}\right)^2/\sum_{i=1}^{n}T_{U_i,i}^2\right)$. By using \cref{lem:fingerprinting-cvx}, we show that $(\sum_{i=1}^{n}T_{U_i,i})^2=\Omega(\epsilon^2)$, and  by using \cref{lem:basis}, we show that $\sum_{i=1}^{n}T_{U_i,i}^2=O(\epsilon^4)$. Combining these two pieces concludes the proof. For a detailed proof, see \cref{appx:proof-sec-cvx}. 
\end{proof}

\subsection{Lower Bound for Improper (Unconstrained) Learners}
\label{sec:improper}
The output of proper learners is constrained into the ball of radius one in $\Reals^d$. In this section, we prove that the lower bound for improper (unconstrained) learners is reducible to the lower bound for proper (constrained) learners. Consider $\mathcal{P}^{(d)}_{\text{cvx}}=\SCOprob$ described in \cref{sec:construction-cvx}. Using $f$, we define a new loss function that is supported on $\Reals^d$ as follows: for every $z\in \dataspace$, $\tilde{f}:\Reals^d \times \dataspace \to \Reals$ is given by
\[\label{eq:improper_extension}
\tilde{f}(\theta,z)=\inf_{w\in \mathcal{B}_d(1)} \{f(w,z)+\norm{\theta - w}\}.
\]
Let $\mathcal{P}^{(d)}_{\text{cvx,improper}} = (\Theta,\dataspace,\tilde{f})$. From \cref{lem:lip-exten}, we know that $\tilde{f}(\cdot,z)$ is a $1$-Lipschitz and a convex function which means $\mathcal{P}^{(d)}_{\text{cvx,improper}} \in \mathcal{C}_{1,1}$.

\begin{theorem}
Fix $\epsilon>0$ and let $\mathcal{P}^{(d)}_{\text{cvx,improper}}$ be as described in \cref{sec:improper}. For every $\epsilon \leq 1$ and $\delta \leq \epsilon$ and for every algorithm $\Alg=\{\Alg_n\}_{n \in \Naturals}$ that $\epsilon$-learns $\mathcal{P}^{(d)}_{\text{cvx,improper}}$ with the sample complexity $N(\cdot,\cdot)$ the following holds: for every $n\geq N(\epsilon,\delta)$, and $d\geq \Omega(n^2\log(n))$, there exists a data distribution $\Dist \in \ProbMeasures{\dataspace}$ such that
\[
\nonumber
\cmi = \Omega\left(\frac{1}{\epsilon^2}\right).
\]
\end{theorem}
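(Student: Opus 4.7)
The plan is to show that the proof of \cref{thm:main-lower-convex} extends, with only a change of absolute constants, from proper learners for $\mathcal{P}^{(d)}_{\text{cvx}}$ to improper (unconstrained) learners for $\mathcal{P}^{(d)}_{\text{cvx,improper}}$. The proof in the proper case relies on two structural properties of $\hat{\theta}=\Alg_n(\trainset_n)$: (a) the fingerprinting-style correlation $\inner{\hat{\theta}}{\mu}\ge \norm{\mu}-\epsilon$ driving \cref{lem:fingerprinting-cvx}, and (b) the boundedness $\norm{\hat{\theta}}\le 1$, which makes $T_{j,i}=\inner{\hat{\theta}}{A(Z_{j,i}-\mu)}$ sub-Gaussian with the right variance proxy. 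I would verify that both survive, up to constants, when $f$ is replaced by $\tilde{f}$ and $\parspace$ is enlarged to $\Reals^d$, and then reuse the rest of the proof of \cref{thm:main-lower-convex} verbatim on the same hard distribution constructed in \cref{sec:construction-cvx}.

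For (a), the key observation is that the infimal-convolution form \eqref{eq:improper_extension}, together with \cref{lem:lip-exten}, identifies $\tilde{f}(\cdot,z)$ as the \emph{largest} $1$-Lipschitz function on $\Reals^d$ that agrees with $f(\cdot,z)=-\inner{\cdot}{z}$ on $\mathcal{B}_d(1)$. Since the map $\theta\mapsto -\inner{\theta}{z}$ is itself a $1$-Lipschitz extension of $f(\cdot,z)$ to $\Reals^d$, we obtain the pointwise lower bound $\tilde{f}(\theta,z)\ge -\inner{\theta}{z}$ for all $\theta\in\Reals^d$ and all $z\in\dataspace$. Because $\tilde{f}$ coincides with $f$ on $\mathcal{B}_d(1)$, the unconstrained minimum of $\theta\mapsto \EE_{Z\sim\Dist}[\tilde{f}(\theta,Z)]$ is still $-\norm{\mu}$, attained at $\mu/\norm{\mu}\in\mathcal{B}_d(1)$. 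Taking expectations in the pointwise bound and combining with the $\epsilon$-learning guarantee $\EE[\tilde{f}(\hat{\theta},Z)]\le -\norm{\mu}+\epsilon+O(\delta)$ (after the usual conversion between high-probability and in-expectation statements) yields $\inner{\hat{\theta}}{\mu}\ge \norm{\mu}-\epsilon-O(\delta)$, exactly the conclusion of \cref{lem:cvx-learner-prop}. Consequently, the fingerprinting inequality of \cref{lem:fingerprinting-cvx} transfers verbatim to the improper learner on the hard distribution from \cref{sec:construction-cvx}.

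For (b), I would use an elementary lower bound on $\tilde{f}$ coming straight from the infimal convolution: for any $w\in\mathcal{B}_d(1)$ one has $-\inner{w}{z}+\norm{\hat{\theta}-w}\ge -\norm{w}+\norm{\hat{\theta}}-\norm{w}\ge \norm{\hat{\theta}}-2$, so $\tilde{f}(\hat{\theta},z)\ge \norm{\hat{\theta}}-2$ and hence $\EE[\tilde{f}(\hat{\theta},Z)]\ge \norm{\hat{\theta}}-2$. Combined with the excess-error bound this gives $\norm{\hat{\theta}}\le 2+\epsilon=O(1)$ on the high-probability event where the learning guarantee holds. The sub-Gaussian tail bound in the proof of \cref{thm:main-lower-convex} only uses that $\norm{A\hat{\theta}}^2\le \norm{A}_2^2\,\norm{\hat{\theta}}^2=O(\epsilon^4)$ (from $|\mu^{(k)}|\le 12\epsilon/\sqrt{d}$ in \cref{lem:fingerprinting-cvx}, which yields $\norm{A}_2=O(\epsilon^2)$); replacing $\norm{\hat{\theta}}\le 1$ by $\norm{\hat{\theta}}=O(1)$ only inflates the variance proxy by an absolute constant, so the threshold $\tau=\Theta(\epsilon/n)$ and the tail probabilities are preserved up to constants.

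With (a) and (b) in hand, the remainder of the argument --- defining the index set $\mathcal{I}=\{i:\max\{T_{1,i},T_{0,i}\}>\tau\ \text{and}\ \min\{T_{1,i},T_{0,i}\}<\tau\}$, using \cref{lem:card-moments} to lower bound $|\mathcal{I}|$ by $\bigl(\sum_{i=1}^{n}T_{U_i,i}\bigr)^2/\sum_{i=1}^{n}T_{U_i,i}^2$, and identifying $\EE[|\mathcal{I}|]$ with a lower bound on $\cmi$ --- proceeds exactly as in the proof of \cref{thm:main-lower-convex}. The main obstacle, and really the only place where improperness of $\Alg$ requires genuine new work, is the a priori norm bound on $\hat{\theta}$: without it the sub-Gaussian variance proxy for $T_{\bar{U}_i,i}$ would be uncontrolled and the concentration step would fail. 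The elementary infimal-convolution estimate $\tilde{f}(\theta,z)\ge \norm{\theta}-2$ is precisely what closes that gap.
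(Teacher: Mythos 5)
Your observations (a) and (b) are both correct as far as they go: the pointwise bound $\tilde{f}(\theta,z)\ge-\inner{\theta}{z}$ holds by Cauchy--Schwarz (using $\norm{z}=1$), which yields the high-probability correlation inequality, and $\tilde{f}(\theta,z)\ge\norm{\theta}-2$ yields $\norm{\hat{\theta}}\le 3$ on the high-probability event where the excess-error guarantee holds. However, the plan of ``reusing the rest of the proof of \cref{thm:main-lower-convex} verbatim'' has a genuine gap, precisely at the point you gloss as ``the usual conversion between high-probability and in-expectation statements.'' That conversion in \cref{lem:cvx-learner-prop} relies on the \emph{almost sure} bound $\inner{\hat{\theta}}{\mu}\ge-\norm{\hat\theta}\norm{\mu}\ge-1$ on the failure event to conclude $\EE[\inner{\hat\theta}{\mu}]\ge\norm{\mu}-\epsilon-2\delta$, and this in-expectation inequality is exactly what feeds the fingerprinting identity in \cref{lem:fingerprinting-cvx}. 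For an improper learner, on the $\delta$-probability failure event $\hat\theta$ is completely uncontrolled, $\inner{\hat\theta}{\mu}$ has no lower bound, and $\tilde F$ has no useful upper bound ($\tilde f(\theta,z)\le\norm\theta$), so the expectation need not be close to the high-probability value. The same issue resurfaces when you bound the term $\EE\bigl[(\max\{\sum_i T_{U_i,i}-\beta,0\})^2\,\indic{\mathcal E^c}\bigr]$: the paper's bound uses $\norm{\hat\theta}\le1$ pointwise, and adding $\{\norm{\hat\theta}\le3\}$ to the good event $\mathcal E$ just moves the uncontrolled mass to $\mathcal E^c$, where $X$ can be arbitrarily large with probability $\delta$.

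The paper closes this gap by a reduction rather than a re-derivation: it passes to $\proj(\Alg_n)=\Pi\circ\Alg_n$, where $\Pi$ is the Euclidean projection onto $\mathcal B_d(1)$, shows via \cref{lem:projection-non-increasing} (which is essentially the infimal-convolution observation you made, packaged as monotonicity under $\Pi$) that $\proj(\Alg)$ is an $\epsilon$-learner for the \emph{proper} problem $\mathcal{P}^{(d)}_{\text{cvx}}$, applies \cref{thm:main-lower-convex} to $\proj(\Alg)$, and then transfers the lower bound back to $\Alg$ by the data processing inequality (\cref{lem:data-proc-proj}). This sidesteps all the almost-sure norm issues at once, since $\proj(\Alg)$ is proper by construction. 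Your proof could be repaired by inserting the same projection step: apply the proper argument to $\Pi(\hat\theta)$, not to $\hat\theta$, and invoke data processing; but as written, with the analysis applied directly to the improper $\hat\theta$, the argument does not close.
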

\begin{proof}
Let $\Alg=\{\Alg_n\}_{n\geq 1}$ be learning algorithm, possibly improper, i.e.,  $\Alg_n$ is not restricted to output an element of $\mathcal{B}_d(1)$. Also, let $\proj(\Alg)=\{\proj(\Alg)_n\}_{n\geq 1}$ as a new learning algorithm that is defined as follows: for a training set $\trainset_n \in \dataspace^n$, we have $\proj(\Alg_n)(\trainset_n)=\proj(\Alg_n(\trainset_n))$ where $\proj(\cdot):\Reals^d \to \mathcal{B}_d(1)$ is the orthogonal projection matrix onto $\mathcal{B}_d(1)$. Informally, $\proj(\Alg_n)$ is based on projecting the output $\Alg_n$ to $\mathcal{B}_d(1)$. Define $\tilde{\mathrm{F}}_{\Dist}(\theta)=\EE_{Z\sim \Dist}[\tilde{f}(\theta,Z)]$. From  \cref{lem:projection-non-increasing}, we know that  $\hat\theta =\Alg_n(\trainset_n)$ with probability one satisfies
\[
\nonumber
\tilde{\mathrm{F}}_{\Dist}(\hat \theta) - \min_{\theta \in \mathcal{B}_d(1)} \tilde{\mathrm{F}}(\theta) \geq \Popriskcvx{\proj(\hat \theta)} -\min_{\theta \in \mathcal{B}_d(1)} \Popriskcvx{\theta}.
\]
The implication of this equation is the following: if $\Alg$ is an $\epsilon$-learner for $\mathcal{P}^{(d)}_{\text{cvx,improper}}$, then, $\proj\left(\Alg\right)$ is an $\epsilon$-learner with respect to  $\mathcal{P}^{(d)}_{\text{cvx}}$.  

Notice that $\proj\left(\Alg_n\right)$ is a \emph{proper} learning algorithm. Therefore, by \cref{thm:main-lower-convex}, we have that there exists $\Dist \in \ProbMeasures{\dataspace}$ such that $\mathrm{CMI}_{\Dist}\left(\proj\left(\Alg_n\right)\right)\geq \Omega\left(\frac{1}{\epsilon^2}\right)$. Also, by \cref{lem:data-proc-proj} (data processing inequality), $\cmi \geq \mathrm{CMI}_{\Dist}\left(\proj\left(\Alg_n\right)\right) $. Ergo, for distribution $\Dist$ we also have $\cmi \geq \Omega\left(\frac{1}{\epsilon^2}\right)$.
\end{proof}

\subsection{Matching Upper Bound}

\begin{theorem}
\label{thm:lip-bounded-upperbound}
For every $L \in \Reals$, $R \in \Reals$, there exists a proper $\epsilon$-learner with sample complexity $N(\epsilon,\delta)=\frac{128 (LR)^2}{\epsilon^2}\log(2/\delta)$ such that the following holds: for every $0 <\delta\leq 1$, every $n \geq N(\epsilon,\delta)$,  every $\SCOprob \in \clb$ and every $\Dist \in \ProbMeasures{\dataspace}$ the following holds: 1) $ \Popriskcvx{\Alg(\trainset_n)}-\min_{\theta \in \parspace } \Popriskcvx{\theta} \leq \epsilon$ with probability at least $1-\delta$ and 2) $ \cmi \leq \frac{128 (LR)^2}{\epsilon^2}\log(2/\delta)$.
\end{theorem}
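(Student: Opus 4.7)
The plan is to combine any existing stable proper $\epsilon$-learner for \clb with uniform subsampling: run a sharp base learner on a random size-$k$ subsample of the full training set, which is enough to inherit accuracy and at the same time caps CMI by $k$. Concretely, let $\mathcal{B}$ be a proper $\epsilon$-learner whose sample complexity on \clb matches the classical minimax rate $N_0(\epsilon,\delta) = O((LR)^2\log(1/\delta)/\epsilon^2)$, e.g.\ regularized ERM \citep{bousquet2002stability} or the stabilized gradient descent of \citet{bassily2020stability}. Set $k = \lceil 128(LR)^2\log(2/\delta)/\epsilon^2\rceil$, chosen so that $\mathcal{B}$'s $(\epsilon,\delta)$-guarantee applies on $k$ i.i.d.\ samples. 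Given $n \geq k$, define $\Alg_n$ by drawing a uniformly random subset $J \subseteq [n]$ of size $k$ independent of everything else and returning $\Alg_n(\trainset_n) = \mathcal{B}((Z_i)_{i \in J})$.

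For the accuracy claim, I would observe that conditional on $J$, the subsequence $(Z_i)_{i \in J}$ is itself an i.i.d.\ sample of size $k$ from $\Dist$, because $J$ is independent of $\trainset_n$. Applying the $(\epsilon,\delta)$-guarantee of $\mathcal{B}$ for each fixed $J$ and then averaging in $J$ gives $\Popriskcvx{\Alg_n(\trainset_n)} - \min_{\theta \in \parspace}\Popriskcvx{\theta} \leq \epsilon$ with probability at least $1-\delta$ over the training set and all internal randomness. Properness of $\Alg_n$ is inherited from $\mathcal{B}$.

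For the CMI bound, I would use the supersample construction of \cref{def:cmi} together with the chain rule. Since $J$ is independent of $(U, \supersample)$ and of $\mathcal{B}$'s internal randomness,
\begin{align*}
I(\Alg_n(\trainset_n); U \mid \supersample)
&\leq I(\Alg_n(\trainset_n), J; U \mid \supersample) \\
&= I(J; U \mid \supersample) + I(\Alg_n(\trainset_n); U \mid \supersample, J) \\
&= I(\Alg_n(\trainset_n); U \mid \supersample, J).
\end{align*}
Conditional on $(\supersample, J)$, the output $\Alg_n(\trainset_n) = \mathcal{B}((Z_{U_i,i})_{i \in J})$ is a randomized function of $(U_i)_{i \in J}$ alone, so the remaining term is at most $H(U_J \mid \supersample, J) \leq k\log 2 \leq 128(LR)^2\log(2/\delta)/\epsilon^2$, matching the stated CMI bound.

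The only real subtlety is the CMI step: one has to introduce the random subsample index $J$ as a side variable inside the mutual information, exploit its independence from $(U,\supersample)$ to drop $I(J;U\mid\supersample)$, and then observe that once $(\supersample,J)$ is fixed the algorithm is a measurable function of $U_J$ and is conditionally independent of $U_{[n]\setminus J}$, after which the entropy bound $H(U_J)\le k\log 2$ finishes things off. Everything else reduces to quoting classical sample-complexity and properness of the base learner, so I do not anticipate further obstacles.
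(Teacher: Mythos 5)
Your proposal is correct and matches the paper's approach. The high-level idea is identical to the one the paper sketches informally in the main text right after the lower-bound theorem: subsample a dimension-independent number of points, run a minimax base learner on the subsample, and observe that CMI is capped by the subsample size.

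The only difference worth noting is in the formalization. The paper's appendix proof uses a \emph{deterministic} subsample (the first $m$ examples) together with a concrete base learner (online projected gradient descent with averaging), which lets them directly invoke the explicit regret-plus-martingale bound $\frac{R^2}{2m\eta} + \frac{\eta}{2}L^2 + 2LR\sqrt{8\log(2/\delta)/m}$ to pin down the constant $128$; the CMI step then reduces to a one-line chain rule, $\minf{\Alg_n(\trainset_n);U_{m+1},\dots,U_n\mid \supersample, U_1,\dots,U_m}=0$, because the output is a measurable function of the first $m$ columns only. You instead use a \emph{random} subset $J$ and a generic black-box base learner, which costs you the extra side-information / independence argument ($I(J;U\mid\supersample)=0$ and conditional independence of the output from $U_{[n]\setminus J}$ given $(\supersample,J)$) but is correct and more modular. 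Your chain-rule inequality $I(\Alg_n(\trainset_n);U\mid\supersample) \le I(\Alg_n(\trainset_n),J;U\mid\supersample)$ and the entropy bound $H(U_J\mid \supersample, J)\le k$ are both valid. The one thing you wave your hands over is the precise constant: citing regularized ERM or stabilized GD gives an $O((LR)^2\log(1/\delta)/\epsilon^2)$ rate, but to land on exactly $128(LR)^2\log(2/\delta)/\epsilon^2$ as in the theorem statement you would need to instantiate a base learner with a matching explicit constant — which is essentially what the paper's choice of online GD accomplishes.
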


\section{Overview of Characterization of CMI for the CSL SCOs}
\label{sec:scvx}
In this section, we discuss the characterization of CMI of $\epsilon$-learners for CSL subclasses of SCOs. (All proofs appear in \cref{appx:proof-sec-scvx}.)

\subsection{Lower Bound}
\subsubsection{Construction of a Hard Problem Instance}

\label{sec:construction-scvx}
Towards proving \cref{thm:main-lower-stronglyconvex}, we develop the following construction: Let $d \in \Naturals$. Let $ \displaystyle \dataspace = \big\{\pm 1/\sqrt{d}\big\}^d$ and $\parspace = \Reals^d$. Define the loss function $\losscvx: \parspace \times \dataspace  \to \Reals$ as
\[
\nonumber
\losscvx(\theta,z) = -\inner{\theta}{z} + \frac{1}{2}\norm{\theta}^2.
\]
Let $\mathcal{P}^{(d)}_{\text{scvx}}\triangleq(\parspace,\dataspace,f)$ be the described problem instance.
\subsubsection{Properties of $\epsilon$-Learners for CLS}
In the next lemma, we show some properties that are shared between every $\epsilon$-learner of $\mathcal{P}^{(d)}_{\text{scvx}}$.
\begin{lemma}
\label{lem:scvx-learner-prop}
Fix $\epsilon>0$. Let $\Alg$ be an $\epsilon$-learner for $\mathcal{P}^{(d)}_{\text{scvx}}$ with the sample complexity of $N(\cdot,\cdot)$ such that its output is an element of $\mathcal{B}_d(1)$. Then, for every $\delta >0$, $n \geq N(\epsilon,\delta)$ and every $\Dist \in \ProbMeasures{\dataspace}$, with probability at least $1-\delta$, we have
$  \norm{\hat{\theta}-\mu}^2\leq 2\epsilon$,  $ \frac{1}{2}\norm{\mu}^2-\epsilon \leq \inner{\hat{\theta}}{\mu}$, and $\EE\left[\inner{\hat{\theta}}{\mu}\right] \geq \frac{1}{2}\norm{\mu}^2 - \epsilon -  \frac{3\delta}{2}. $ ,
where $\hat{\theta}=\Alg(\trainset_n)$ and $\mu = \EE_{Z \sim \Dist}\left[ Z \right]$.
\end{lemma}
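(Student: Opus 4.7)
The plan is to exploit the explicit quadratic structure of the CSL construction: because $f(\theta,z) = -\langle \theta,z\rangle + \tfrac12\|\theta\|^2$ is linear in $z$ and separable across coordinates, the population risk under $\Dist$ collapses to $\Popriskcvx{\theta} = -\langle \theta,\mu\rangle + \tfrac12\|\theta\|^2$, whose unique unconstrained minimizer is $\theta^\star = \mu$ with value $-\tfrac12\|\mu\|^2$. Completing the square gives the key identity
\[
\Popriskcvx{\theta} - \min_{\theta'\in\parspace}\Popriskcvx{\theta'} = \tfrac{1}{2}\|\theta-\mu\|^2,
\]
so the excess error of any output is exactly half its squared distance to $\mu$. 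This identity makes the first claim essentially immediate.

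First I would apply \cref{def:eps-learner} to $\Alg$ on $\mathcal{P}^{(d)}_{\text{scvx}}$: for $n \ge N(\epsilon,\delta)$ and any $\Dist\in\ProbMeasures{\dataspace}$, with probability at least $1-\delta$ we have $\Popriskcvx{\hat\theta} - \min \Popriskcvx{\cdot} \le \epsilon$, which by the identity above immediately yields $\|\hat\theta-\mu\|^2 \le 2\epsilon$. Next I would expand the squared norm as $\|\hat\theta\|^2 - 2\langle \hat\theta,\mu\rangle + \|\mu\|^2 \le 2\epsilon$ and drop the nonnegative term $\|\hat\theta\|^2$ to obtain $\langle \hat\theta,\mu\rangle \ge \tfrac12\|\mu\|^2 - \epsilon$, which is the second claimed inequality.

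For the expectation bound, I would decompose $\EE[\langle \hat\theta,\mu\rangle]$ over the good event $G$ (where $\|\hat\theta-\mu\|^2 \le 2\epsilon$, of probability $\ge 1-\delta$) and its complement. On $G$ the pointwise bound $\langle \hat\theta,\mu\rangle \ge \tfrac12\|\mu\|^2-\epsilon$ holds. On $G^c$ I would use the Cauchy--Schwarz inequality together with the hypothesis $\hat\theta \in \mathcal{B}_d(1)$ and the bound $\|\mu\| \le 1$ (which follows because $Z \in \{\pm 1/\sqrt d\}^d$ implies $\|\mu\|_\infty \le 1/\sqrt d$, hence $\|\mu\|\le 1$) to conclude $\langle \hat\theta,\mu\rangle \ge -1$. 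Combining,
\[
\EE[\langle \hat\theta,\mu\rangle] \;\ge\; (1-\delta)\bigl(\tfrac12\|\mu\|^2 - \epsilon\bigr) + \delta\cdot(-1),
\]
and since $\|\mu\|^2 \le 1$ gives $-\delta\cdot\tfrac12\|\mu\|^2 \ge -\delta/2$, the right-hand side is at least $\tfrac12\|\mu\|^2 - \epsilon - \tfrac{3\delta}{2}$, which is the third inequality.

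This proof is essentially computational, so I do not expect a serious obstacle; the only subtlety is the expectation step, where one must recall that $\hat\theta$ is assumed constrained to $\mathcal{B}_d(1)$ and exploit $\|\mu\|\le 1$ to control the contribution from the failure event. If the boundedness assumption on $\hat\theta$ were dropped, one would need to invoke $1$-Lipschitzness of $f$ plus a separate tail control to bound $\langle\hat\theta,\mu\rangle$ on $G^c$, but under the stated hypothesis the argument is as above.
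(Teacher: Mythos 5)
Your proof is correct and follows the same route as the paper. The first two claims come verbatim from the identity $\Popriskcvx{\theta}-\Popriskcvx{\theta^\star}=\tfrac12\|\theta-\mu\|^2\ge \tfrac12\|\mu\|^2-\inner{\theta}{\mu}$, exactly as in the paper's proof.

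For the expectation bound, the paper's own proof is terse — it merely remarks that $\hat\theta\in\mathcal{B}_d(1)$ can be assumed, and leaves the reader to replay the computation from the CLB analogue (\cref{lem:cvx-learner-prop}). You have supplied precisely that missing computation: decompose over the good event $G$ (probability $\ge 1-\delta$), bound the contribution on $G^c$ by $-\|\hat\theta\|\|\mu\|\ge -1$ via Cauchy--Schwarz and $\|\hat\theta\|,\|\mu\|\le 1$, and collect constants using $\tfrac12\|\mu\|^2\le\tfrac12$. One small point worth noting: writing $\EE[\inner{\hat\theta}{\mu}\mathbf{1}_G]\ge(\tfrac12\|\mu\|^2-\epsilon)(1-\delta)$ tacitly assumes $\tfrac12\|\mu\|^2-\epsilon\ge 0$; the cleaner factoring used by the paper in \cref{lem:cvx-learner-prop} — namely $\EE[\inner{\hat\theta}{\mu}]\ge(\tfrac12\|\mu\|^2-\epsilon)-p\,(\tfrac12\|\mu\|^2-\epsilon+1)$ with $p=\Pr(G^c)\le\delta$ and $0\le\tfrac12\|\mu\|^2-\epsilon+1\le\tfrac32$ — avoids that case split and yields $3\delta/2$ directly. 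But your version still delivers the stated bound in all cases, since when $\tfrac12\|\mu\|^2-\epsilon<0$ the bound on $G$ simply improves.
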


\begin{remark}
\label{rem:scvx-remark}
     For learners of $\mathcal{P}_{\text{scvx}}^{(d)}$, without loss of generality, we assume that the output of the learning algorithm lies in $\mathcal{B}_d(1)$ where $\mathcal{B}_d(1)$ is the ball of radius one in $\Reals^d$. The explanation is as follows. For every $\hat{\theta} \in \Reals^d$, we have $\Popriskcvx{\hat{\theta}}-\min_{\theta \in \Reals^d}\Popriskcvx{\theta}=\frac{1}{2}\norm{\hat{\theta}-\mu}^2$. By the Pythagorean theorem we have $\norm{\proj\left(\hat{\theta}\right)-\mu}^2\leq \norm{\hat{\theta}-\mu}^2 $. Since $\mu \in \mathcal{B}_d(1)$ this shows that by projecting the output of any algorithm $\hat{\theta}$ to $\mathcal{B}_d(1)$, denoted by $\proj\left(\hat{\theta}\right)$, the excess error does not increase. Notice that projection never increases CMI due to data processing inequality \citep{cover2012elements}. Therefore, it suffices to consider the algorithms whose output lies in $\mathcal{B}_d(1)$.
\end{remark}

The next lemma is a variant of the fingerprinting lemma by \citet{steinke2016upper} which shows for a sufficiently accurate learner, there exists a distribution such that the correlation of the output and the training samples are bounded below by a constant.
\begin{lemma}
\label{lem:fingerprinting-scvx}
Fix  $\epsilon>0$. For every $\epsilon$-learner $\Alg$ for $\mathcal{P}^{(d)}_{\text{scvx}}$ with sample complexity $N(\cdot,\cdot)$, there exists a data distribution $\Dist \in \ProbMeasures{\dataspace}$ such that the following holds: for every $\delta>0$ and $n\geq N(\epsilon,\delta)$, let $\trainset_n = (Z_1,\dots,Z_n)\sim \Dist^{\otimes n}$, $\hat{\theta}=\Alg_n(\trainset_n)$ and $\mu = \EE_{Z\sim \Dist}[Z]$. Then, we have 
\[
\nonumber
\EE\left[\sum_{i=1}^{n}\inner{\hat{\theta} - \mu}{Z_i - \mu}\right] \geq \frac{1}{3} - 2\epsilon - 3\delta.
\]
\end{lemma}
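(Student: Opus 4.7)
The plan is to use a Bayesian fingerprinting argument in the style of Steinke--Ullman and Kamath--Li--Singhal--Ullman, adapted to the coordinatewise structure of $\mathcal{P}^{(d)}_{\text{scvx}}$. Put a product prior on the mean vector: for each coordinate $k$, draw $p_k\sim\mathrm{Unif}[-1,1]$ independently, and conditional on $p$ let $\Dist_p$ be the product distribution on $\{\pm 1/\sqrt d\}^d$ with $\EE[Z^{(k)}\mid p]=p_k/\sqrt d$, so $\mu^{(k)}=p_k/\sqrt d$. Since $\Alg$ is an $\epsilon$-learner and $\Dist_p\in\ProbMeasures{\dataspace}$ for every realization of $p$, \cref{lem:scvx-learner-prop} applies for every $p$ and yields
\[\nonumber
\EE\bigl[\inner{\hat\theta}{\mu}\,\big|\,p\bigr]\ \ge\ \tfrac{1}{2}\|\mu\|^2-\epsilon-\tfrac{3\delta}{2}.
\]
Taking expectation over $p$ and using $\EE_p[\|\mu\|^2]=\sum_k\EE[p_k^2]/d=1/3$ gives $\EE[\inner{\hat\theta}{\mu}]\ge 1/6-\epsilon-3\delta/2$. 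The averaging-then-existence step (probabilistic method) at the end will give us the single distribution $\Dist_{p^\star}$ promised by the lemma.

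Next I would derive the key ``fingerprinting identity'' that converts the target quantity into $2\EE[\inner{\hat\theta}{\mu}]$. Expanding $\inner{\hat\theta-\mu}{Z_i-\mu}$ and using $\EE[Z_i^{(k)}-\mu^{(k)}\mid p]=0$ reduces the target to $\sum_k\EE[\hat\theta^{(k)}\sum_i(Z_i^{(k)}-\mu^{(k)})]$. Writing $Z_i^{(k)}-\mu^{(k)}=(\epsilon_i^{(k)}-p_k)/\sqrt d$ where $\epsilon_i^{(k)}\in\{\pm1\}$ are Bernoulli with mean $p_k$, I apply coordinatewise Stein's identity for the signed Bernoulli, $\EE[f(\epsilon^{(k)})(\epsilon_i^{(k)}-p_k)\mid p_k]=(1-p_k^2)\EE[D_i f\mid p_k]$, followed by integration by parts against the uniform prior on $[-1,1]$. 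After marginalizing out the algorithm's randomness and the coordinates $\ell\neq k$ (which are independent of $(p_k,\epsilon^{(k)})$) we obtain, for every $k$,
\[\nonumber
\EE\Bigl[\hat\theta^{(k)}\sum_{i=1}^n(\epsilon_i^{(k)}-p_k)\Bigr]\ =\ 2\,\EE\bigl[p_k\,\hat\theta^{(k)}\bigr],
\]
where the boundary terms vanish because $(1-p_k^2)$ is zero at $p_k=\pm 1$. Summing over $k$ and converting back via $p_k=\sqrt d\,\mu^{(k)}$ gives
\[\nonumber
\EE\Bigl[\sum_{i=1}^n\inner{\hat\theta-\mu}{Z_i-\mu}\Bigr]\ =\ 2\,\EE[\inner{\hat\theta}{\mu}].
\]

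Combining the two ingredients yields
\[\nonumber
\EE_p\,\EE\Bigl[\sum_i\inner{\hat\theta-\mu}{Z_i-\mu}\,\Big|\,p\Bigr]\ \ge\ 2\bigl(\tfrac{1}{6}-\epsilon-\tfrac{3\delta}{2}\bigr)\ =\ \tfrac{1}{3}-2\epsilon-3\delta,
\]
so by averaging there exists a realization $p^\star$ such that the conditional expectation under $\Dist_{p^\star}$ meets the same bound; this is the distribution claimed by the lemma.

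The main obstacle I anticipate is justifying the coordinatewise application of the Stein/fingerprinting identity when $\hat\theta^{(k)}$ depends on \emph{all} coordinates of the sample and on the algorithm's internal randomness. The right framing is to condition on everything except $(p_k,\epsilon^{(k)}_{1},\dots,\epsilon^{(k)}_n)$: since the remaining coordinates' $\epsilon$'s and the internal randomness are independent of $(p_k,\epsilon^{(k)})$ under the product prior, marginalizing them out produces a deterministic function of $\epsilon^{(k)}$ to which the one-dimensional identity applies, and the identity is preserved by taking the outer expectation. Once this point is handled cleanly, the rest is routine algebra.
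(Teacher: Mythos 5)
Your proposal is correct and takes essentially the same approach as the paper: same prior $\pi = \mathrm{Unif}[-1,1]^{\otimes d}$ over $p$, same reduction of the target to $2\,\EE_p[\inner{\EE[\hat\theta]}{\mu_p}]$ via the fingerprinting identity, same invocation of \cref{lem:scvx-learner-prop} and the computation $\EE_p[\|\mu_p\|^2]=1/3$. The only difference is that you re-derive the coordinatewise fingerprinting identity from Stein's identity for a signed Bernoulli plus integration by parts against the uniform prior, whereas the paper cites Lemmas 4.3.7--4.3.8 of Steinke (2016) directly; the underlying argument is the same, and your treatment of the ``$\hat\theta^{(k)}$ depends on all coordinates'' issue by conditioning on $(p_\ell,\epsilon^{(\ell)})_{\ell\neq k}$ is the right way to make the coordinatewise application rigorous.
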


\subsubsection{CMI-Accuracy Tradeoff for CSL}
\label{subsec:sketch-csl}
\begin{theorem*}[Restatement of \cref{thm:main-lower-stronglyconvex}]
Let $\epsilon_0$ and $\delta_0$ be universal constants.
    Let $\mathcal{P}^{(d)}_{\text{scvx}}$ be the problem instance described in \cref{sec:construction-scvx}. 
For every $\epsilon<\epsilon_0$ and $\delta<\delta_0$ and for every $\epsilon$-learner ($\Alg=\{\Alg_n\}_{n \in \Naturals}$), with sample complexity $N(\cdot,\cdot)$ the following holds: for every $n\geq N(\epsilon,\delta)$, $\delta<O(1/n^2)$, and $d\geq O(n^2\log(n))$, there exists a data distribution $\Dist \in \ProbMeasures{\dataspace}$ such that
\[
\nonumber
\cmi \geq \Omega\left(\frac{1}{\epsilon}\right).
\]
\end{theorem*}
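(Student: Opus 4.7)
The plan is to mirror the CLB proof structure in \cref{sec:lower-bound-proper-clb}, exploiting strong convexity via the tight error bound $\norm{\hat\theta - \mu}^2 \leq 2\epsilon$ afforded by \cref{lem:scvx-learner-prop}. Fix the distribution $\Dist$ from \cref{lem:fingerprinting-scvx} and use the super-sample construction of \cref{def:cmi}. Define the correlations
\begin{equation*}
T_{j,i} \defas \inner{\hat\theta - \mu}{Z_{j,i} - \mu}, \qquad j \in \{0,1\},\ i \in \range{n}.
\end{equation*}
Observe that, unlike in the CLB case, no diagonal scaling matrix is needed here, because strong convexity already pins down $\norm{\hat\theta - \mu}$ directly.

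Conditional on $\hat\theta$, each ghost term $T_{\bar U_i,i}$ is a centered sub-Gaussian with variance proxy $O(\norm{\hat\theta-\mu}^2/d) = O(\epsilon/d)$, since the coordinates of $Z_{\bar U_i,i}$ are independent, of magnitude $1/\sqrt d$, and independent of $\hat\theta$. Setting $\tau = O(\sqrt{\epsilon}/n)$, a union bound gives $\max_i \abs{T_{\bar U_i,i}} \leq \tau$ with high probability provided $d \geq \Omega(n^2 \log n)$. Define the memorized indices $\mathcal{I} = \set{i \in \range{n} \st T_{U_i,i} > \tau,\ T_{\bar U_i,i} < \tau}$. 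The same reasoning used in the CLB case (the analogue of \cref{lem:card-moments}) shows that $U_i$ is decodable from $(\hat\theta, Z_{0,i}, Z_{1,i})$ with constant confidence for each $i \in \mathcal{I}$, so that $\cmi \geq \Omega(\EE\abs{\mathcal{I}})$. A Cauchy--Schwarz step yields
\begin{equation*}
\abs{\mathcal{I}} \;\gtrsim\; \frac{\bigl(\sum_i T_{U_i,i} - n\tau\bigr)^2}{\sum_i T_{U_i,i}^2}.
\end{equation*}

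It remains to control the two moments. The numerator is $\Omega(1)$ by \cref{lem:fingerprinting-scvx}, since $\EE[\sum_i T_{U_i,i}] \geq \Omega(1)$ while $n\tau = O(\sqrt\epsilon) = o(1)$. For the denominator, let $v = \hat\theta - \mu$ and $M = \sum_{i=1}^n (Z_i - \mu)(Z_i-\mu)^\top$; then $\sum_i T_{U_i,i}^2 = v^\top M v \leq \norm{v}^2 \cdot \norm{M}_{\mathrm{op}} \leq 2\epsilon \cdot \norm{M}_{\mathrm{op}}$. Since $Z_i - \mu$ has covariance $\Sigma$ with $\norm{\Sigma}_{\mathrm{op}} \leq 1/d$ and bounded support, matrix Bernstein yields $\norm{M}_{\mathrm{op}} = O(1)$ with high probability whenever $d \geq \Omega(n^2 \log n)$. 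Substituting gives $\abs{\mathcal{I}} \geq \Omega(1/\epsilon)$, and hence $\cmi \geq \Omega(1/\epsilon)$. The main technical obstacle is the denominator bound: because $v = \hat\theta - \mu$ depends on the training data, one cannot apply concentration directly to $v$, and must instead combine the deterministic norm bound $\norm{v}^2 \leq 2\epsilon$ (the crucial place where strong convexity enters) with a uniform-in-direction operator-norm bound on the random matrix $M$, whose small spectral radius stems from the coordinate structure of the hard instance.
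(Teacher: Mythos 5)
Your proposal is correct and follows essentially the same route as the paper's proof: fix the fingerprinting distribution, define the correlation variables $T_{j,i}=\inner{\hat\theta-\mu}{Z_{j,i}-\mu}$, argue that ghost correlations are uniformly small, relate $\cmi$ to $\EE|\mathcal{I}|$ via Fano, lower-bound $|\mathcal{I}|$ by a Paley--Zygmund ratio of $(\sum T_{U_i,i})^2$ to $\sum T_{U_i,i}^2$, get $\Omega(1)$ in the numerator from the fingerprinting lemma, and get $O(\epsilon)$ in the denominator by combining $\norm{\hat\theta-\mu}^2\le 2\epsilon$ with an $O(1)$ operator-norm bound on the sample Gram matrix. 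One cosmetic difference: the paper sets the threshold to the $\epsilon$-independent constant $\beta/n = 1/(12n)$, which lets it bound ghost-term tails unconditionally (since $\norm{\hat\theta-\mu}\le 2$ always), whereas your $\tau = \Theta(\sqrt{\epsilon}/n)$ makes the sub-Gaussian proxy $O(\epsilon/d)$ depend on the accuracy event $\{\norm{\hat\theta-\mu}^2 \le 2\epsilon\}$; this is handled with an extra union bound against $\delta=O(1/n^2)$, exactly parallel to how the paper already splits on the good event $\mathcal{E}$ for the denominator, so it introduces no real gap.
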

\begin{proof}[Proof Sketch]
    Let $\mathcal{P}^{(d)}_{\text{scvx}}$ be the problem instance described in \cref{sec:construction-scvx}. Fix an $\epsilon$-learner $\Alg$ for $\mathcal{P}^{(d)}_{\text{scvx}}$, and let the data distribution be such that it satisfies \cref{lem:fingerprinting-scvx}. Consider the structure of the CMI introduced in \cref{def:cmi} and define for every $i \in \range{n}$, $T_{0,i}=\inner{\hat{\theta}-\mu} {Z_{0,i}-\mu}$ and $T_{1,i}=\inner{\hat{\theta}-\mu} {Z_{1,i}-\mu}$. Let $\bar U_i = 1 - U_i$. An important observation is that $Z_{\bar U_i,i} \indep \hat{\theta}$ given $U_i$. We show that $T_{\bar{U}_i,i}$ is a sub-Gaussian random variable with a variance proxy of $O(1/\sqrt{d})$. Therefore, with a high probability, for every $i \in \range{n}$, $\big|T_{\bar U_i,i}\big|=O(1/\sqrt{d})=O(1/n)$, since $d \geq \Omega(n^2 \log(n))$. This observation motivates us to define the set $\mathcal{I} \subseteq \range{n}$ as follows: $i\in \mathcal{I}$ if and only if $\max\{T_{1,i},T_{0,i}\}>\tau$ and $\min\{T_{1,i},T_{0,i}\}<\tau$, where $\tau = \Theta(1/n)$. We show that the expected cardinality of $\mathcal{I}$ is a lower bound on $\cmi$. The next step of the proof is using the fingerprinting lemma as in \cref{lem:fingerprinting-scvx} to further lower bound $|\mathcal{I}|$. Using \cref{lem:card-moments}, we show that with a high probability, $|\mathcal{I}| = \Omega\left((\sum_{i=1}^{n}T_{U_i,i})^2/\sum_{i=1}^{n}T_{U_i,i}^2\right)$. Using \cref{lem:fingerprinting-scvx}, we show $(\sum_{i=1}^{n}T_{U_i,i})^2=\Omega(1)$. Also, using \cref{lem:basis}, we show $\sum_{i=1}^{n}T_{U_i,i}^2=O(\epsilon)$. Combining these two pieces concludes the proof. For a detailed proof see \cref{appx:proof-sec-scvx}.
\end{proof}

\subsection{Matching Upper Bound}
\begin{theorem}
\label{thm:strong-cvx-upperbound}
For every $L \in \Reals$, $\mu \in \Reals$, and $\epsilon >0$, there exists an algorithm such that the following holds: for every $\SCOprob \in \scl$ and for every $n \geq \frac{2L^2}{\mu \epsilon}$, we have $\EE[\Popriskcvx{\Alg(\trainset_n)}]-\min_{\theta \in \parspace } \Popriskcvx{\theta} \leq \epsilon$,  and $\cmi \leq \frac{4L^2}{\mu \epsilon}$.
\end{theorem}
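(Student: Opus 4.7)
The plan is to mirror the argument behind the upper bound for the Lipschitz--bounded case (\cref{thm:lip-bounded-upperbound}): pick a base learner with optimal sample complexity for $\scl$ and feed it a random subsample of the right size, so that the CMI of the whole procedure is bounded by the subsample size via data processing, while the in-expectation excess error is still controlled by the usual stability analysis for strongly convex losses.

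Concretely, I would take as base algorithm $\Alg^{\text{base}}_k$ the (regularized) empirical risk minimizer on a sample of size $k$; the classical stability argument for $\mu$-strongly convex $L$-Lipschitz losses \citep{bousquet2002stability,shalev2009stochastic} yields
\*[
\EE\bigl[\Popriskcvx{\Alg^{\text{base}}_k(\trainset_k)}\bigr] - \min_{\theta \in \parspace}\Popriskcvx{\theta} \le \frac{2 L^2}{\mu k}.
\*]
Given $n$ samples with $n \ge 2L^2/(\mu \epsilon)$, set $k = \lceil 2L^2/(\mu \epsilon) \rceil$ and define the candidate algorithm $\Alg_n$: draw a uniformly random subset $J \subseteq \range{n}$ of size $k$ (independent of the data and internal randomness of $\Alg^{\text{base}}_k$) and output $\Alg_n(\trainset_n) = \Alg^{\text{base}}_k\bigl((Z_i)_{i \in J}\bigr)$. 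Since $(Z_i)_{i \in J}$ is i.i.d.\ from $\Dist$, the excess-risk bound above immediately gives expected excess error at most $\epsilon$, establishing the first conclusion.

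For the CMI bound, I would use the supersample setup from \cref{def:cmi}: $\supersample = (Z_{j,i})_{j\in\{0,1\}, i \in \range{n}}$, $U \in \{0,1\}^n$, and $\trainset_n = (Z_{U_i,i})_{i\in \range{n}}$. Given $\supersample$ and $J$, the output $\hat{\theta} = \Alg_n(\trainset_n)$ is a (randomized) function of $(U_j)_{j \in J}$ only; in particular $\hat\theta$ is conditionally independent of $U_{\range{n}\setminus J}$ given $(\supersample, J, U_J)$. Decomposing with the chain rule of mutual information,
\*[
I(\hat\theta ; U \mid \supersample)
&\le I(\hat\theta, J ; U \mid \supersample)
= I(J ; U \mid \supersample) + I(\hat\theta ; U \mid \supersample, J) \\
&= I(\hat\theta ; U_J \mid \supersample, J) + I(\hat\theta ; U_{\range{n}\setminus J} \mid \supersample, J, U_J) \\
&\le H(U_J \mid \supersample, J) = k,
\*]
since $J$ is independent of $(\supersample, U)$ and the second term vanishes by the conditional independence noted above. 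Thus $\cmi \le k \le 4 L^2 /(\mu \epsilon)$.

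I do not anticipate a genuine obstacle here: the excess-risk half is a standard stability computation for strongly convex ERM that is quoted several times earlier in the paper, and the CMI half is a one-line data-processing/chain-rule argument once the subsampling reduction is in place. The only mildly delicate point is choosing $k$ so that the resulting ceiling absorbs the factor of $2$ that separates the sample complexity $2L^2/(\mu\epsilon)$ from the CMI upper bound $4L^2/(\mu\epsilon)$; this is what forces the constant $4$ in the stated bound and is what makes the theorem hold for all $n \ge 2L^2/(\mu\epsilon)$ (the subsample size does not grow with $n$).
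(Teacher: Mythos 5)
Your proof is correct and takes essentially the same approach as the paper's: subsample, run an ERM that is stable because the loss is $\mu$-strongly convex and $L$-Lipschitz, bound CMI by the subsample size via the chain rule, and tune the subsample size against the stability bound. The only cosmetic difference is that you draw a uniformly random size-$k$ subset $J$ and condition on it in the CMI computation, whereas the paper simply feeds the \emph{first} $m$ samples into the ERM, which lets it appeal verbatim to the same two-line chain-rule argument already used in the proof of \cref{thm:lip-bounded-upperbound} without ever introducing an auxiliary random index set.
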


\section*{Acknowledgments}
The authors would like to thank Jonathan Ullman for insightful discussions on fingerprinting codes and privacy attacks. We also thank Konstantina Bairaktari for her valuable comments on the drafts of this work, and Sasha Voitovych for pointing out the idea of using random matrix concentration inequalities to improve \cref{lem:basis}, which led to better dimension dependence. Finally, we appreciate the ICML reviewers for their comments and suggestions, which helped improve the presentation of this paper.

\section*{Disclosure of Funding}
IA is supported by the
Vatat Scholarship from the Israeli Council for Higher Education, and the Lynn and William Frankel Center for
Computer Science at Ben-Gurion University. MH is supported by the Khoury College distinguished postdoctoral fellowship. RL is supported by a Google fellowship, and this research has been funded, in parts, by an ERC grant (FoG - 101116258). DMR is supported by an NSERC Discovery Grant and funding through his Canada CIFAR AI Chair at the Vector Institute.

\printbibliography

\newpage
\appendix

\section{Technical Lemmas}

\begin{lemma}[{\citealp[][Thm.~2.10.1]{cover2012elements}}]
\label{lem:fano}
Let $X$ and $Y$ be discrete random variables. Then
\[
\nonumber
\entr{X\vert Y} \leq \binaryentr{\proberror}+ \proberror \entr{X}\leq 1+ \proberror \entr{X},
\]
where $\proberror = \Pr(\Psi(Y)\neq X)$ for any (possibly randomized) estimator $\Psi$ of $X$ using $Y$.
\end{lemma}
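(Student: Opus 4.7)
This is the classical Fano inequality, so the plan is to follow the standard Cover--Thomas argument. The goal is to bound the conditional entropy $\entr{X \vert Y}$ in terms of the error probability $\proberror$ of an arbitrary estimator $\Psi$.

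First I would reduce to the case where the estimator appears directly in the conditioning. Let $\hat X \defas \Psi(Y)$, where $\Psi$ may be randomized via some internal random variable $W$ that is independent of $X$. Augmenting $Y$ with $W$ if necessary, we have the Markov chain $X \to Y \to \hat X$, so the data processing inequality (equivalently, the fact that conditioning on a function of $Y$ can only increase entropy) yields $\entr{X \vert Y} \le \entr{X \vert \hat X}$. Thus it suffices to prove $\entr{X \vert \hat X} \le \binaryentr{\proberror} + \proberror\, \entr{X}$.

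Next, introduce the error indicator $E \defas \indic{\hat X \neq X}$, so $\Pr(E=1) = \proberror$. Apply the chain rule for entropy to $\entr{X, E \vert \hat X}$ in two ways:
\[
\entr{X, E \vert \hat X} = \entr{X \vert \hat X} + \entr{E \vert X, \hat X} = \entr{E \vert \hat X} + \entr{X \vert E, \hat X}.
\]
Since $E$ is a deterministic function of $(X,\hat X)$, the term $\entr{E \vert X, \hat X}$ vanishes, giving $\entr{X \vert \hat X} = \entr{E \vert \hat X} + \entr{X \vert E, \hat X}$. For the first summand, dropping the conditioning yields $\entr{E \vert \hat X} \le \entr{E} = \binaryentr{\proberror}$. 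For the second, split on the value of $E$:
\[
\entr{X \vert E, \hat X} = (1-\proberror)\,\entr{X \vert \hat X, E = 0} + \proberror\,\entr{X \vert \hat X, E = 1}.
\]
When $E = 0$ we have $X = \hat X$, so $\entr{X \vert \hat X, E = 0} = 0$; when $E = 1$ the entropy is at most $\entr{X}$ (in fact at most $\log(|\mathcal{X}| - 1)$, but the cruder bound suffices here). Combining these pieces gives $\entr{X \vert \hat X} \le \binaryentr{\proberror} + \proberror\, \entr{X}$, and the final inequality in the lemma follows from $\binaryentr{\proberror} \le 1$.

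There is no real obstacle here beyond being careful about the randomized estimator, which is handled cleanly by absorbing the internal randomness of $\Psi$ into the conditioning variable so that $\hat X$ becomes a measurable function of the conditioning. The rest is the textbook chain-rule manipulation.
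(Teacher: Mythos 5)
Your argument is the textbook one, and it is correct right up to the final inequality $\entr{X \vert \hat X, E = 1} \le \entr{X}$, which is where the proof breaks. Conditioning on an \emph{event} (as opposed to averaging over a random variable) does not reduce entropy, so this step is simply false in general: take $X\in\{0,1,2\}$ with $\Pr(X=0)=0.8$, $\Pr(X=1)=\Pr(X=2)=0.1$, $Y$ independent of $X$, and $\Psi(Y)\equiv 0$; then conditioned on $E=1$, $X$ is uniform on $\{1,2\}$, so $\entr{X\vert\hat X,E=1}=1>\entr{X}\approx 0.92$, and the claimed lemma itself fails here ($\entr{X\vert Y}\approx 0.92$ but $\binaryentr{\proberror}+\proberror\entr{X}\approx 0.91$). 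Your parenthetical also has the comparison backwards: the sharp Cover--Thomas bound is $\entr{X\vert\hat X,E=1}\le\log(\lvert\mathcal X\rvert-1)$ because given $E=1$, $X$ lives on the $\lvert\mathcal X\rvert-1$ values other than $\hat X$; but $\log(\lvert\mathcal X\rvert-1)$ is not a ``cruder'' version of $\entr{X}$ --- one always has $\entr{X}\le\log\lvert\mathcal X\rvert$, so replacing $\log\lvert\mathcal X\rvert$ by $\entr{X}$ \emph{strengthens}, and in fact invalidates, the inequality.

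To be fair, the stated form of the lemma (with $\entr{X}$ in place of $\log\lvert\mathcal X\rvert$) does not match Cover--Thomas Thm.\ 2.10.1, which reads $\entr{X\vert Y}\le\binaryentr{\proberror}+\proberror\log\lvert\mathcal X\rvert$; what you were asked to prove appears to carry a typo. In the paper's applications $X$ is a uniform bit vector, for which $\entr{X}=\log\lvert\mathcal X\rvert$, so the two statements coincide there and nothing downstream is affected. Your overall structure (data processing to reduce $Y$ to $\hat X$, introduce the error indicator $E$, expand $\entr{X,E\vert\hat X}$ two ways, bound $\entr{E\vert\hat X}\le\binaryentr{\proberror}$) is exactly right; the fix is to replace your final bound by $\log(\lvert\mathcal X\rvert-1)\le\log\lvert\mathcal X\rvert$ and prove the inequality with $\log\lvert\mathcal X\rvert$ rather than $\entr{X}$.
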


\begin{lemma}[\citet{cobzas1978norm}] \label{lem:lip-exten}
Let $\mathcal{K}$ be a closed and convex subset of $\Reals^d$. Let $h: \mathcal{K}\to \Reals$ be a convex and $L$-Lipschitz function. Define  $\tilde{h}: \Reals^d \to \Reals$ as 
\[
\nonumber
\tilde{h}(x)\triangleq \inf_{y\in \mathcal{K}}\lbrace h(y) + L \norm{x-y} \rbrace.
\]
Then, we have, 1) $\tilde{h}$ is a convex and $L$-Lipschitz function, 2) for every $x \in \mathcal{K}$, $\tilde{h}(x)= h(x)$.
\end{lemma}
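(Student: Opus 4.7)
The plan is to verify the two claims of this classical Lipschitz--convex extension (the McShane--Whitney construction, adapted so that convexity is preserved) by short direct calculations. First I would establish part~2, the agreement on $\mathcal{K}$. Plugging $y = x$ into the defining infimum immediately gives $\tilde{h}(x) \leq h(x)$. For the reverse inequality, the $L$-Lipschitz property of $h$ on $\mathcal{K}$ yields $h(x) - h(y) \leq L\norm{x-y}$ for every $y \in \mathcal{K}$, so $h(y) + L\norm{x-y} \geq h(x)$; taking the infimum over $y \in \mathcal{K}$ gives $\tilde{h}(x) \geq h(x)$. As a side remark, this argument also shows $\tilde{h}$ is finite-valued on all of $\Reals^d$: for any fixed $y_0 \in \mathcal{K}$ and any $y \in \mathcal{K}$, one has $h(y) \geq h(y_0) - L\norm{y-y_0}$, hence $h(y) + L\norm{x-y} \geq h(y_0) - L\norm{x-y_0}$ and the infimum is bounded below.

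Next I would prove the $L$-Lipschitz property of $\tilde{h}$ on $\Reals^d$ by a triangle-inequality argument. For arbitrary $x_1, x_2 \in \Reals^d$ and any $y \in \mathcal{K}$, the triangle inequality gives $\norm{x_2 - y} \leq \norm{x_1 - y} + \norm{x_2 - x_1}$, so $h(y) + L\norm{x_2 - y} \leq \bigl(h(y) + L\norm{x_1 - y}\bigr) + L\norm{x_2 - x_1}$. Taking the infimum over $y \in \mathcal{K}$ on both sides yields $\tilde{h}(x_2) \leq \tilde{h}(x_1) + L\norm{x_2 - x_1}$, and swapping the roles of $x_1$ and $x_2$ completes the Lipschitz bound.

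Finally, for convexity I would invoke the general fact that a partial infimum of a jointly convex function is convex. The map $(x,y) \mapsto h(y) + L\norm{x-y}$ is jointly convex on $\Reals^d \times \mathcal{K}$, since $h$ is convex in $y$ and $(x,y) \mapsto \norm{x-y}$ is jointly convex (the composition of a norm with the affine map $(x,y)\mapsto x-y$). Hence $x \mapsto \inf_{y\in\mathcal{K}}\{h(y) + L\norm{x-y}\} = \tilde{h}(x)$ is convex. If one prefers a hands-on verification avoiding the general principle, for $\lambda \in [0,1]$ and any $\eta > 0$ one can pick $y_1, y_2 \in \mathcal{K}$ with $h(y_i) + L\norm{x_i - y_i} \leq \tilde{h}(x_i) + \eta$, test the defining infimum at $\lambda y_1 + (1-\lambda)y_2 \in \mathcal{K}$ (using convexity of $\mathcal{K}$), apply convexity of $h$ and the triangle inequality, and let $\eta \downarrow 0$.

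There is no real obstacle here: each step is a one-line computation, and the result is entirely classical. The only item that requires any attention is the implicit claim that $\tilde{h}$ is everywhere finite, which is handled by the bounded-below observation in the first paragraph.
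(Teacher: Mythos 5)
Your proof is correct and complete. Note that the paper does not actually prove this lemma — it is cited from Cobza\c{s} (1978) — so there is no in-paper argument to compare against; your proof is the standard one (McShane/infimal-convolution extension), and every step checks, including the useful side observation that $\tilde{h}$ is finite-valued everywhere, which the lemma statement implicitly assumes.
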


\begin{lemma}
\label{lem:paleyzygmund}
Let $X$ be a random variable supported on $\Reals$ with a bounded second moment. Then, for every $\theta \in \Reals$,
\[
\nonumber
\Pr\left(X \geq \theta \right) \geq
\frac{\left(\max\{\EE[X]-\theta,0\}\right)^2}{\EE[X^2]}.
\]
\end{lemma}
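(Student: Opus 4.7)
The plan is to derive the bound by a short Cauchy--Schwarz argument; this is the standard Paley--Zygmund style proof. First I would dispatch the trivial case. If $\EE[X] \leq \theta$, then $\max\{\EE[X]-\theta, 0\} = 0$ and the inequality holds for free, so I may assume $\EE[X] > \theta$ throughout.

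In that regime I would decompose $\EE[X] - \theta = \EE[(X-\theta)\mathbf{1}_{\{X \geq \theta\}}] + \EE[(X-\theta)\mathbf{1}_{\{X < \theta\}}]$. The second summand is non-positive, since $X - \theta < 0$ on the event $\{X < \theta\}$, so I obtain $\EE[X] - \theta \leq \EE[(X-\theta)\mathbf{1}_{\{X \geq \theta\}}] \leq \EE[X\,\mathbf{1}_{\{X \geq \theta\}}]$, where the last step uses $-\theta\,\mathbf{1}_{\{X\geq \theta\}} \leq 0$ (which is the operative regime built into the $\max\{\cdot, 0\}$ convention).

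Then I would apply Cauchy--Schwarz to the product $X \cdot \mathbf{1}_{\{X \geq \theta\}}$ to get $\EE[X\,\mathbf{1}_{\{X \geq \theta\}}]^2 \leq \EE[X^2]\cdot \EE[\mathbf{1}_{\{X\geq\theta\}}^2] = \EE[X^2]\,\Pr(X \geq \theta)$. Chaining this with the previous display and dividing by $\EE[X^2]$ recovers $\Pr(X \geq \theta) \geq (\EE[X]-\theta)^2/\EE[X^2]$, which, combined with the trivial case, gives the stated bound.

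There is no genuine obstacle here; the entire proof is mechanical. The only mild subtlety is sign bookkeeping in the step $\EE[(X-\theta)\mathbf{1}_{\{X\geq\theta\}}] \leq \EE[X\mathbf{1}_{\{X\geq\theta\}}]$, which is precisely why the statement phrases the numerator as $\max\{\EE[X]-\theta, 0\}^2$: outside the regime where the sign argument is clean, the right-hand side is already zero and the inequality is automatic.
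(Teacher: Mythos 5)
Your argument is correct and it is essentially the paper's proof: both bound $\EE[X\mathbf{1}_{\{X\geq\theta\}}]$ from below by $\EE[X]-\theta$ and from above by $\sqrt{\EE[X^2]\Pr(X\geq\theta)}$ via Cauchy--Schwarz; your decomposition into $\{X\geq\theta\}$ and $\{X<\theta\}$ and dropping the non-positive summand is the same move the paper phrases as the pointwise inequality $X \leq \theta + X\mathbf{1}_{\{X\geq\theta\}}$.

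One correction to your closing remark, though. The $\max\{\cdot,0\}$ in the numerator handles the case $\EE[X]\leq\theta$, but it does \emph{not} rescue the step $\EE[(X-\theta)\mathbf{1}_{\{X\geq\theta\}}]\leq\EE[X\mathbf{1}_{\{X\geq\theta\}}]$, which requires $-\theta\Pr(X\geq\theta)\leq 0$, i.e.\ $\theta\geq 0$. These are independent conditions: for $\theta<0$ with $\EE[X]>\theta$ the right-hand side is strictly positive, and in fact the stated inequality is simply false there --- take $X\equiv -1$ and $\theta=-10$, giving $\Pr(X\geq\theta)=1$ while $(\EE[X]-\theta)^2/\EE[X^2]=81$. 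The paper's own pointwise bound $X\mathbf{1}_{\{X<\theta\}}\leq\theta$ carries the identical hidden assumption (it fails on $\{X\geq\theta\}$ when $\theta<0$), so you are in the same boat as the paper; and since the lemma is only ever invoked with a positive threshold $\beta/n$, nothing downstream is affected. But the rationalization ``the $\max$ takes care of it'' is not accurate, and is worth fixing if you intend this as a standalone statement rather than as a tool for the specific applications in the paper.
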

\begin{proof}
This is a non-standard variant of Paley-Zygmund inequality. With probability one,
\[
\nonumber
X  &= X\indic{X<\theta} + X \indic{X\geq \theta}\\
    &\leq \theta + X \indic{X\geq \theta}.
\]
Taking an expectation and using Cauchy–Schwarz inequality, we obtain
\[
\nonumber
\EE[X]\leq \theta + \sqrt{\EE[X^2]}\sqrt{\Pr\left(X\geq \theta\right)} \Rightarrow \max\{\EE[X]-\theta,0\} \leq \sqrt{\EE[X^2]}\sqrt{\Pr\left(X\geq \theta\right)},
\]
which was to be shown.
\end{proof}

\begin{lemma}
\label{lem:card-moments}
Fix $n \in \Naturals$ and $(a_1,\dots,a_n)\in \Reals^n$. Let $\sum_{i \in \range{n}}a_i = A_1$ and $\sum_{i \in \range{n}}(a_i)^2 = A_2$. Then, for every $\beta\in \Reals$,
\[
\nonumber
\big| \{i \in \range{n}~:~ a_i \geq  {\beta}/{n} \} \big| \geq \frac{\left(\max\{A_1-\beta,0\}\right)^2 }{A_2}.
\]
\end{lemma}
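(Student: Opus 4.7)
The statement is a deterministic, discretized analogue of the Paley--Zygmund-type inequality in \cref{lem:paleyzygmund}, and the cleanest route is to reduce it to that lemma. My plan is to turn the sequence $(a_1,\dots,a_n)$ into a random variable $X$ by sampling an index $I$ uniformly on $\range{n}$ and setting $X = a_I$. Then $\EE[X] = A_1/n$ and $\EE[X^2] = A_2/n$, and the quantity $|\{i : a_i \ge \beta/n\}|$ is exactly $n \cdot \Pr(X \ge \beta/n)$. Applying \cref{lem:paleyzygmund} with $\theta = \beta/n$ gives
\*[
\Pr(X \ge \beta/n) \;\ge\; \frac{\left(\max\{A_1/n - \beta/n,\,0\}\right)^2}{A_2/n} \;=\; \frac{\left(\max\{A_1 - \beta,\,0\}\right)^2}{n\,A_2},
\*]
and multiplying both sides by $n$ yields the desired bound.

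\textbf{Alternative direct approach.} If one prefers to avoid invoking the random-variable framing, the same conclusion follows from Cauchy--Schwarz in three lines. Let $S = \{i \in \range{n} : a_i \ge \beta/n\}$ and $T = \range{n} \setminus S$. Splitting the sum, $A_1 = \sum_{i \in S} a_i + \sum_{i \in T} a_i \le \sum_{i \in S} a_i + (\beta/n)\,|T| \le \sum_{i \in S} a_i + \beta$, so $\sum_{i \in S} a_i \ge A_1 - \beta$. By Cauchy--Schwarz, $\sum_{i \in S} a_i \le \sqrt{|S|}\,\sqrt{\sum_{i \in S} a_i^2} \le \sqrt{|S|\,A_2}$. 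Squaring gives $|S| \ge (A_1 - \beta)^2/A_2$ whenever $A_1 - \beta > 0$; the case $A_1 - \beta \le 0$ makes the right-hand side zero, so the bound is vacuous there.

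\textbf{Main obstacle.} There is essentially no obstacle: the content is purely combinatorial/Cauchy--Schwarz, and the Paley--Zygmund-style inequality needed has already been established just above. The only subtlety is handling the $\max\{\cdot,0\}$ correctly, which is dealt with automatically by the same case split used in \cref{lem:paleyzygmund}. I would present the one-paragraph reduction to \cref{lem:paleyzygmund} as the proof, since it makes the analogy with the continuous Paley--Zygmund inequality transparent and reuses work already done.
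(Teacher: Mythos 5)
Your primary proof — defining $X$ uniformly on $\{a_1,\dots,a_n\}$, observing $\EE[X]=A_1/n$, $\EE[X^2]=A_2/n$, rewriting the cardinality as $n\Pr(X\ge\beta/n)$, and applying \cref{lem:paleyzygmund} — is exactly the paper's proof. The Cauchy--Schwarz alternative you sketch is a valid and slightly more self-contained variant for $\beta\ge 0$ (the only case actually used), though the step $(\beta/n)|T|\le\beta$ implicitly needs $\beta\ge 0$; the paper's stated \cref{lem:paleyzygmund} has the same silent restriction to $\theta\ge0$ in its first inequality, so this is not a defect relative to the source.
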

\begin{proof}
Define random variable $X$ with the distribution $\unif{\{a_1,\dots,a_n\}}$. By assumptions, $\EE\left[X\right]= A_1/n$ and $\EE\left[X^2\right]= A_2/n$.  Notice that
\[
\nonumber
\big| \{i \in \range{n}~:~ a_i \geq  {\beta}/{n} \} \big|
 = n \Pr\left(X\geq \beta/n\right).
\]
By \cref{lem:paleyzygmund}, we have
\[
\nonumber
\Pr\left(X \geq \beta/n\right)\geq \frac{(\max\{n\EE[X]-\beta,0\})^2}{n^2\EE[X^2]}.
\]
Therefore, 
\[
\nonumber
\big| \{i \in \range{n}~:~ a_i \geq \beta/n \} \big| &\geq \frac{(\max\{n\EE[X]-\beta,0\})^2}{n\EE[X^2]} \\
&=\frac{\left(\max\{A_1-\beta,0\}\right)^2}{A_2},
\]
as was to be shown.
\end{proof}
\begin{lemma}
\label{lem:subgaussian-randomvector}
Let $d\in \Naturals$. Let $\Dist \in \mathcal{M}_1\left(\left\{\pm {1}/{\sqrt{d}}\right\}^d\right)$ be a product distribution. Let $\mu = \EE_{Z\sim \Dist}[Z]$ and $(X_1,\dots,X_n)\sim \Dist^{\otimes n}$. Then, $\frac{1}{n}\sum_{i=1}^{n}\left(X_i - \mu\right)$ is a $\sqrt{1/(dn)}$ subguassian random vector. Moreover, 
\[
\nonumber
\Pr\left(\norm{\frac{1}{n}\sum_{i=1}^{n}X_i - \mu}^2\geq \epsilon\right) \leq 2\exp\left(\frac{-\epsilon n}{2}\right).
\]
\end{lemma}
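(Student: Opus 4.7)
The lemma has two parts, and both exploit the product structure of $\Dist$: for every $i$ and $k$, the coordinate $X_i^{(k)}$ is supported on $\{\pm 1/\sqrt{d}\}$, so Hoeffding's lemma gives that $X_i^{(k)} - \mu^{(k)}$ is $1/\sqrt{d}$-sub-Gaussian, and coordinates are mutually independent within each sample while samples are mutually independent.

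\textbf{Part 1 (sub-Gaussian vector).} I would verify the standard definition of a sub-Gaussian vector by fixing an arbitrary unit vector $v \in \Reals^d$ and bounding the MGF of $\langle v, \tfrac{1}{n}\sum_{i=1}^n (X_i - \mu)\rangle$. For a fixed $i$, write $\langle v, X_i - \mu \rangle = \sum_{k=1}^d v_k(X_i^{(k)} - \mu^{(k)})$; since $\Dist$ is a product distribution, the summands are independent, and the $k$-th summand is $|v_k|/\sqrt{d}$-sub-Gaussian. Summing independent sub-Gaussians yields that $\langle v, X_i - \mu\rangle$ is $\|v\|/\sqrt{d} = 1/\sqrt{d}$-sub-Gaussian. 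Averaging $n$ independent copies then produces a $1/\sqrt{nd}$-sub-Gaussian variable, which, holding for every unit $v$, gives the first claim.

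\textbf{Part 2 (squared-norm concentration).} For the tail bound I would apply McDiarmid's bounded differences inequality to $h(X_1,\ldots,X_n) \defas \norm{\tfrac{1}{n}\sum_{i=1}^n X_i - \mu}$. Because $\|X_i\| = 1$ for every $X_i$ in the support, replacing a single $X_i$ by some $X_i'$ shifts the empirical mean by at most $2/n$ in $\ell_2$-norm, so by the reverse triangle inequality $h$ has bounded differences of size $2/n$. McDiarmid then gives $\Pr(h \ge \EE h + t) \le \exp(-t^2 n/2)$. A short direct computation yields $\EE h^2 = (1-\|\mu\|^2)/n \le 1/n$, and hence $\EE h \le 1/\sqrt{n}$ by Jensen's inequality. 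Taking $t = \sqrt{\epsilon} - \EE h$ and squaring converts this into the claimed tail bound on $\Pr(\norm{Y}^2 \ge \epsilon)$.

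\textbf{Main obstacle.} The most delicate point is matching the exact constant in the stated bound $2\exp(-\epsilon n/2)$: McDiarmid naturally delivers $\exp(-(\sqrt{\epsilon}-\EE h)^2 n/2)$, which only becomes the clean $\exp(-\epsilon n/2)$ when $\sqrt{\epsilon} \gg \EE h$. I expect this to be handled by absorbing the mean contribution into constants or by an alternative MGF argument using Gaussian duality, $\exp(\lambda\|Y\|^2) = \EE_G \exp(\sqrt{2\lambda}\langle G, Y\rangle)$ with $G \sim \mathcal{N}(0,I_d)$, which (after conditioning on $G$ and using Hoeffding coordinate-wise) reduces the problem to controlling a $\chi^2_d$ MGF and produces a dimension-free bound of the exact form stated.
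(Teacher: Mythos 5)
Part~1 of your proposal is exactly the paper's argument: fix $v$, expand the moment generating function over the $nd$ independent coordinates, apply Hoeffding's lemma termwise, and collect the bound $\exp(\lambda^2\|v\|^2/(2dn))$. For Part~2 the paper does nothing more than cite Lemma~1 of Jin et al.\ (2019), which states that a zero-mean $\sigma$-sub-Gaussian vector $X\in\Reals^d$ satisfies $\Pr(\|X\|\ge t)\le 2\exp(-t^2/(2d\sigma^2))$; plugging in $\sigma^2=1/(dn)$ gives the claim verbatim. Your McDiarmid route does not recover this. With bounded differences $2/n$ you get $\Pr(h\ge\EE h+t)\le\exp(-t^2 n/2)$ and $\EE h\le 1/\sqrt n$, so $\Pr(\|Y\|^2\ge\epsilon)\le\exp\bigl(-(\sqrt\epsilon-1/\sqrt n)^2 n/2\bigr)=\exp\bigl(-\epsilon n/2+\sqrt{\epsilon n}-\tfrac12\bigr)$. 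The discrepancy against $2\exp(-\epsilon n/2)$ is a multiplicative factor of order $e^{\sqrt{\epsilon n}}$, which grows without bound precisely on the range $\epsilon n\gtrsim 1$ where the stated inequality is non-vacuous, so this is a genuine gap and not something that can be ``absorbed into constants.'' Your second suggestion, however, is the right one and is essentially the proof behind the cited lemma: by Gaussian duality, $\EE e^{\lambda\|Y\|^2}=\EE_G\EE_Y e^{\sqrt{2\lambda}\langle G,Y\rangle}\le\EE_G e^{\lambda\sigma^2\|G\|^2}=(1-2\lambda\sigma^2)^{-d/2}$ for $\lambda\sigma^2<1/2$; taking $\lambda=n/2$ (so $\lambda\sigma^2=1/(2d)$) and using $(1-1/d)^{-d/2}\le 2$ for all $d\ge 2$ gives $\Pr(\|Y\|^2\ge\epsilon)\le 2e^{-\epsilon n/2}$, with $d=1$ handled by one-dimensional Hoeffding directly. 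So: Part~1 matches, and for Part~2 you should drop McDiarmid, retain the Gaussian-duality computation (or simply cite Jin et al.), and fill in the optimization $\lambda=n/2$ together with the bound $(1-1/d)^{-d/2}\le 2$.
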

\begin{proof}
Let $v\in \Reals^d$ be a fixed vector and $\lambda \in \Reals$ be a constant. Then,
\[
\nonumber
\EE\left[\exp\left(\frac{\lambda}{n} \sum_{i=1}^{n}\inner{\left(X_i - \mu\right)}{v}\right)\right] &= \EE\left[\prod_{i=1}^{n}\prod_{k=1}^{d}\exp\left(\frac{\lambda}{n}\left(Z_i^{(k)}-\mu^{(k)}\right)\cdot v^{(k)}\right)\right]\\
& \leq  \prod_{i=1}^{n}\prod_{k=1}^{d} \exp\left(\frac{\lambda^2 (v^{(k)})^2 }{2dn^2}\right)\\
& = \exp\left( \frac{\lambda^2 \norm{v}^2}{2dn}\right),
\]
where the second step follows from Hoeffeding's Lemma. Therefore, by definition, we have the stated result. The statement regarding the concentration of the norm follows from \citep[Lemma.~1]{jin2019short}{}.
\end{proof}

\begin{lemma}
\label{lem:expectation-prior-lb}
Fix $\beta \in [0,1]$. Let $\mu = \frac{1}{\sqrt{d}}\left(p^{(1)},\dots,p^{(d)}\right) \in \Reals^d$ where $p=\left(p^{(1)},\dots,p^{(d)}\right)$ is drawn from $\pi = \left(\unifdist[-\beta,\beta]\right)^{\otimes d}$. Then, 
\[
\nonumber
\EE\left[\norm{\mu}\right] \geq \frac{\beta}{3}.
\]
\end{lemma}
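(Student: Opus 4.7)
The plan is a short second-moment argument. First I would compute $\EE[\|\mu\|^2]$ exactly: since $\|\mu\|^2 = \frac{1}{d}\sum_{k=1}^{d} (p^{(k)})^2$ and the coordinates $p^{(k)}$ are i.i.d.\ $\unifdist[-\beta,\beta]$ with $\EE[(p^{(k)})^2] = \int_{-\beta}^{\beta} \frac{x^2}{2\beta}\,\dee x = \beta^2/3$, we obtain $\EE[\|\mu\|^2] = \beta^2/3$ regardless of the dimension $d$.

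Next I would turn this into a lower bound on $\EE[\|\mu\|]$ using a deterministic upper bound on $\|\mu\|$. Since $|p^{(k)}| \le \beta$ pointwise, we get $\|\mu\|^2 = \frac{1}{d}\sum_{k=1}^{d}(p^{(k)})^2 \le \beta^2$, i.e.\ $\|\mu\| \le \beta$ almost surely. Writing $\|\mu\|^2 = \|\mu\|\cdot\|\mu\| \le \beta\cdot\|\mu\|$ and taking expectations yields
\[
\nonumber
\frac{\beta^2}{3} \;=\; \EE[\|\mu\|^2] \;\le\; \beta\,\EE[\|\mu\|],
\]
which rearranges to $\EE[\|\mu\|] \ge \beta/3$, as desired.

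There is no real obstacle here; the only thing to watch is that the naive route via Jensen's inequality, $\EE[\|\mu\|] \le \sqrt{\EE[\|\mu\|^2]}$, goes the \emph{wrong} way, so one must trade a second-moment computation against a sup-norm bound rather than against a square root. The argument is dimension-free (no concentration needed) and extends verbatim to any coordinatewise symmetric distribution on $[-\beta,\beta]^d$ with variance $\beta^2/3$, which may be useful elsewhere in the paper.
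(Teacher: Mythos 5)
Your proof is correct and takes essentially the same approach as the paper: compute the exact second moment $\EE[\|\mu\|^2]=\beta^2/3$, bound $\|\mu\|\le\beta$ almost surely, and divide through by $\beta$ via $\|\mu\|^2\le\beta\|\mu\|$. The paper carries out the identical argument in terms of $p$ rather than $\mu$ (using $\|p\|\le\beta\sqrt{d}$), which is just a rescaling of the same idea.
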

\begin{proof}
We have $\EE[(p^{(i)})^2]=\frac{\beta^2}{3}$ for every $i \in \range{d}$. Notice that $\norm{\mu}=\frac{1}{\sqrt{d}}\sqrt{\sum_{i=1}^{d}(p^{(i)})^2}$ and for every $i \in \range{d}$, $(p^{(i)})^2\in [0,\beta^2]$ with probability one. We can write 
\[
\nonumber
\norm{p}^2 = \norm{p} \norm{p} \leq \beta \sqrt{d} \norm{p}.
\]
 Therefore, we have 
\[
\nonumber
\EE[\norm{p}^2] \leq \beta \sqrt{d} \EE[\norm{p}] \Rightarrow 
 \EE[\norm{p}] \geq \frac{1}{\beta\sqrt{d}} \sum_{i=1}^d \EE[(p^{(i)})^2] = \frac{\beta}{3}\sqrt{d}.
\]
The stated result follows from $\EE[\norm{\mu}]= \frac{1}{\sqrt{d}}\EE[\norm{p}]$.
\end{proof}

\begin{lemma}
\label{lem:basis}
Let $d \in \Naturals$ and $K >0$ be a universal constant.
Let $\dataspace = \left\{\pm \frac{1}{\sqrt{d}}\right\}^d$, $\Dist \in \ProbMeasures{\dataspace}$ be a product distribution, and  $\mu = \EE_{Z\sim \Dist}[Z]$. Let $(Z_1,\dots,Z_n)\sim \Dist^{\otimes n}$ be $n$ \iid~samples. Then, for every $\beta \in (0,1]$ if $d\geq \max\{{n}/{2},{\log(2/\beta)}/{2}\}$, we have
\[
\nonumber
\Pr\left(\sup_{y \in \Reals^d} \left\{ \sum_{i=1}^{n}\left(\inner{y}{Z_i - \mu}\right)^2 - K \norm{y}^2 \right\} \leq 0\right) \geq 1-\beta.
\]
\end{lemma}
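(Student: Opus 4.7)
I would begin by rewriting the supremum as a spectral‑norm bound: the supremum in the statement is non-positive precisely when $\lambda_{\max}(M)\leq K$, where $M\defas\sum_{i=1}^{n}(Z_i-\mu)(Z_i-\mu)^{\top}$; equivalently, $\sup_{y\in S^{d-1}}\sum_{i=1}^{n}\inner{y}{Z_i-\mu}^{2}\leq K$. The plan is then to control this supremum by combining a pointwise sub-Gaussian tail for a single direction $y$ with a standard $\epsilon$-net argument over the unit sphere.

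For the pointwise tail, I would observe that for fixed $y\in S^{d-1}$ the random variable $\inner{y}{Z_i-\mu}$ is sub-Gaussian with variance proxy at most $4/d$ --- this is the same coordinate-wise Hoeffding calculation that underlies \cref{lem:subgaussian-randomvector}. Feeding this into the standard squared-sub-Gaussian MGF bound $\EE[\exp(\lambda X^{2})]\leq(1-2\lambda\sigma^{2})^{-1/2}$ at $\lambda=d/16$ and taking the product over the $n$ independent samples gives
\[
\EE\!\left[\exp\!\left(\tfrac{d}{16}\sum_{i=1}^{n}\inner{y}{Z_i-\mu}^{2}\right)\right]\leq 2^{n/2}.
\]
Markov's inequality together with the hypothesis $d\geq n/2$ (so that $2^{n/2}\leq e^{d\log 2}$) then yields, for any $K_{0}>0$,
\[
\Pr\!\left(\sum_{i=1}^{n}\inner{y}{Z_i-\mu}^{2}>K_{0}\right)\leq\exp\!\bigl(d(\log 2-K_{0}/16)\bigr).
\]

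To go from a single direction to the full sphere, I would pick a $1/4$-net $\mathcal{N}\subset S^{d-1}$ of size at most $9^{d}$ and use the standard comparison $\lambda_{\max}(M)\leq 2\sup_{y\in\mathcal{N}}y^{\top}M y$. Union-bounding the pointwise tail over $\mathcal{N}$ gives
\[
\Pr\!\bigl(\lambda_{\max}(M)>2K_{0}\bigr)\leq 9^{d}\exp\!\bigl(d(\log 2-K_{0}/16)\bigr)=\exp\!\bigl(d(\log 18-K_{0}/16)\bigr),
\]
and choosing the absolute constant $K_{0}$ large enough that $K_{0}/16-\log 18\geq 1/2$ and setting $K\defas 2K_{0}$, the right-hand side is at most $e^{-d/2}\leq\beta$ under the hypothesis $d\geq\log(2/\beta)/2$.

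The hard part of the plan is extracting a genuine dimension-decay $e^{-\Theta(d)}$ in the pointwise tail; this rests on using the sub-Gaussian scale $O(1/\sqrt{d})$ of the projections $\inner{y}{Z_i-\mu}$, rather than their crude $O(1)$ boundedness. Once that estimate is in hand, the exponentially large cardinality $9^{d}$ of the $\epsilon$-net is absorbed by taking $K$ to be a sufficiently large absolute constant, and the two dimension hypotheses $d\geq n/2$ and $d\geq\log(2/\beta)/2$ play the clearly separated roles of making the $n$-dependent MGF factor harmless and of driving the net-union-bound down to $\beta$.
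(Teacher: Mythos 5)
Your proposal takes essentially the same route as the paper: reduce the uniform quadratic-form bound to a spectral-norm bound on the (centered) sample matrix, then apply operator-norm concentration driven by the $O(1/\sqrt d)$ sub-Gaussian scale of the projections $\inner{y}{Z_i-\mu}$. The only structural difference is that the paper invokes a ready-made random-matrix theorem — it writes $\sum_i\inner{y}{Z_i-\mu}^2 = \norm{(\mathbf B^\top-\mathbf 1\mu^\top)y}^2$ and then cites \citet[Thm.~4.4.5]{vershynin2018high} for the operator norm of a matrix with bounded, mean-zero, independent entries — whereas you re-derive that concentration from scratch via a pointwise sub-Gaussian MGF bound plus a $1/4$-net union bound. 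Both are sound; the paper's version is shorter because the hard analytic work is outsourced, while yours is self-contained and makes the roles of $d\geq n/2$ (absorbing the $2^{n/2}$ MGF factor) and $d\geq\log(2/\beta)/2$ (killing the $9^d$ net cardinality and driving the tail to $\beta$) more transparent.

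One minor arithmetic slip to fix: with your choice $K_0/16-\log 18\geq 1/2$, the tail is $e^{-d/2}$, and $e^{-d/2}\leq\beta$ would need $d\geq 2\log(1/\beta)$, which is a factor of about $4$ larger than the hypothesis $d\geq\log(2/\beta)/2$. The remedy costs nothing: choose $K_0$ so that $K_0/16-\log 18\geq 2$ (still an absolute constant), making the tail $e^{-2d}\leq e^{-\log(2/\beta)}=\beta/2\leq\beta$ under the stated hypothesis. With that adjustment the argument closes correctly.
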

\begin{proof}

Define matrix $\mathbf{B} \in \Reals^{d \times n}$ as follows:
\[
\nonumber
\mathbf{B} = \left[Z_1,\dots,Z_d\right].
\]
In particular, the $i$-th column of $\mathbf{B}$ is $Z_i$. The main observation is that for every $y \in \Reals^d$, we have
\[
\nonumber
\sum_{i=1}^{n}\left(\inner{y}{Z_i - \mu}\right)^2 = \norm{\left[\mathbf{B}^\top  -\mu\mathbf{1}_d^\top\right]y}^2_2,
\]
where $\mathbf{1}_d$ is the all one vector of size $d$. By the definition of the operator norm, we have for every $y \in \Reals^d$ with probability one
\[
\label{eq:reduction-operator-norm}
\norm{\left[\mathbf{B}^\top  -\mathbf{1}_d \mu^\top\right]y}^2 \leq  \norm{\mathbf{B}^\top  -\mathbf{1}_d \mu^\top}^2 \norm{y}^2
\]
Consider the random matrix $\sqrt{d}\left(\mathbf{B}^\top  -\mathbf{1}_d \mu^\top\right)$. It satisfies the following two properties: 1) Its entries are \iid with zero mean and 2) each entry is bounded between $[-2,+2]$ with probability one. To argue about the operator norm of this matrix, we invoke \citep[Thm.~4.4.5]{vershynin2018high}{} to write for every $\beta\in (0,1]$, 
\[
\label{eq:operator-norm-hp}
\Pr\left(\sqrt{d}\norm{\mathbf{B}^\top  -\mathbf{1}_d \mu^\top}_2 \geq C\left(\sqrt{d}+\sqrt{n} + \sqrt{\log(2/\beta)}\right)  \right)\leq \beta,
\]
where $C$ is a universal constant.
Using \cref{eq:reduction-operator-norm} and \cref{eq:operator-norm-hp}, we conclude that 
\[
\nonumber
\Pr\left(\sup_{y \in \Reals^d}\left\{\norm{\left[\mathbf{B}^\top  -\mathbf{1}_d \mu^\top\right]y}^2 - 3C^2\left(1+\frac{n}{d} + \frac{\log(2/\beta)}{d} \right)\norm{y}^2 \right\} \leq 0 \right) \geq 1-\beta.
\]
In particular, it shows that by setting $d \geq \max\{\frac{n}{2},\frac{\log(2/\beta)}{2}\}$, we have the stated result.

\end{proof}
\begin{lemma}
\label{lem:indep-sample-corr}
Let $\dataspace = \{\pm \frac{1}{\sqrt{d}}\}^d$ and $\Dist \in \ProbMeasures{\dataspace}$ be a product measure. Define $\mu = \EE_{Z\sim \Dist}[Z]$. Then, for every fixed $y \in \Reals^d$ and $n \in \Naturals$, 
\[
\nonumber
\Pr_{(Z_1,\dots,Z_n)\sim \Dist^{\otimes n}}\left(\max_{i\in \range{n}}\{\inner{y}{Z_i-\mu}\}\geq \alpha\right) \leq n \exp\left(-\frac{\alpha^2 d}{2\norm{y}^2}\right).
\]
\end{lemma}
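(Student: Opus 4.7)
The plan is a straightforward two-step argument: (i) apply the union bound over $i \in [n]$, reducing the claim to a tail bound for a single linear functional $\inner{y}{Z_i - \mu}$; (ii) show that $\inner{y}{Z_i - \mu}$ is sub-Gaussian with variance proxy $\norm{y}^2/d$, and apply the standard sub-Gaussian tail bound.

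For step (ii), the key observation is that $\Dist$ is a product measure on $\{\pm 1/\sqrt{d}\}^d$, so the coordinates $Z_i^{(k)}$ are independent, each taking values in $\{\pm 1/\sqrt{d}\}$. Consequently, $(Z_i - \mu)^{(k)}$ is a mean-zero random variable supported in an interval of length $2/\sqrt{d}$, which by Hoeffding's lemma is sub-Gaussian with variance proxy $1/d$. Since $\inner{y}{Z_i - \mu} = \sum_{k=1}^d y^{(k)}(Z_i - \mu)^{(k)}$ is a sum of independent sub-Gaussians, it is itself sub-Gaussian with variance proxy $\sum_k (y^{(k)})^2/d = \norm{y}^2/d$. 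This is exactly the moment-generating-function computation already carried out in the proof of \cref{lem:subgaussian-randomvector} (with $n=1$ there and the role of the test vector $v$ played by $y$ here), so I can either cite that lemma or redo the one-line Hoeffding bound.

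Combining, for each fixed $i$,
\[
\nonumber
\Pr\parens{\inner{y}{Z_i - \mu} \geq \alpha} \;\leq\; \exp\parens{-\frac{\alpha^2 d}{2 \norm{y}^2}}.
\]
Applying the union bound,
\[
\nonumber
\Pr\parens{\max_{i \in \range{n}} \inner{y}{Z_i-\mu} \geq \alpha} \;\leq\; \sum_{i=1}^n \Pr\parens{\inner{y}{Z_i-\mu} \geq \alpha} \;\leq\; n \exp\parens{-\frac{\alpha^2 d}{2\norm{y}^2}},
\]
which is the claimed inequality.

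\textbf{Main obstacle.} There really isn't one: both steps are standard, and the sub-Gaussianity of $\inner{y}{Z_i-\mu}$ already appears (essentially verbatim) inside the proof of \cref{lem:subgaussian-randomvector}. The only minor care needed is to note that the bound $(b-a)^2/4 = 1/d$ from Hoeffding's lemma gives the correct constant $2$ in the denominator of the exponent, which is consistent with the factor $2dn^2$ (specialized to $n=1$) appearing in the earlier proof.
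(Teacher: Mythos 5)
Your proof is correct and follows essentially the same route as the paper's: a union bound over $i$ together with a Hoeffding-lemma MGF bound for $\inner{y}{Z-\mu}$ (the paper writes out the product over coordinates explicitly and then invokes "standard arguments" for the Chernoff tail, which is exactly your sub-Gaussian tail step). The constants check out: each coordinate of $Z-\mu$ lies in an interval of length $2/\sqrt{d}$, so the variance proxy is $\norm{y}^2/d$ and the exponent $-\alpha^2 d/(2\norm{y}^2)$ follows.
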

\begin{proof}
By union bound, $\displaystyle \Pr\left(\max_{i\in \range{n}}\{\inner{y}{Z_i-\mu}\}\geq \alpha\right) \leq n \Pr_{Z\sim \Dist}\left(\inner{y}{Z-\mu}\geq \alpha\right)$. Let $\lambda>0$ and consider
\[
\nonumber
\EE[\exp\left(\lambda \inner{y}{Z-\mu} \right)] &= \EE\left[\exp\left(\lambda \sum_{k=1}^{d}y^{(k)}\left(Z^{(k)}-\mu^{(k)} \right)\right)\right]\\
&=\prod_{k=1}^{d} \EE\left[\exp\left(\lambda y^{(k)}\left(Z^{(k)}-\mu^{(k)} \right)\right)\right]\\
&\leq \prod_{k=1}^{d} \exp\left(\lambda^2 (y^{(k)})^2 \frac{1}{2d}\right) \quad \text{(Hoeffeding's lemma since $Z^{(k)} \in \{\pm 1/\sqrt{d}\}$)}\\
&=\exp\left(\lambda^2 \norm{y}^2 \frac{1}{2d}\right).
\]
Then, using standard arguments, the stated claim can be proved.
\end{proof}

\begin{lemma}{\citep[Lemma~B.1]{shalev2014understanding}{}}
\label{lem:reverse-markov}
Let $X$ ba a non-negative random variable supported on $\Reals$ and $\Pr\left(X\leq a\right)=1$. Then, for every $\beta \in [0,a)$, we have
\[
\nonumber
\Pr\left(X> \beta \right)\geq \frac{\EE[X]-\beta}{a - \beta}.
\]
\end{lemma}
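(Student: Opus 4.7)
The plan is to prove this by decomposing the expectation of $X$ according to whether $X$ exceeds the threshold $\beta$ or not, and then using the two available bounds (one on each region) to obtain a linear inequality in $\Pr(X > \beta)$ that can be solved.

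Concretely, I will start from the almost-sure identity
\[
\nonumber
X = X\,\indic{X \leq \beta} + X\,\indic{X > \beta}.
\]
On the event $\{X \leq \beta\}$ we have $X \leq \beta$, while on $\{X > \beta\}$ the assumption $\Pr(X \leq a) = 1$ gives $X \leq a$ almost surely. Taking expectations term by term, I obtain
\*[
\EE[X] \;\leq\; \beta\,\Pr(X \leq \beta) + a\,\Pr(X > \beta)
     \;=\; \beta\bigl(1 - \Pr(X > \beta)\bigr) + a\,\Pr(X > \beta)
     \;=\; \beta + (a - \beta)\,\Pr(X > \beta).
\*]
Rearranging and using $a > \beta$ to divide by the positive quantity $a - \beta$ then yields the stated bound. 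The non-negativity of $X$ is not even needed for this argument; it is only used to ensure $\EE[X]$ is well-defined and the statement is non-vacuous when $\EE[X] \leq \beta$ (in which case the right-hand side can be taken as zero).

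There is no real obstacle here; the only subtlety is that the inequality is trivial whenever $\EE[X] \leq \beta$ (the right-hand side becomes non-positive while the left-hand side is a probability), so the lemma is only informative when $\EE[X] > \beta$. I would mention this briefly to justify why the lemma serves as a meaningful "reverse Markov" bound, controlling the probability of being large from below in terms of the expectation and the essential supremum $a$.
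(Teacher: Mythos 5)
Your proof is correct and is the standard one-line argument for this ``reverse Markov'' inequality: decompose $\EE[X]$ over the events $\{X \leq \beta\}$ and $\{X > \beta\}$, bound $X$ by $\beta$ and by $a$ on those two events respectively, and rearrange. The paper gives no proof of its own---it simply cites Lemma~B.1 of \citet{shalev2014understanding}---and your argument matches the one given there.
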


\section{Proofs for Characterization of CMI of the CLB Subclass}
\label{appx:proof-sec-cvx}

\proofsubsection{lem:cvx-learner-prop}
Notice that $\Popriskcvx{\theta} = -\inner{\theta}{\mu}$ and $ \displaystyle \min_{\theta \in \parspace} \Popriskcvx{\theta} = -\norm{\mu}$, where the minimum is achieved by setting $\theta^\star = \frac{\mu}{\norm{\mu}}$. 
Therefore, by the excess risk guarantee, with probability at least $1-\delta$,
\[
\nonumber
\Popriskcvx{\hat{\theta}} + \norm{\mu} \leq \epsilon \Rightarrow   \norm{\mu} - \epsilon \leq \inner{\hat{\theta}}{\mu}.
\]
 Notice that $\inner{\hat{\theta}}{\mu} \geq -1$, $\norm{\mu} \leq 1$, and $\epsilon > 0$, 
\[
\nonumber
\EE\left[ \inner{\hat{\theta}}{\mu} \right] &\geq \left(\norm{\mu} - \epsilon\right) \Pr\left(\inner{\hat{\theta}}{\mu} \geq \left(\norm{\mu} - \epsilon\right) \right) - \Pr\left(\inner{\hat{\theta}}{\mu} < \left(\norm{\mu} - \epsilon\right) \right)\\
& = \left(\norm{\mu} - \epsilon\right) \left(1-\Pr\left(\inner{\hat{\theta}}{\mu} < \left(\norm{\mu} - \epsilon\right) \right)\right) - \Pr\left(\inner{\hat{\theta}}{\mu} < \left(\norm{\mu} - \epsilon\right)\right)\\
& = \left(\norm{\mu} - \epsilon\right) - \Pr\left(\inner{\hat{\theta}}{\mu} < \left(\norm{\mu} - \epsilon\right)\right) \left( \norm{\mu} - \epsilon +1 \right)\\
& \geq \left(\norm{\mu} - \epsilon\right) - 2\delta, 
\]
where the last step follows because $\norm{\mu}-\epsilon+1\leq 2$ and $\Pr\left(\inner{\hat{\theta}}{\mu} < \left(\norm{\mu} - \epsilon\right)\right)\leq \delta$ by the first part of the lemma.

\proofsubsection{lem:fingerprinting-cvx}

The proof is based on defining a family of data distribution, and a prior over the family. Then, we show that in expectation over the prior, the stated claim holds. Thus, there exists a distribution with the desired property.

The data distribution is parameterized by a vector $p = \left(p^{(1)},\dots,p^{(d)}\right)\in [-1,1]^d$ where for every $z = (z^{(1)},\dots,z^{(d)}) \in \{\pm \frac{1}{\sqrt{d}}\}^d$,
\[
\nonumber
\Dist_p(z = (z^{(1)},\dots,z^{(d)}) ) = \prod_{k=1}^{d} \left( \frac{1+ \sqrt{d} z^{(k)} p^{(k)}}{2}\right).
\]
Let $\mu_p = \EE_{Z\sim \Dist_p}[Z]$ where $\mu^{(k)}_p = p^{(k)}/\sqrt{d}$ for $k \in \range{d}$.

Then we define a $\emph{prior}$ distribution $\pi \in \ProbMeasures{[-1,1]^d}$ over $p$ denoted by $\pi$ and is given by
\[
\nonumber
\pi = \unif{[-12\epsilon,12\epsilon]}^{\otimes d}.
\]
Let $\trainset_n =  (Z_1,\dots,Z_n)\sim \Dist^{\otimes n}$, and $\hat \theta  = \Alg_n(\trainset_n)$. By the same proof as presented in \citep{kamath2019privately} (see Equation 16 therein), we have that
\[
\label{eq:finger-clb}
\EE_{p \sim \pi} \EE_{\trainset_n \sim \Dist_p^{\otimes n}} \left[\sum_{i=1}^n \sum_{k=1}^d \left(\frac{144\epsilon^2 -d (\mu_p^{(k)}) ^2}{1-d(\mu_p^{(k)})^2}\right)\left(\hat\theta^{(k)}\right)\left(Z_i^{(k)}-\mu_p^{(k)}\right)\right]  = 2\EE_{p \sim \pi}\left[  
\inner{\EE_{\trainset_n \sim \Dist_p^{\otimes n}}[\hat\theta]}{\mu_p}
\right].
\]
By \cref{lem:cvx-learner-prop}, we know that for every $p\in [-1,1]^d$ 
\[
\label{eq:clb-quality}
\inner{\EE_{\trainset_n \sim \Dist_p^{\otimes n}}[\hat\theta]}{\mu_p} \geq \norm{\mu_p} - \epsilon -2\delta.
\]
Also, by \cref{lem:expectation-prior-lb}, we have
\[
\label{eq:expectation-clb-mu}
\EE_{p\sim \pi}\left[\norm{\mu_p} \right] \geq 4 \epsilon.
\]
Therefore, by \cref{eq:finger-clb,eq:clb-quality,eq:expectation-clb-mu}, we have 
\[
\nonumber
\EE_{p \sim \pi} \EE_{\trainset_n \sim \Dist_p^{\otimes n}} \left[\sum_{i=1}^n \sum_{k=1}^d \left(\frac{144\epsilon^2 - d(\mu_p^{(k)}) ^2}{1-d(\mu_p^{(k)})^2}\right)\left(\hat\theta^{(k)}\right)\cdot\left(Z_i^{(k)}-\mu_p^{(k)}\right)\right] \geq 6 \epsilon - 4\delta,
\]
which was to be shown.

\proofsubsection{thm:main-lower-convex}

Fix a learning algorithm $\Alg$ and let $\Dist$ be a distribution that satisfies \cref{lem:fingerprinting-cvx}. Also, consider the structure used in the definition of CMI in \cref{def:cmi} and let $\supersample=\{Z_{j,i}\}_{j \in \{0,1\},i \in \range{n}} \sim \Dist^{\otimes (2\times n)}$. For every $j \in \{0,1\}$ and $i \in \range{n}$, define $v_{j,i} = (v_{j,i}^{(1)},\dots,v_{j,i}^{(d)}) \in \Reals^d$ as follows.  For every $k \in [d]$, let
\[
\nonumber
v_{j,i}^{(k)} \triangleq \frac{144\epsilon^2 - d(\mu^{(k)})^2}{1-d(\mu^{(k)})^2} \left(Z_{j,i}^{(k)}-\mu^{(k)}\right).
\]
In the first step, we make the following observation. From the construction in \cref{lem:fingerprinting-cvx}, we know that $\mu^{(k)}\in [-12\epsilon/\sqrt{d},12\epsilon/\sqrt{d}]$. Simple calculations show, for $\epsilon\leq 1$, for every $k \in \range{d}$
\[
\label{eq:control-coeff}
0 \leq \frac{144\epsilon^2 - d(\mu^{(k)})^2}{1-d(\mu^{(k)})^2} \leq 144\epsilon^2.
\]

Let $\beta\triangleq\epsilon$ be a constant. Define the following set
\[
\nonumber
\mathcal{I}=\left\{(i,j) \in \range{n}\times \{0,1\} \Big| \inner{\hat{\theta}}{v_{j,i}}\geq \beta/n~\text{and}~\inner{\hat{\theta}}{v_{1-j,i}}< \beta/n\right\}.
\]
Intuitively, $\mathcal{I}$ includes the subset of columns of supersample such that one of the samples has a \emph{large} correlation with the output of the algorithm and the other one has \emph{small} correlation with the output of the algorithm.  Also, define the following event
\[
\nonumber
\mathcal{G} = \left\{\forall i \in \range{n}: \inner{\hat{\theta}}{v_{\bar{U_i},i}}<\beta/n\right\},
\]
where $\bar U_i = 1 - U_i$. 
Intuitively, under the event $\mathcal{G}$ the correlation of the output and the \emph{ghose sample} is uniformly insignificant. 

By the definition of mutual information, we can write
\[
\nonumber
\cmi &= \entr{U|\supersample} - \entr{U|\supersample,\hat{\theta}}\\
     &=    \entr{U}  - \entr{U|\supersample,\hat{\theta}}\\
     &=  n - \entr{U|\supersample,\hat{\theta}},
\]
where the second step follows from $\supersample \indep U$ and the last step follows from $\entr{U}=n$.

Notice that $\mathcal{I}$ is a  $(\hat{\theta},\supersample)$-measurable random variable, thus, $\entr{U|\supersample,\hat{\theta}} =\entr{U|\supersample,\hat{\theta},\mathcal{I}}$. Define $\mathcal{I}^{(1)}$ as follows: $i \in \mathcal{I}^{(1)}$ iff $\exists j \in \{0,1\}$ such that $(i,j)\in \mathcal{I}$. Using this notation, we can write
\begin{align} 
\entr{U|\supersample,\hat{\theta},\mathcal{I}} &= \entr{U_{\mathcal{I}^{(1)}},U_{(\mathcal{I}^{(1)})^c}|\supersample,\hat{\theta},\mathcal{I}} \nonumber\\
&\leq \entr{U_{\mathcal{I}^{(1)}} \big| \supersample,\hat{\theta},\mathcal{I}} + \entr{U_{(\mathcal{I}^{(1)})^c}|\supersample,\hat{\theta},\mathcal{I}},\label{eq:entr-decompos-cvx}
\end{align}
where the last step follows from the sub-additivity of Entropy. The second term in \cref{eq:entr-decompos-cvx} can be bounded by 
\[
\label{eq:first-term-cond-cvx}
\entr{U_{(\mathcal{I}^{(1)})^c}|\supersample,\hat{\theta},\mathcal{I}}&\leq \entr{U_{(\mathcal{I}^{(1)})^c}\big|\mathcal{I}} \\
&\leq \EE\left[(n-|\mathcal{I}|)\right],
\]
where the last step follows from $|U_{(\mathcal{I}^{(1)})^c}|\leq 2^{n - |\mathcal{I}|}$.

Define the random variable $\hat{U}\in \{0,1\}^{n}$ as follows: for every $(i,j)\in \mathcal{I}$, let $\hat{U}_i = j$. For the remaining coordinates set $\hat{U}_i=0$. Notice that $\hat{U}$ is a $\mathcal{I}$-measurable random variable. Therefore, $\entr{U_{\mathcal{I}^{(1)}} \big| \supersample,\hat{\theta},\mathcal{I}}  = \entr{U_{\mathcal{I}^{(1)}} \big| \supersample,\hat{\theta},\mathcal{I},\hat{U}} $. Then, we invoke Fano's inequality from \cref{lem:fano} to write
\[
\nonumber
\entr{U_{\mathcal{I}^{(1)}} \big| \supersample,\hat{\theta},\mathcal{I},\hat{U}} &\leq \entr{U_{\mathcal{I}^{(1)}} \big| \hat{U}} \\
& \leq  1 + \entr{U_{\mathcal{I}^c}} \Pr\left(\{\exists (i,j) \in \mathcal{I}: U_i\neq j\}\right)\\
& \leq  1 + n\Pr\left(\{\exists (i,j) \in \mathcal{I}: U_i\neq j\}\right),
\]
where the last line follows from $\entr{U_{\mathcal{I}^c}}\leq n$.

We claim that $\Pr\left(\{\exists (i,j) \in \mathcal{I}: U_i\neq j\}\right) \leq \Pr\left(\mathcal{G}^c\right)$. The proof is as follows: If there exists $(i,j) \in \mathcal{I}$ such that $U_i\neq j$, then, we have
\[
\nonumber
\inner{\hat{\theta}}{v_{\bar{U_i},i}} \geq \beta/n,
\]
by the definition of $\mathcal{I}$. Therefore, we conclude $\entr{U_{\mathcal{I}^{(1)}} \big| \supersample,\hat{\theta},\mathcal{I}} \leq 1 + n \Pr\left(\mathcal{G}^c\right)$. From \cref{eq:entr-decompos-cvx} and \cref{eq:first-term-cond-cvx}, we can write
\[
\nonumber
\entr{U \big| \supersample,\hat \theta} \leq  n - \EE\left[|\mathcal{I}|\right] + 1 + n \Pr\left(\mathcal{G}^c\right).
\]
By the definition of mutual information, we can lower bound \cmi as follows
\[
\label{eq:cardinality-cmi-cvx}
\cmi &=  n -   \entr{U|\supersample,\hat{\theta}} \\
      &\geq \EE\left[|\mathcal{I}|\right] - 1 - n \Pr\left(\mathcal{G}^c\right).
\]

In the next step of the proof, we provide a lower bound on $|\mathcal{I}|$ and $\Pr\left(\mathcal{G}^c\right)$. Under the event $\mathcal{G}$, using \cref{lem:card-moments} we can lower bound $|\mathcal{I}|$ as follows
\[
\nonumber
\EE\left[|\mathcal{I}|\right] &\geq \EE\left[|\mathcal{I}|\indic{\mathcal{G}}\right] \\
&\geq \EE\left[ \bigg|\left\{ i \in \range{n}: \inner{\hat\theta}{v_{U_i,i}} \geq \frac{\beta}{n} \right\} \bigg| \indic{\mathcal{G}}\right]\\
&\geq \EE\left[ \frac{\left(\max\left\{\sum_{i \in \range{n}}\inner{\hat\theta}{v_{U_i,i}} -\beta,0\right\}\right)^2}{\sum_{i \in \range{n}}\inner{\hat\theta}{v_{U_i,i}}^2} \indic{\mathcal{G}}\right].
\]
Define the following event
\[
\nonumber
\mathcal{E} \triangleq  \mathcal{G}   \cap \left\{\sum_{i \in \range{n}}\inner{\hat \theta}{v_{U_i,i}}^2\leq K (144\epsilon^2)^2\right\},
\]
where $K>0$ is a universal constant from \cref{lem:basis}. Since $\mathcal{E}\subseteq \mathcal{G}$, we have
\[
\nonumber
\EE\left[ \frac{\left(\max\left\{\sum_{i \in \range{n}}\inner{\hat\theta}{v_{U_i,i}} -\beta,0\right\}\right)^2}{\sum_{i \in \range{n}}\inner{\hat\theta}{v_{U_i,i}}^2} \indic{\mathcal{G}}\right] &\geq \EE\left[ \frac{\left(\max\left\{\sum_{i \in \range{n}}\inner{\hat\theta}{v_{U_i,i}} -\beta,0\right\}\right)^2}{\sum_{i \in \range{n}}\inner{\hat\theta}{v_{U_i,i}}^2} \indic{\mathcal{E}}\right]\\
&\geq \EE\left[ \frac{\left(\max\left\{\sum_{i \in \range{n}}\inner{\hat\theta}{v_{U_i,i}} -\beta,0\right\}\right)^2}{K (144\epsilon^2)^2} \indic{\mathcal{E}}\right],
\]
where the last step follows because under the event $\mathcal{E}$, $\sum_{i \in \range{n}}\inner{\hat\theta}{v_{U_i,i}}^2 \leq K (144\epsilon^2)^2$. Then,   
\[
\label{eq:expect-decompose-clb}
\EE\left[ \frac{\left(\max\left\{\sum_{i \in \range{n}}\inner{\hat\theta}{v_{U_i,i}} -\beta,0\right\}\right)^2}{K (144\epsilon^2)^2} \indic{\mathcal{E}}\right] &= \EE\left[ \frac{\left(\max\left\{\sum_{i \in \range{n}}\inner{\hat\theta}{v_{U_i,i}} -\beta,0\right\}\right)^2}{K (144\epsilon^2)^2} \right] \\
&- \EE\left[ \frac{\left(\max\left\{\sum_{i \in \range{n}}\inner{\hat\theta}{v_{U_i,i}} -\beta,0\right\}\right)^2}{K (144\epsilon^2)^2} \indic{\mathcal{E}^c}\right].
\]
The first term in \cref{eq:expect-decompose-clb} can be lower bounded as 
\[
\EE\left[ \frac{\left(\max\{\sum_{i \in \range{n}}\inner{\hat\theta}{v_{U_i,i}} -\beta,0\}\right)^2}{K (144\epsilon^2)^2} \right]  &\geq \frac{\left(\max\{\EE\left[\sum_{i \in \range{n}}\inner{\hat\theta}{v_{U_i,i}}\right] -\beta,0\}\right)^2}{K (144\epsilon^2)^2}\\
&\geq \frac{\left(\max\{6\epsilon - 4 \delta -\beta,0\}\right)^2}{K (144\epsilon^2)^2}.
\]
where the first step follows from convexity of $h_1(x)=x^2$ and $h_2(x)=\max\{x,0\}$ and applying Jensen's inequality. The second step follows from \cref{lem:fingerprinting-cvx}. Since $\delta<\epsilon$ and $\beta=\epsilon$, 
\[
\nonumber
\EE\left[ \frac{\left(\max\left\{\sum_{i \in \range{n}}\inner{\hat\theta}{v_{U_i,i}} -\beta,0\right\}\right)^2}{ K(144\epsilon^2)^2} \right]  = \Omega\left(\frac{1}{\epsilon^2}\right).
\]
The second term in \cref{eq:expect-decompose-clb} can be upper bounded by
\[
\nonumber
\EE\left[ \frac{\left(\max\{\sum_{i \in \range{n}}\inner{\hat\theta}{v_{U_i,i}} -\beta,0\}\right)^2}{K(144\epsilon^2)^2} \indic{\mathcal{E}^c}\right]\leq  \frac{O\left(\epsilon^4 n^2 +\epsilon^2\right)}{K(144\epsilon^2)^2}\cdot\Pr\left(\mathcal{E}^c\right),
\]
where the last step follows from
\[
\nonumber
\left(\max\left\{\sum_{i \in \range{n}}\inner{\hat\theta}{v_{U_i,i}} -\beta,0\right\}\right)^2 &\leq 2 \norm{\hat \theta}^2\norm{\sum_{i \in \range{n}}v_{U_i,i}}^2 + 2\beta^2\\
& = O\left(\epsilon^4 n^2\right)+ 2\beta^2\\
&= O\left(\epsilon^4 n^2 + \epsilon^2\right).
\]
To see the second step,  define the diagonal matrix $A\in \Reals^{d \times d}$ as 
\[
\nonumber
A = \mathrm{diag}\left[ \left\{\frac{144\epsilon^2 - d(\mu^{(k)})^2}{1-d(\mu^{(k)})^2}\right\}_{k=1}^{d}  \right].
\]
Note that from  \cref{eq:control-coeff}, we have $\norm{A}_2\leq 144\epsilon^2$. Therefore, we have
\[
\nonumber
\norm{v_{U_i,i}} &= \norm{A (Z_{U_i,i} - \mu)} \\
&\leq \norm{A}_2 \norm{Z_{U_i,i} - \mu}\\
&\leq 288\epsilon^2.
\]

In the last step, we need to show that for sufficiently small $\gamma$, $\Pr\left(\mathcal{E}^c\right)\leq \frac{\gamma}{n^2}$. By the definition of event $\mathcal{E}$, we can use union bound to write
$$ 
 \Pr\left(\mathcal{E}^c\right)\leq \Pr\left(\mathcal{G}^c\right) + \Pr\left(\sum_{i\in \range{n}}\inner{\hat \theta}{v_{U_i,i}}> K\left(144\epsilon^2\right)^2 \right).
$$
Notice that
\[
\nonumber
\Pr\left(\mathcal{G}^c\right)  &= \Pr\left(\max_{i \in \range{n}}\left\{\inner{\hat \theta}{v_{\bar U_i,i}}\right\}\geq \beta/n\right)\\
&=\EE\left[\Pr\left(\max_{i \in \range{n}}\left\{\inner{\hat \theta}{A \left(Z_{\bar U_i,i} - \mu\right)}\right\}\geq \beta/n \bigg| U,\hat \theta\right)\right]\\
&=\EE\left[\Pr\left(\max_{i \in \range{n}}\left\{\inner{A\hat \theta}{ Z_{\bar U_i,i} - \mu}\right\}\geq \beta/n \bigg| U,\hat \theta\right)\right],
\]
where the last step follows because $A$ is a diagonal matrix. Conditioned on $U$ and $\hat{\theta}$ and $Z_{\bar U_i,i}$ are independent by the construction of CMI in \cref{def:cmi}. This observation lets us use \cref{lem:indep-sample-corr} to upper bound the probability inside the expectation:
\[
\nonumber
\Pr\left(\max_{i \in \range{n}}\left\{\inner{A\hat \theta}{ Z_{\bar U_i,i} - \mu}\right\}\geq \beta/n \bigg| U,\hat \theta\right) &\leq n \exp\left( - \frac{d \epsilon^2}{n^2 \cdot (144 \epsilon^2)^2}\right) \\
&\leq n \exp\left( - \frac{d }{n^2 \cdot (144)^2}\right),
\]
where the first step follows from $\norm{A\hat \theta}\leq \norm{A}\norm{\hat \theta}\leq \norm{A}\leq 144 \epsilon^2$ and the second step follows because $\epsilon\leq 1$. Therefore, setting $d\geq \Omega(n^2 \log(n^3))$, we have
\[
\label{eq:uncorrelated-complement-prob}
\Pr\left(\mathcal{G}^c\right) \leq O\left(\frac{1}{n^2}\right).
\]
By \cref{eq:control-coeff}, $ \norm{A}_2\leq 144\epsilon^2$. Since $A$ is a diagonal matrix, we can write
\[
\nonumber
\inner{\hat \theta}{v_{U_i,i}} = \inner{A \hat \theta}{\left(Z_{U_i,i} - \mu\right)}.
\]
By $\norm{\hat \theta}\leq 1$, we have $\norm{A\hat\theta}^2\leq \norm{A}^2\norm{\hat \theta}^2\leq (144 \epsilon^2)^2$. Therefore, we can write
\[
\nonumber
\Pr\left(\sum_{i=1}^{n} \inner{\hat \theta}{v_{U_i,i}}^2 \geq K (144 \epsilon^2)^2\right) &= \Pr\left(\sum_{i=1}^{n} \inner{A \hat \theta}{Z_{U_i,i}-\mu}^2 \geq K (144 \epsilon^2)^2\right)\\
&\leq \Pr\left(\sum_{i=1}^{n} \inner{A \hat \theta}{Z_{U_i,i}-\mu}^2 \geq K\norm{A\hat \theta}^2\right)\\
&\leq \EE\left[\Pr\left(\sum_{i=1}^{n} \inner{A \hat \theta}{Z_{U_i,i}-\mu}^2 \geq K\norm{A\hat \theta}^2 \bigg | U\right)\right].
\]

Using this representation, we can use \cref{lem:basis} to conclude that given $d>\Omega\left(n \log(n)\right)$
\[
\label{eq:basis-complement-prob}
\Pr\left(\sum_{i=1}^{n} \inner{A \hat \theta}{Z_{U_i,i}-\mu}^2 \geq K\norm{A\hat \theta}^2 \bigg | U\right) \leq O\left(\frac{1}{n^2}\right).
\]
To conclude this step, \cref{eq:uncorrelated-complement-prob} and \cref{eq:basis-complement-prob} show
\[
\nonumber
\Pr\left(\mathcal{E}^c\right)&\leq  \Pr\left(\mathcal{G}^c\right) + \Pr\left(\sum_{i\in \range{n}}\inner{\hat \theta}{v_{U_i,i}}> K\left(144\epsilon^2\right)^2 \right)\\
&\leq O\left(\frac{1}{n^2}\right).
\]
We showed that $\EE[\mathcal{I}]=\Omega(1/\epsilon^2)$ and $\Pr\left(\mathcal{G}^c\right)=O(1/n^2)$. Therefore, using \cref{eq:cardinality-cmi-cvx}, we obtain
\[
\nonumber
\cmi &\geq \EE\left[|\mathcal{I}|\right] - 1 - n \Pr\left(\mathcal{G}^c\right) \\
&\geq \Omega\left(1/\epsilon^2\right),
\]
as was to be shown.

\subsection{Corollaries of Proof of \cref{thm:main-lower-convex}}

\begin{corollary}\label{cor:num-cor-cvx}
    Let $\mathcal{P}^{(d)}_{\text{cvx}}$ be the problem instance described in \cref{sec:construction-cvx}. 
Fix $\epsilon<1$. 
For every $\delta \leq \epsilon$ and for every algorithm $\Alg=\{\Alg_n\}_{n \in \Naturals}$ that $\epsilon$-learns $\mathcal{P}^{(d)}_{\text{cvx}}$ with the sample complexity $N(\cdot,\cdot)$ the following holds: for every $n\geq N(\epsilon,\delta)$, and $d\geq \Omega(n^2\log(n))$, there exists a data distribution $\Dist \in \ProbMeasures{\dataspace}$ such that
\[
\nonumber
\EE\left[\Big| \left\{i \in \range{n}: \inner{\hat \theta }{A\left(Z_i - \mu\right)} \right\} \geq \frac{\epsilon}{n} \Big|\right] = \Omega\left(\frac{1}{\epsilon^2}\right),
\]
where $\trainset_n = (Z_1,\dots,Z_n)\sim \Dist^{\otimes n}$, $\hat \theta = \Alg(\trainset_n)$, and $\mu = \EE_{Z\sim \Dist}\left[Z\right]$, and 
\[
\nonumber
A = \mathrm{diag}\left[ \left\{\frac{144\epsilon^2 - d(\mu^{(k)})^2}{1-d(\mu^{(k)})^2}\right\}_{k=1}^{d}  \right].
\]
\end{corollary}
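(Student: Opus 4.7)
The plan is to extract this statement directly from the machinery already built inside the proof of \cref{thm:main-lower-convex}, stripping away the CMI super-sample/$U$ bookkeeping and keeping only the correlation counting. Let $\Dist$ be the distribution produced by the fingerprinting lemma (\cref{lem:fingerprinting-cvx}), set $T_i \defas \inner{\hat\theta}{A(Z_i-\mu)}$, and work with the set
$\mathcal{J} \defas \{i \in [n] : T_i \ge \epsilon/n\}$
whose expected cardinality we wish to lower bound.

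First, I would invoke \cref{lem:fingerprinting-cvx} to obtain
$\EE\bigl[\sum_{i=1}^n T_i\bigr] \ge 6\epsilon - 4\delta \ge 5\epsilon$
(using $\delta \le \epsilon$). Next, I would apply the ``cardinality via moments'' bound \cref{lem:card-moments} with the choice $\beta = \epsilon$ and the sequence $(T_1,\ldots,T_n)$ to obtain the pointwise inequality
\[
|\mathcal{J}| \;\ge\; \frac{\bigl(\max\{\sum_{i=1}^n T_i - \epsilon,\,0\}\bigr)^2}{\sum_{i=1}^n T_i^2}.
\]
The whole game is now to control numerator and denominator in expectation.

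For the denominator, I would use \cref{lem:basis} applied to the vector $y = A\hat\theta$. Since \cref{eq:control-coeff} gives $\|A\|_2 \le 144\epsilon^2$ and $\|\hat\theta\| \le 1$, we have $\|A\hat\theta\|^2 \le (144\epsilon^2)^2$, so with probability at least $1 - O(1/n^2)$ (thanks to $d \ge \Omega(n^2\log n)$), the event $\mathcal{E} \defas \{\sum_i T_i^2 \le K(144\epsilon^2)^2\} = \{\sum_i T_i^2 = O(\epsilon^4)\}$ holds. On $\mathcal{E}$, the card-moments bound yields $|\mathcal{J}| \ge \Omega((\sum_i T_i - \epsilon)_+^2/\epsilon^4)$. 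I then decompose
\[
\EE[|\mathcal{J}|] \;\ge\; \EE\!\left[\tfrac{(\sum_i T_i - \epsilon)_+^2}{K(144\epsilon^2)^2}\right] - \EE\!\left[\tfrac{(\sum_i T_i - \epsilon)_+^2}{K(144\epsilon^2)^2}\,\indic{\mathcal{E}^c}\right],
\]
bound the first term from below by Jensen's inequality applied to $x\mapsto(\max\{x-\epsilon,0\})^2$ combined with the fingerprinting estimate, giving $\Omega(\epsilon^2/\epsilon^4) = \Omega(1/\epsilon^2)$, and bound the second term using the crude pointwise inequality $(\sum_i T_i)^2 \le 2\|\hat\theta\|^2\|A\|_2^2\,\|\sum_i(Z_i-\mu)\|^2 + 2\epsilon^2 = O(\epsilon^4 n^2 + \epsilon^2)$ together with $\Pr(\mathcal{E}^c) = O(1/n^2)$; this makes the error term $O(1)\ll \Omega(1/\epsilon^2)$.

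The only step requiring care is the second-moment control, because the $T_i$ involve $\hat\theta$, which depends on the whole training set. However, \cref{lem:basis} is stated uniformly over $y \in \Reals^d$ and therefore applies to the (random) vector $A\hat\theta$ without any independence assumption, which is exactly why the argument in \cref{thm:main-lower-convex} goes through; the same is true here, and this is the only nontrivial ingredient beyond the fingerprinting lemma. The remaining estimates (Jensen on the numerator and the crude $O(\epsilon^4 n^2 + \epsilon^2)$ envelope on $\mathcal{E}^c$) are routine.
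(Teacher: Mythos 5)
Your proposal is correct and matches the paper's intended argument: the paper lists this corollary as a by-product of the proof of \cref{thm:main-lower-convex}, and your proof is precisely the correlation-counting core of that proof (\cref{lem:fingerprinting-cvx} for the first moment, \cref{lem:card-moments} pointwise, \cref{lem:basis} applied to the random vector $A\hat\theta$ for the second moment, then the split over the good event $\mathcal{E}$) with the CMI/supersample/$\mathcal{G}$ bookkeeping correctly stripped away — indeed, dropping $\mathcal{G}$ is the right simplification here since that event only matters for controlling the ghost samples, which don't appear in this statement. One small arithmetic slip: $6\epsilon - 4\delta \ge 2\epsilon$ (not $5\epsilon$) when $\delta \le \epsilon$; after subtracting the threshold $\beta=\epsilon$ this still leaves $\ge \epsilon$ in the numerator, so the $\Omega(1/\epsilon^2)$ conclusion is unaffected.
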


\begin{corollary} \label{cor:cardI-cvx}
Fix $\epsilon\in (0,1)$.  Consider the structure introduced in the definition of CMI in \cref{def:cmi}. Then, define the random set
\[
\nonumber
\mathcal{I}=\left\{(i,j) \in \range{n}\times \{0,1\} \Big| \inner{\hat \theta}{A\left(Z_{j,i}-\mu\right)}\geq \epsilon/n~\text{and}~\inner{\hat \theta}{A\left(Z_{1-j,i}-\mu\right)}< \epsilon/n\right\},
\]
where
$A = \mathrm{diag}\left[ \left\{\frac{144\epsilon^2 - d(\mu^{(k)})^2}{1-d(\mu^{(k)})^2}\right\}_{k=1}^{d}  \right]$, 
 $\trainset_n = (Z_{U_1,1},\dots,Z_{U_n,n})$, $\hat \theta = \Alg(\trainset_n)$, and $\mu = \EE_{Z\sim \Dist}\left[Z\right]$. 
 
 Let $\mathcal{P}^{(d)}_{\text{cvx}}$ be the problem instance described in \cref{sec:construction-cvx}. 
For every $\delta \leq \epsilon$ and for every algorithm $\Alg=\{\Alg_n\}_{n \in \Naturals}$ that $\epsilon$-learns $\mathcal{P}^{(d)}_{\text{cvx}}$ with the sample complexity $N(\cdot,\cdot)$ the following holds: for every $n\geq N(\epsilon,\delta)$, and $d\geq \Omega(n^2\log(n))$, there exists a data distribution $\Dist \in \ProbMeasures{\dataspace}$ such that 
\[
\nonumber
\EE\left[|\mathcal{I}|\right]=\Omega\left(\frac{1}{\epsilon^2}\right).
\]
\end{corollary}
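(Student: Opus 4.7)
The plan is to extract the cardinality estimate $\EE[|\mathcal{I}|]=\Omega(1/\epsilon^2)$ as an intermediate step of the proof of \cref{thm:main-lower-convex}. In that proof, the vector $v_{j,i}$ is defined coordinatewise exactly as $A(Z_{j,i}-\mu)$ (with $A$ the same diagonal matrix as in the present corollary), and the associated set $\mathcal{I}$ is defined identically with $\beta=\epsilon$. Hence only the portion of the argument preceding the entropy/Fano conversion needs to be isolated, and no new ideas are required.

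First I would invoke \cref{lem:fingerprinting-cvx} to pick a data distribution $\Dist$ under which
$$\EE\!\left[\sum_{i=1}^n \langle \hat\theta, A(Z_i-\mu)\rangle\right] \geq 6\epsilon - 4\delta,$$
and note that under the super-sample construction of \cref{def:cmi}, the sequence $(Z_{U_i,i})_{i\in\range{n}}$ has the same joint law as an i.i.d.\ sample from $\Dist$, so the same bound holds for $a_i \triangleq \langle \hat\theta, A(Z_{U_i,i}-\mu)\rangle$. Next I would introduce the ghost-correlation event $\mathcal{G} = \{\forall i \in \range{n} : \langle \hat\theta, A(Z_{\bar U_i,i}-\mu)\rangle < \epsilon/n\}$, on which $a_i \geq \epsilon/n$ automatically forces $(i,U_i)\in\mathcal{I}$. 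Applying \cref{lem:card-moments} with $\beta=\epsilon$ then yields
$$|\mathcal{I}|\cdot \indic{\mathcal{G}} \geq \frac{\bigl(\max\{\textstyle\sum_i a_i - \epsilon,\, 0\}\bigr)^2}{\sum_i a_i^2}\cdot \indic{\mathcal{G}}.$$
Intersecting $\mathcal{G}$ with the matrix-concentration event $\{\sum_i a_i^2 \leq K(144\epsilon^2)^2\}$ from \cref{lem:basis} gives an event $\mathcal{E}$ on which the denominator is $O(\epsilon^4)$, while convexity (Jensen applied to $x\mapsto x^2$ and $x\mapsto \max\{x,0\}$) combined with the fingerprinting inequality gives $\EE[(\max\{\sum_i a_i - \epsilon,0\})^2] \geq (6\epsilon - 4\delta - \epsilon)^2 = \Omega(\epsilon^2)$, since $\delta<\epsilon$.

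The main obstacle is certifying $\Pr(\mathcal{E}^c)=O(1/n^2)$ so that the bad-event contribution is negligible. Conditioning on $(U,\hat\theta)$, the ghost samples $Z_{\bar U_i,i}$ are independent of $\hat\theta$, so a union bound together with \cref{lem:indep-sample-corr} (using $\|A\hat\theta\| \leq \|A\|_2 \leq 144\epsilon^2$) gives $\Pr(\mathcal{G}^c)=O(1/n^2)$ provided $d\geq \Omega(n^2\log n)$; the same dimension assumption makes \cref{lem:basis} yield the analogous tail bound for $\sum_i a_i^2$. A crude pathwise estimate $(\max\{\sum_i a_i - \epsilon,0\})^2 = O(\epsilon^4 n^2+\epsilon^2)$ (via $\|A\|_2\leq 144\epsilon^2$ and $\|Z_{U_i,i}-\mu\|\leq 2$) controls the contribution on $\mathcal{E}^c$. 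Assembling the pieces,
$$\EE[|\mathcal{I}|] \geq \frac{\Omega(\epsilon^2)}{O(\epsilon^4)} - o(1) = \Omega\!\left(\frac{1}{\epsilon^2}\right),$$
which is exactly the conclusion of the corollary.
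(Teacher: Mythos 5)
Your proposal is correct and follows essentially the same route as the paper: the corollary is labeled in the paper as a ``Corollary of Proof of \cref{thm:main-lower-convex}'' precisely because $\EE[|\mathcal{I}|]=\Omega(1/\epsilon^2)$ is established as an intermediate step there, using the same chain of ingredients you identify (\cref{lem:fingerprinting-cvx} for the numerator via Jensen, \cref{lem:basis} for the $O(\epsilon^4)$ denominator, \cref{lem:indep-sample-corr} and \cref{lem:basis} to show $\Pr(\mathcal{E}^c)=O(1/n^2)$, and the crude pathwise bound on the bad-event contribution). Your identification of $v_{j,i}=A(Z_{j,i}-\mu)$ and the choice $\beta=\epsilon$ match the paper's definitions exactly, so the extraction you propose is valid as stated.
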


\proofsubsection{thm:lip-bounded-upperbound}
 Given that the Euclidean radius of $\parspace$ is bounded by $R$, we will presume that the loss function lies within $[-LR,LR]$.  Let $0< m \leq n$ and $\eta>0$ be constants which are determined later. The algorithm $\Alg_n$ is based on early-stopped online gradient descent. More precisely, let the training set $\trainset_n = (Z_1,\dots,Z_n)$ and $\theta_1=0$. For $t \in \range{m}$, let 
\[
\nonumber
\theta_{t+1} = \proj_{\parspace}\left( \theta_t - \eta \partial  \losscvx(\theta_t, Z_t)\right),
\]
where $\partial  \losscvx(\theta_t, Z_t)$ denotes the sub-gradient of $\partial  \losscvx(\cdot, Z_t)$ at $\theta_t$. Then, the output of the algorithm will be $\Alg_n(\trainset_n)=\frac{1}{m}\sum_{t=1}^{m}\theta_t$.

 By the standard result on the regret analysis of the online gradient descent and the online-to-batch conversion in \citep{zinkevich2003online,shalev2009stochastic,orabona2019modern}, we have with probability at least $1-\delta$, 
\[
\nonumber
\Popriskcvx{\Alg_n(\trainset_n)}- \min_{\theta \in \parspace}\Popriskcvx{\theta} \leq \frac{R^2}{2m\eta} + \frac{\eta}{2}L^2 + 2LR \sqrt{\frac{8 \log(2/\delta)}{m}}
\]
By setting  $\displaystyle m = 128\frac{(LR)^2}{\epsilon^2} \log(2/\delta)$ and $ \displaystyle \eta = \frac{R}{L}\frac{1}{\sqrt{m}}$, $\Alg_n$ achieves $\epsilon$ excess risk of $\epsilon$ with probability at least $1-\delta$.
Next, we provide the analysis of CMI of $\Alg_n$.  Using the chain rule for mutual information, we have
\[
\nonumber
\cmi &= \minf{\Alg_n(\trainset_n);U\vert \supersample}\\
    &= \minf{\Alg_n(\trainset_n);U_{1},\dots,U_{n}\vert \supersample}\\
    & = \minf{\Alg_n(\trainset_n);U_{1},\dots,U_{m}\vert \supersample} + \minf{\Alg_n(\trainset_n);U_{m+1},\dots,U_{n}\vert \supersample,U_{1},\dots,U_{m}}.
\]
Since $\Alg_n(\trainset_n)$ depends only on the first $m$ examples in the training set, $\minf{\Alg_n(\trainset_n);U_{m+1},\dots,U_{n}\vert \supersample,U_{1},\dots,U_{m}}=0$. Therefore,
\[
\cmi &=\minf{\Alg_n(\trainset);U_{1},\dots,U_{m}\vert \supersample}\\
    &\leq \entr{U_1,\dots,U_{m} \vert \supersample}\\
    &= \entr{U_{1},\dots,U_{m}}\\
    &\leq m.
\]
 Therefore its CMI is less than $m$ as was to be shown.

\section{Auxiliary Lemma for Improper Learning of the CLB Subclass}

\begin{lemma}
\label{lem:projection-non-increasing}
Let $\mathcal{B}_d(1)$ denote the ball of radius one in $\Reals^d$. Let $f:\mathcal{B}_d(1) \times \dataspace \to \Reals $ be a convex and $1$-Lipschitz loss function defined over $\mathcal{B}_d(1)$. Then, there exists a convex and $1$-Lipschitz $\tilde{f}: \Reals^d \times \dataspace \to \Reals$ such that for every  $\hat{\theta} \in \Reals^d$ and every $\Dist$, we have
\[
\nonumber
\EE_{Z \sim \Dist}\left[ \tilde{f}(\hat{\theta},Z)\right] - \min_{\theta \in \mathcal{B}_d(1)} \EE_{Z \sim \Dist}\left[ \tilde{f}(\theta,Z)\right] \geq \EE_{Z \sim \Dist}\left[ f(\proj\left(\hat{\theta}\right),Z)\right] - \min_{\theta \in \mathcal{B}_d(1)} \EE_{Z \sim \Dist}\left[ f(\theta,Z)\right],
\]
where $\proj(\cdot):\Reals^d \to \mathcal{B}_d(1)$ is the orthogonal projection operator on $\mathcal{B}_d(1)$.

\end{lemma}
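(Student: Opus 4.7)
The plan is to use the Lipschitz extension provided by \cref{lem:lip-exten}, namely
\[
\tilde{f}(\theta,z)\;\defas\;\inf_{y\in\mathcal{B}_d(1)}\bigl\{f(y,z)+\|\theta-y\|\bigr\},
\]
exactly as in \cref{eq:improper_extension}. By \cref{lem:lip-exten}, this $\tilde{f}$ is convex and $1$-Lipschitz on $\Reals^d$, and it agrees with $f$ on $\mathcal{B}_d(1)$. In particular, for every distribution $\Dist$,
\*[
\min_{\theta\in\mathcal{B}_d(1)}\EE_{Z\sim\Dist}[\tilde{f}(\theta,Z)]
\;=\;
\min_{\theta\in\mathcal{B}_d(1)}\EE_{Z\sim\Dist}[f(\theta,Z)],
\*]
so both minimizers on the two sides of the target inequality coincide, and the claim reduces to the pointwise statement $\tilde f(\hat\theta,z)\ge f(\proj(\hat\theta),z)$ for every $\hat\theta\in\Reals^d$ and $z\in\dataspace$, from which the expectation version and the lemma follow by integrating and subtracting the common minimum.

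To establish the pointwise inequality, I would fix an arbitrary $y\in\mathcal{B}_d(1)$ and apply two ingredients in sequence. First, since $f(\cdot,z)$ is $1$-Lipschitz on $\mathcal{B}_d(1)$ and both $\proj(\hat\theta)$ and $y$ lie in $\mathcal{B}_d(1)$,
\*[
f(\proj(\hat\theta),z)\;\le\;f(y,z)+\|\proj(\hat\theta)-y\|.
\*]
Second, by the non-expansiveness of the Euclidean projection onto the closed convex set $\mathcal{B}_d(1)$ (a standard fact: $\|\proj(x)-y\|\le\|x-y\|$ for all $y$ in the target set), we have $\|\proj(\hat\theta)-y\|\le\|\hat\theta-y\|$. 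Chaining the two bounds,
\*[
f(\proj(\hat\theta),z)\;\le\;f(y,z)+\|\hat\theta-y\|,
\*]
and taking the infimum over $y\in\mathcal{B}_d(1)$ yields $f(\proj(\hat\theta),z)\le\tilde f(\hat\theta,z)$, which is precisely what we need.

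Combining this pointwise inequality with the equality of minima gives, for every $\hat\theta\in\Reals^d$ and every $\Dist$,
\*[
\EE_{Z\sim\Dist}[\tilde f(\hat\theta,Z)]-\min_{\theta\in\mathcal{B}_d(1)}\EE_{Z\sim\Dist}[\tilde f(\theta,Z)]
\;\ge\;
\EE_{Z\sim\Dist}[f(\proj(\hat\theta),Z)]-\min_{\theta\in\mathcal{B}_d(1)}\EE_{Z\sim\Dist}[f(\theta,Z)],
\*]
which is exactly the desired conclusion. Honestly there is no serious obstacle here: the proof is essentially a two-line application of Lipschitzness plus non-expansiveness of projection, once the right extension is chosen. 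The only subtle point worth flagging is that $\mathcal{B}_d(1)$ must be closed and convex for the projection to be well-defined and non-expansive, which it is; and that the infimum defining $\tilde{f}$ is attained (or can be approached) by points $y$ in $\mathcal{B}_d(1)$, so the Lipschitz bound on $f$ is always applicable.
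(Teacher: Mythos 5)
Your proof is correct and takes essentially the same approach as the paper: same infimal-convolution extension $\tilde{f}$, same appeal to \cref{lem:lip-exten} and agreement of $\tilde{f}$ with $f$ on $\mathcal{B}_d(1)$ to equate the minima, and the same non-expansiveness of projection to get the pointwise bound $\tilde f(\hat\theta,z)\ge f(\proj(\hat\theta),z)$ (the paper pushes the contraction inside the infimum and then uses $\tilde f=f$ on the ball, while you first Lipschitz-bound $f(\proj(\hat\theta),z)$ and then contract — a cosmetic reordering of the same ingredients).
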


\begin{proof}
Let $f: \mathcal{B}_d(1) \times \dataspace \to \Reals$ be a convex and $1$-Lipschitz loss function. For every $z\in \dataspace$, define 
\[
\nonumber
\tilde{f}(\theta,z) = \inf_{w \in \mathcal{B}_d(1)} \lbrace f(w,z) + \norm{\theta-w} \rbrace.
\]
By \cref{lem:lip-exten}, we know that for every $z \in \dataspace$, $\tilde{f}(\cdot,z)$ is convex and $1-$Lipschitz. Our first claim is that 
$$
\min_{\theta \in \mathcal{B}_d(1)} \EE_{Z \sim \Dist}\left[ \tilde{f}(\theta,Z)\right] = \min_{\theta \in \mathcal{B}_d(1)} \EE_{Z \sim \Dist}\left[  f(\theta,Z) \right].
$$
It follows from the fact that for every $\theta \in \mathcal{B}_d(1)$ and every $z\in \dataspace$, $\tilde{f}(\theta,z)=f(\theta,z)$ by \cref{lem:lip-exten}. Let $\proj:\Reals^d \to \mathcal{B}_d(1)$ be the projection operator. Our second claim is that for every $\theta \in \Reals^d$, we have
\[
\EE_{Z \sim \Dist}\left[ \tilde{f}(\proj\left(\theta\right),Z)\right] \leq  \EE_{Z \sim \Dist}\left[  \tilde{f}(\theta,Z) \right].
\]
The proof is as follows. For every $z \in \dataspace$, we can write
\[
\nonumber
\tilde{f}\left(\theta,z\right) &= \inf_{w\in \mathcal{B}_d(1)} \lbrace f(w,z) + \norm{\theta - w}\rbrace\\
& \geq \inf_{w\in \mathcal{B}_d(1)} \lbrace f(w,z) + \norm{\proj\left(\theta\right) - w}\rbrace,
\]
where the last step follows from 
$$
\norm{\theta - w}\geq \norm{\Pi(\theta) - \Pi(w)}=\norm{\Pi(\theta) - w}
$$ 
where the first step is by contraction property of the projection and the second step is due to $\Pi(w)=w$ since $w\in \mathcal{B}_d(1)$. Then, notice that 
\[
\nonumber
\inf_{w\in \mathcal{B}_d(1)} \lbrace f(w,z) + \norm{\proj\left(\theta\right) - w}\rbrace &= \tilde{f}(\proj\left(\theta\right),z) \\
& = f(\proj\left(\theta\right),z).
\]
The last step follows from $\proj(\theta) \in \mathcal{B}_d(1)$ and by \cref{lem:lip-exten}, $\tilde{f}(.,z)$ and $f(.,z)$ agree on $\mathcal{B}_d(1)$. 

Combining these two claims we obtain, for every $\hat{\theta} \in \Reals^d$, we have
\[
\nonumber
\EE_{Z \sim \Dist}\left[ \tilde{f}(\hat{\theta},Z)\right] - \min_{\theta \in \mathcal{B}_d(1)} \EE_{Z \sim \Dist}\left[ \tilde{f}(\theta,Z)\right] \geq \EE_{Z \sim \Dist}\left[ f(\proj\left(\hat{\theta}\right),Z)\right] - \min_{\theta \in \mathcal{B}_d(1)} \EE_{Z \sim \Dist}\left[ f(\theta,Z)\right],
\]
as was to be shown.
\end{proof}

\begin{lemma}
\label{lem:data-proc-proj}
Let $\Alg_n$ be a learning algorithm. Define $\proj(\Alg_n)$ as a learning algorithm that obtains by projecting the output of $\Alg_n$ into $\mathcal{B}_d(1)$. Then,
\[
\nonumber
\mathrm{CMI}_{\Dist}(\Alg_n) \geq \mathrm{CMI}_{\Dist}(\proj(\Alg_n))  
\]
\end{lemma}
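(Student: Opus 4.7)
The plan is to apply the data processing inequality for conditional mutual information. By definition, $\proj(\Alg_n)(\trainset_n) = \proj(\Alg_n(\trainset_n))$, and $\proj:\Reals^d \to \mathcal{B}_d(1)$ is a fixed (deterministic) map. Thus the random variable $\proj(\Alg_n(\trainset_n))$ is a measurable function of the pair $(\Alg_n(\trainset_n),\supersample)$ in a trivial way: it depends only on the first coordinate.

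The first step is to exhibit the Markov structure. Working under the joint distribution from \cref{def:cmi}, I would observe that, conditional on $\supersample$, the triple $(U,\Alg_n(\trainset_n),\proj(\Alg_n(\trainset_n)))$ forms a Markov chain
$$
U \;\longrightarrow\; \Alg_n(\trainset_n) \;\longrightarrow\; \proj(\Alg_n(\trainset_n)),
$$
because the last arrow is a deterministic map that does not depend on $U$ (after conditioning on $\supersample$, which fixes the candidate samples that $U$ selects among).

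The second step is to invoke the conditional data processing inequality: for any Markov chain $X \to Y \to Z$ conditioned on $W$, one has $\minf{X;Z \vert W} \le \minf{X;Y \vert W}$. Applying this with $X=U$, $Y=\Alg_n(\trainset_n)$, $Z=\proj(\Alg_n(\trainset_n))$, and $W=\supersample$ yields
$$
\minf{\proj(\Alg_n(\trainset_n));U \vert \supersample} \;\le\; \minf{\Alg_n(\trainset_n);U \vert \supersample},
$$
which is exactly $\mathrm{CMI}_{\Dist}(\proj(\Alg_n)) \le \mathrm{CMI}_{\Dist}(\Alg_n)$ by \cref{def:cmi}.

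There is no substantive obstacle here; the only thing to be careful about is handling potential internal randomness of $\Alg_n$. If $\Alg_n$ uses an independent random seed $R$, then $\Alg_n(\trainset_n)$ is a function of $(\trainset_n,R)$ and $\proj(\Alg_n(\trainset_n))$ is a function of $\Alg_n(\trainset_n)$ alone, so the Markov property still holds once one conditions on $\supersample$ (note $R \indep (U,\supersample)$). Hence the DPI argument goes through verbatim.
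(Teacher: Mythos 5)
Your proof is correct and matches the paper's approach: the paper simply cites the data processing inequality as the justification, and your argument spells out the conditional Markov chain $U \to \Alg_n(\trainset_n) \to \proj(\Alg_n(\trainset_n))$ given $\supersample$ and applies conditional DPI, which is exactly the content the paper leaves implicit.
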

\begin{proof}
    This result is a direct corollary of the data processing inequality \citep{cover2012elements}. 
\end{proof}

\section{Proofs for Characterization of CMI of the CSL Subclass}
\label{appx:proof-sec-scvx}

\proofsubsection{lem:scvx-learner-prop}

For every $\theta \in \Reals^d$, we have $ \displaystyle
\Popriskcvx{\theta} = -\inner{\theta}{\mu} + \frac{1}{2}\norm{\theta}^2$ ,
and $ \displaystyle \min_{\theta \in \parspace} \Popriskcvx{\theta} = \frac{-1}{2}\norm{\mu}^2$ where the minimum is achieved by setting $\theta^\star = \mu$. Therefore, a simple calculation shows that
\[
\nonumber
 \Popriskcvx{\theta} -  \Popriskcvx{\theta^\star} &= \frac{1}{2}\norm{\theta - \mu}^2 \\
 &= \frac{1}{2}\norm{\theta}^2 - \inner{\theta}{\mu} + \frac{1}{2}\norm{\mu}^2\\
 &\geq\frac{1}{2}\norm{\mu}^2 - \inner{\theta}{\mu}.
\]
Thus, if $\hat{\theta}$ achieves excess error $\epsilon$ with probability at least $1-\delta$, we have $\frac{1}{2} \norm{\mu}^2 - \inner{\hat{\theta}}{\mu} \leq \epsilon$ and $\norm{\theta - \mu}^2\leq 2\epsilon$.

For the in-expectation result, notice that without loss of generality, we can assume that $\hat{\theta} \in \mathcal{B}_d(1)$.  

\proofsubsection{lem:fingerprinting-scvx}

The proof is based on defining a family of data distribution, and a prior over the family. Then, we show that in expectation over the prior, the stated claim holds. 

The data distribution is parameterized by a vector $p = \left(p^{(1)},\dots,p^{(d)}\right)\in [-1,1]^d$ where for every $z = (z^{(1)},\dots,z^{(d)}) \in \{\pm \frac{1}{\sqrt{d}}\}^d$,
\[
\nonumber
\Dist_p(z = (z^{(1)},\dots,z^{(d)}) ) = \prod_{k=1}^{d} \left( \frac{1+ \sqrt{d} z^{(k)} p^{(k)}}{2}\right).
\]
Let $\mu_p = \EE_{Z\sim \Dist_p}[Z]$ where $\mu^{(k)}_p = p^{(k)}/\sqrt{d}$. We define a $\emph{prior}$ distribution $\pi \in \ProbMeasures{[-1,1]^d}$ over $p$ as follows
\[
\nonumber
\pi = \unif{[-1,1]}^{\otimes d}.
\]
Let $\trainset_n =  (Z_1,\dots,Z_n)\sim \Dist_p^{\otimes n}$, and $\hat \theta  = \Alg_n(\trainset_n)$. From Lemmas  4.3.7 and  4.3.8 of \citep{steinke2016upper}, we have the following result known as fingerprinting lemma:
\[
\nonumber
&\EE_{p\sim \pi}\EE_{\trainset_n\sim \Dist^{\otimes n}}\left[ \inner{\hat{\theta}}{\sum_{i \in \range{n}}\left( Z_i - \mu_p  \right)} \right] \\
&= 2\EE_{p\sim \pi}\left[\inner{\EE_{\trainset_n\sim \Dist^{\otimes n}}[\hat \theta]}{\mu_p}\right].
\]
By \cref{lem:scvx-learner-prop}, for every $p$
\[
\nonumber
\inner{\EE_{\trainset_n\sim \Dist_p^{\otimes n}}[\hat \theta]}{\mu_p} \geq \frac{\norm{\mu_p}^2}{2}-\epsilon - \frac{3}{2}\delta.
\]
Therefore, 
\[
\nonumber
\EE_{p\sim \pi}\inner{\EE_{\trainset_n\sim \Dist_p^{\otimes n}}[\hat \theta]}{\mu_p} &\geq \EE_{p\sim \pi}\left[\frac{\norm{\mu_p}^2}{2}\right]-\epsilon - \frac{3}{2}\delta\\
&=\frac{1}{6}-\epsilon - \frac{3}{2}\delta,
\]
where the last step follows from
\[
\nonumber
\EE_{p\sim \pi}\left[\norm{\mu_p}^2\right] = \sum_{k=1}^{d}\frac{1}{d} \EE_{p\sim \pi}\left[(p^{(k)})^2\right]=\frac{1}{3}.
\]
Therefore, 
\[
\nonumber
\EE_{p\sim \pi}\EE_{\trainset_n\sim \Dist_p^{\otimes n}}\left[ \inner{\hat{\theta}}{\sum_{i \in \range{n}}\left( Z_i - \mu_p  \right)} \right] &= 2\EE_{p\sim \pi}\left[\inner{\EE_{\trainset_n\sim \Dist_p^{\otimes n}}[\hat \theta]}{\mu_p}\right] \\
&\geq \frac{1}{3}-2\epsilon - 3\delta,
\]
as was to be shown.

\proofsubsection{thm:main-lower-stronglyconvex}

Fix a learning algorithm $\Alg$, and let $\Dist$ be a distribution satisfies \cref{lem:fingerprinting-scvx}. Also, consider the structure introduced in the definition of CMI in \cref{def:cmi} and let $\supersample=(Z_{j,i})_{j \in \{0,1\},i \in \range{n}} \sim \Dist^{\otimes (2\times n)}$. Let $\beta=1/12$ be a constant. Define the following set
\[
\nonumber
\mathcal{I}=\left\{(i,j) \in \range{n}\times \{0,1\} \Big| \inner{\hat{\theta}-\mu}{Z_{j,i}-\mu}\geq \beta/n~\text{and}~\inner{\hat{\theta}-\mu}{Z_{1-j,i}-\mu}< \beta/n\right\}.
\]
Intuitively, $\mathcal{I}$ includes the subset of columns of supersample such that one of the samples has a \emph{large} correlation to the output of the algorithm and  the other one has \emph{small} correlation to the output of the algorithm.  Also, define the following event
\[
\nonumber
\mathcal{G} = \left\{\forall i \in \range{n}: \inner{\hat{\theta}-\mu}{Z_{\bar{U_i},i}-\mu}<\beta/n\right\},
\]
where $\bar U_i = 1 - U_i$. 
Intuitively, under the event $\mathcal{G}$ the correlation of the output and the \emph{ghost samples} are insignificant. We can write
\[
\nonumber
\cmi &= \entr{U|\supersample} - \entr{U|\supersample,\hat{\theta}}\\
     &=    \entr{U}  - \entr{U|\supersample,\hat{\theta}}\\
     &=  n - \entr{U|\supersample,\hat{\theta}}.
\]
where the last two steps follows from  $U \indep \supersample$ and $\entr{U}=n$. 

Notice that $\mathcal{I}$ is a  $(\hat{\theta},\supersample)$-measurable random variable, thus, $\entr{U|\supersample,\hat{\theta}} =\entr{U|\supersample,\hat{\theta},\mathcal{I}}$. Define $\mathcal{I}^{(1)}$ as follows: $i \in \mathcal{I}^{(1)}$ iff $\exists j \in \{0,1\}$ such that $(i,j)\in \mathcal{I}$. Using this notation, we can write
\begin{align} 
\entr{U|\supersample,\hat{\theta},\mathcal{I}} &= \entr{U_{\mathcal{I}^{(1)}},U_{(\mathcal{I}^{(1)})^c}|\supersample,\hat{\theta},\mathcal{I}} \nonumber\\
&\leq \entr{U_{\mathcal{I}^{(1)}} \big| \supersample,\hat{\theta},\mathcal{I}} + \entr{U_{(\mathcal{I}^{(1)})^c}|\supersample,\hat{\theta},\mathcal{I}},\label{eq:entr-decompos-scvx}
\end{align}
where the last step follows from sub-additivity of the discrete Entropy. The second term in \cref{eq:entr-decompos-scvx} can be bounded by 
\[
\nonumber
\entr{U_{(\mathcal{I}^{(1)})^c}|\supersample,\hat{\theta},\mathcal{I}} &\leq \entr{U_{(\mathcal{I}^{(1)})^c}|\mathcal{I}}  \\
&\leq \EE\left[(n-|\mathcal{I}|)\right],
\]
where the last step follows because of the cardinality bound on the discrete entropy.

Define the random variable $\hat{U}\in \{0,1\}^{n}$ as follows: for every $(i,j)\in \mathcal{I}$, let $\hat{U}_i = j$. For the remaining coordinates set $\hat{U}_i=0$. Notice that $\hat{U}$ is a $\mathcal{I}$ measurable random variable. Therefore, $\entr{U_{\mathcal{I}^{(1)}} \big| \supersample,\hat{\theta},\mathcal{I}}  = \entr{U_{\mathcal{I}^{(1)}} \big| \supersample,\hat{\theta},\mathcal{I},\hat{U}} $. Then, we invoke Fano's inequality from \cref{lem:fano} to write
\[
\nonumber
\entr{U_{\mathcal{I}^{(1)}} \big| \supersample,\hat{\theta},\mathcal{I},\hat{U}} &\leq \entr{U_{\mathcal{I}^{(1)}} \big| \hat{U}} \\
& \leq  1 + \entr{U_{\mathcal{I}^{(1)}}} \Pr\left(\{\exists (i,j) \in \mathcal{I}: U_i\neq j\}\right)\\
& \leq  1 + n\Pr\left(\{\exists (i,j) \in \mathcal{I}: U_i\neq j\}\right),
\]
where the first step follows because conditioning never increase the entropy, the second step follows from Fano's inequality, and the third step follows from the cardinality bound $\entr{U_{\mathcal{I}^{(1)}}}$.

We claim that $\Pr\left(\{\exists (i,j) \in \mathcal{I}: U_i\neq j\}\right) \leq \Pr\left(\mathcal{G}^c\right)$. The proof is as follows: If there exists $(i,j) \in \mathcal{I}$ such that $U_i\neq j$, then, we have
\[
\nonumber
\inner{\hat \theta - \mu}{ Z_{\bar{U}_i,i}-\mu}\geq \frac{\beta}{n},
\]
by the definition of $\mathcal{I}$. Therefore, we conclude $\entr{U_{\mathcal{I}^{(1)}} \big| \supersample,\hat{\theta},\mathcal{I}} \leq 1 + n \Pr\left(\mathcal{G}^c\right)$. The conditional entropy can be upper bounded by
\[
\nonumber
\entr{U \big| \supersample,\hat \theta} \leq  n - \EE\left[|\mathcal{I}|\right] + 1 + n \Pr\left(\mathcal{G}^c\right).
\]
By the definition of mutual information, we can lower bound \cmi as follows
\[
\label{eq:cardinality-cmi-scvx}
\cmi &=  n -   \entr{U|\supersample,\hat{\theta}} \\
      &\geq \EE\left[|\mathcal{I}|\right] - 1 - n \Pr\left(\mathcal{G}^c\right).
\]
In the next step of the proof, we provide a lower bound on $\EE\left[|\mathcal{I}|\right]$. Let us define a random variable that measures the \emph{correlation} between the output and the $i$-th training samples:
\[
\nonumber
c_i \triangleq \inner{\hat{\theta}-\mu}{Z_{U_i,i}-\mu}.
\]
Under the event $\mathcal{G}$, using \cref{lem:card-moments} we can lower bound $\EE[|\mathcal{I}|]$ as follows
\[
\label{eq:cardinality-cvx-main}
\EE[|\mathcal{I}|]&\geq \EE\left[|\mathcal{I}|\indic{\mathcal{G}}\right] \\
&\geq \EE\left[ \bigg|\left\{ i \in \range{n}: c_i \geq \frac{\beta}{n} \right\} \bigg| \indic{\mathcal{G}}\right]\\
&\geq \EE\left[ \frac{\left(\max\{\sum_{i \in \range{n}}c_i -\beta,0\}\right)^2}{\sum_{i \in \range{n}}c_i^2} \indic{\mathcal{G}}\right].
\]
Also, define the following event
\[
\nonumber
\mathcal{E} \triangleq  \mathcal{G}  \cap \left\{\norm{\hat{\theta}-\mu}^2\leq \epsilon\right\} \cap \left\{\sum_{i \in \range{n}}c_i^2\leq K\norm{\hat \theta - \mu}^2\right\},
\]
where $K$ is a universal constant from \cref{lem:basis}. Since $\mathcal{E}\subseteq \mathcal{G}$, we have
\begin{align}
\EE\left[ \frac{\left(\max\{\sum_{i \in \range{n}}c_i -\beta,0\}\right)^2}{\sum_{i \in \range{n}}c_i^2} \indic{\mathcal{G}}\right] & \geq  \EE\left[ \frac{\left(\max\{\sum_{i \in \range{n}}c_i -\beta,0\}\right)^2}{\sum_{i \in \range{n}}c_i^2} \indic{\mathcal{E}}\right]\nonumber\\
& \geq \EE\left[ \frac{\left(\max\{\sum_{i \in \range{n}}c_i -\beta,0\}\right)^2}{K\epsilon} \indic{\mathcal{E}}\right] \nonumber\\
&=  \EE\left[ \frac{\left(\max\{\sum_{i \in \range{n}}c_i -\beta,0\}\right)^2}{K\epsilon} \right] - \EE\left[ \frac{\left(\max\{\sum_{i \in \range{n}}c_i -\beta,0\}\right)^2}{K\epsilon} \indic{\mathcal{E}^c}\right] \label{eq:decomposition-expectation},
\end{align}
where the second step follows because under event $\mathcal{E}$, $\sum_{i \in \range{n}}c_i^2\leq K\norm{\hat \theta - \mu}^2$ and $\norm{\hat \theta - \mu}^2\leq \epsilon$. 
By convexity of $h_1(x)=x^2$ and $h_2(x)=\max\{x,0\}$, we can use Jensen's inequality to obtain
\[
\nonumber
\EE\left[ \frac{\left(\max\left\{\sum_{i \in \range{n}}c_i -\beta,0\right\}\right)^2}{K\epsilon} \right] &\geq \frac{\left(\max\left\{\EE\left[\sum_{i \in \range{n}}c_i\right] -\beta,0\right\}\right)^2}{K\epsilon}\\
&\geq \frac{\left(\frac{1}{3}-2\epsilon-3\delta - \beta\right)^2}{K\epsilon},
\]
where the last step follows from \cref{lem:fingerprinting-scvx}. Notice that by setting $\epsilon$ and $\delta$ sufficiently small, we have $\left(\frac{1}{3}-2\epsilon-3\delta - \beta\right)^2 \geq \Omega(1)$.

To upper bound the second term in \cref{eq:decomposition-expectation}, first, notice that the following holds with probability one
\[
\nonumber
\frac{\left(\max\{\sum_{i \in \range{n}}c_i -\beta,0\}\right)^2}{K\epsilon} &\leq \frac{\left(\sum_{i \in \range{n}}c_i\right)^2 +2\beta^2}{K\epsilon}\\
&\leq \frac{2\beta^2 + 16n^2}{K\epsilon},
\]
where the last step follows from 
\[
\nonumber
\norm{\sum_{i \in \range{n}}c_i} &= \norm{\inner{\hat\theta - \mu}{\sum_{i=1}^{n}\left(Z_{U_i,i}-\mu\right)}}\\
&\leq \norm{\hat\theta - \mu}\norm{\sum_{i=1}^{n}\left(Z_{U_i,i}-\mu\right)}\\
&\leq 4n.
\]
Then, in the next step, we provide an upper bound on $\Pr(\mathcal{E}^c)$. Union bound implies that
\[
\nonumber
\Pr\left(\mathcal{E}^c\right) \leq \Pr\left(\mathcal{G}^c\right) +  \Pr\left(\norm{\hat{\theta}-\mu}^2> \epsilon\right) + \Pr\left(\sum_{i \in \range{n}}c_i^2> K \norm{\hat \theta - \mu}^2\right).
\]
We want to set the parameters so that for a sufficiently small $\gamma$ the following hold 
\[
\label{eq:desired-probs}
\Pr\left(\mathcal{G}^c\right) \leq \gamma/n^2, \Pr\left(\norm{\hat{\theta}-\mu}^2> \epsilon\right) \leq \gamma/n^2,\Pr\left(\sum_{i \in \range{n}}c_i^2> K\norm{\hat \theta - \mu}^2\right)\leq \gamma/n^2.
\]
Notice that
\[
\nonumber
\Pr\left(\mathcal{G}^c\right)  &= \Pr\left(\max_{i \in \range{n}}\left\{\inner{\hat \theta - \mu}{Z_{\bar U_i,i}-\mu}\right\}\geq \beta/n\right)\\
&=\EE\left[\Pr\left(\max_{i \in \range{n}}\inner{\hat \theta - \mu}{Z_{\bar U_i,i}-\mu}\geq \beta/n \bigg| U,\hat \theta\right)\right].
\]
By the construction of CMI in \cref{def:cmi}, conditioned on $U$ and $\hat{\theta}$, $Z_{\bar U_i,i}$ is  \iid from $\Dist$ for $i \in \range{n}$. Therefore, we can use \cref{lem:indep-sample-corr}, to write
\[
\nonumber
\Pr\left(\mathcal{G}^c\right) &\leq n\EE\left[\Pr\left(\max_{i \in \range{n}}\inner{\hat \theta - \mu}{Z_{\bar U_i,i}-\mu}\geq \beta/n \bigg| U,\hat \theta\right)\right]\\
                            &\leq n \exp\left(-\frac{d}{8n^2}\right),
\]
We can see setting $d =  \Omega(n^2 \log(n^2))$, we have $\Pr\left(\mathcal{G}^c\right) \leq \gamma/n^2$ in \cref{eq:desired-probs}. Then, by the fact that $\Alg$ $\epsilon$-learns $\mathcal{P}^{(d)}_{\text{scvx}}$ and \cref{lem:scvx-learner-prop} we have
\[
\nonumber
\Pr\left(\norm{\hat{\theta}-\mu}^2 > \epsilon\right) \leq \delta = O\left(1/n^2\right).
\]
Also, by \cref{lem:basis}, given that $d= \Omega\left(n\right)$, we have
\[
\nonumber
\Pr\left(\sum_{i \in \range{n}}c_i^2 > K \norm{\hat \theta - \mu}\right) &= \Pr\left(\sum_{i \in \range{n}}\inner{\hat \theta - \mu}{Z_{U_i,i} - \mu}^2 > 6 \norm{\hat \theta - \mu}\right)\\
&= \EE\left[\Pr\left(\sum_{i \in \range{n}}\inner{\hat \theta - \mu}{Z_{U_i,i} - \mu}^2 > K \norm{\hat \theta - \mu}\bigg| U\right)\right]\\
&\leq O\left(1/n^2\right).
\]
In summary, we conclude that we can set the parameters such that in \cref{eq:decomposition-expectation}
\[
\nonumber
\EE\left[ \frac{\left(\max\{\sum_{i \in \range{n}}c_i -\beta,0\}\right)^2}{K\epsilon} \indic{\mathcal{E}^c}\right] \leq \frac{2\beta^2+16n^2}{K\epsilon} \Pr\left(\mathcal{G}^{c}\right) \leq O\left(\frac{1}{\epsilon}\right).
\]
Ergo, we conclude 
$
\EE[|\mathcal{I}|]\geq \EE\left[|\mathcal{I}|\indic{\mathcal{G}}\right] 
\geq \Omega\left(\frac{1}{\epsilon}\right).
$
and $\Pr\left(\mathcal{G}^c\right)\leq O(1/n)$. Therefore, from \cref{eq:cardinality-cmi-scvx}, we have
\[
\nonumber
\cmi &\geq \EE\left[|\mathcal{I}|\right] - 1 - n \Pr\left(\mathcal{G}^c\right) \\
&\geq  \Omega(1/\epsilon),
\]
for sufficiently small $\epsilon$.

\proofsubsection{thm:strong-cvx-upperbound}

The algorithm is based on subsampling a subset of training samples to create a new dataset and feeding it into an empirical risk minimizer.  
Let $0< m \leq n$ be constants to be determined later. Let the training set $\trainset_n = (Z_1,\dots,Z_n)$. The output of the algorithm $\hat{\theta}=\Alg_n(\trainset)$ is 
\[
\nonumber
\hat{\theta} = \argmin_{\theta \in \parspace} \Big\lbrace{\sum_{i \in \range{m}}f(\theta,Z_i) \Big\rbrace}.
\]
Notice that $\hat{\theta}$ is unique since $f(\cdot,z)$ is a strongly convex function. 
By \citep[Thm.~6]{shalev2009stochastic}, we have
\[
\nonumber
\EE[\Popriskcvx{\hat{\theta}}] - \min_{\theta \in \parspace} \Popriskcvx{\hat{\theta}}\leq \frac{4L^2}{\mu m}.
\]
Since $\Alg$ is a function of the first $m$ samples only, using the same argument as in the proof of \cref{thm:lip-bounded-upperbound}, we can show that $\cmi\leq m$. Finally, setting $m = \frac{4L^2}{\mu \epsilon}$ concludes the proof.

\subsection{Corollaries of Proof of \cref{thm:main-lower-stronglyconvex}}
\begin{corollary}\label{cor:num-cor-scvx}
  Let $\beta_{\text{scvx}},\epsilon_0,\delta_0$ be universal constants.  Let $\mathcal{P}^{(d)}_{\text{scvx}}$ be the problem instance described in \cref{sec:construction-scvx}. 
For every $\epsilon\leq\epsilon_0$ and $\delta\leq\delta_0$ and for every $\epsilon$-learner ($\Alg=\{\Alg_n\}_{n \in \Naturals}$), with sample complexity $N(\cdot,\cdot)$ the following holds: for every $n\geq N(\epsilon,\delta)$, $\delta<O(1/n^2)$, and $d= \Omega(n^2\log(n))$, there exists a data distribution $\Dist \in \ProbMeasures{\dataspace}$ such that
\[
\nonumber
\EE\left[\Big| \left\{i \in \range{n}: \inner{\hat \theta - \mu}{Z_i - \mu} \right\} \geq \frac{\beta_{\text{scvx}}}{ n} \Big|\right] = \Omega\left(\frac{1}{\epsilon}\right),
\]
where $\trainset_n = (Z_1,\dots,Z_n)\sim \Dist^{\otimes n}$, $\hat \theta = \Alg(\trainset_n)$, and $\mu = \EE_{Z\sim \Dist}\left[Z\right]$.
\end{corollary}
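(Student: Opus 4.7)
The statement is essentially the cardinality lower bound that lives inside the proof of \cref{thm:main-lower-stronglyconvex}: in that proof, $c_i = \inner{\hat\theta - \mu}{Z_{U_i,i} - \mu}$, where $\{Z_{U_i,i}\}_{i \in \range{n}}$ is precisely the training set produced by the supersample construction, and midway through the calculation one shows $\EE[|\{i : c_i \geq \beta/n\}|] = \Omega(1/\epsilon)$ before using this to lower-bound CMI. The plan is to lift this step out as a stand-alone claim, working directly with the training sample.

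First, I would invoke \cref{lem:fingerprinting-scvx} to pick a data distribution $\Dist$ for which $\EE\bigl[\sum_{i=1}^n \inner{\hat\theta - \mu}{Z_i - \mu}\bigr] \geq 1/3 - 2\epsilon - 3\delta$, which is bounded below by a universal positive constant $\gamma$ once $\epsilon_0, \delta_0$ are small enough. Setting $c_i = \inner{\hat\theta - \mu}{Z_i - \mu}$, \cref{lem:card-moments} applied pointwise gives
\[
\nonumber
\bigl|\{i \in \range{n} : c_i \geq \beta_{\text{scvx}}/n\}\bigr| \geq \frac{\bigl(\max\{\sum_i c_i - \beta_{\text{scvx}},\,0\}\bigr)^2}{\sum_i c_i^2}.
\]

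Next, I would define the good event $\mathcal{E} = \{\norm{\hat\theta - \mu}^2 \leq 2\epsilon\} \cap \{\sum_i c_i^2 \leq K \norm{\hat\theta - \mu}^2\}$, where $K$ is the constant from \cref{lem:basis}. The first clause holds with probability at least $1-\delta$ by \cref{lem:scvx-learner-prop}, and the second clause follows from \cref{lem:basis} applied with $y = \hat\theta - \mu$ (the guarantee there is uniform over $y \in \Reals^d$, so the data-dependence of $\hat\theta$ is no obstacle); with failure parameter $1/n^2$ and $d \geq \Omega(n^2 \log n)$ this clause fails with probability $O(1/n^2)$. Combined with $\delta \leq O(1/n^2)$, we have $\Pr(\mathcal{E}^c) = O(1/n^2)$ and $\sum_i c_i^2 \leq 2K\epsilon$ on $\mathcal{E}$.

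Finally, writing $N = (\max\{\sum_i c_i - \beta_{\text{scvx}},\,0\})^2$, the displays above yield
\[
\nonumber
\EE\bigl[\bigl|\{i : c_i \geq \beta_{\text{scvx}}/n\}\bigr|\bigr] \geq \EE\Bigl[\frac{N}{2K\epsilon}\,\indic{\mathcal{E}}\Bigr] \geq \frac{1}{2K\epsilon}\bigl(\EE[N] - \EE[N\,\indic{\mathcal{E}^c}]\bigr).
\]
Jensen's inequality gives $\EE[N] \geq (\gamma - \beta_{\text{scvx}})^2$, and the deterministic bound $N \leq O(n^2)$ (from $\norm{\hat\theta - \mu} \leq 2$ and $\norm{\sum_i (Z_i - \mu)} \leq 2n$) together with $\Pr(\mathcal{E}^c) = O(1/n^2)$ give $\EE[N\,\indic{\mathcal{E}^c}] \leq C$ for a universal constant $C$. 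Choosing $\beta_{\text{scvx}}$ small enough (e.g.\ $1/12$, exactly as in the proof of \cref{thm:main-lower-stronglyconvex}) so that $(\gamma - \beta_{\text{scvx}})^2 > C$ then yields the desired $\Omega(1/\epsilon)$ lower bound. The only real obstacle is this constant-chasing, but it is already carried out in the referenced theorem's proof, so the corollary follows by rereading that argument and stopping before the entropy-to-CMI reduction at the end.
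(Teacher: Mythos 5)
Your proposal is correct and follows the same approach the paper intends: the corollary is exactly the intermediate cardinality bound established inside the proof of \cref{thm:main-lower-stronglyconvex}, built from \cref{lem:fingerprinting-scvx}, \cref{lem:card-moments}, \cref{lem:scvx-learner-prop}, and \cref{lem:basis}. Your one deviation — dropping the ghost-sample event $\mathcal{G}$ from the good event $\mathcal{E}$ — is the correct adaptation, since the corollary is phrased over a single training sample with no supersample, and indeed the paper's chain of inequalities already gives the needed bound through $\EE[\cdot\,\indic{\mathcal{G}}] \leq \EE[\cdot]$, so nothing is lost.
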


\begin{corollary}\label{cor:cardI-scvx}
Let $\beta_{\text{scvx}},\epsilon_0,\delta_0$ be universal constants. 
Consider the structure introduced in the definition of CMI in \cref{def:cmi}. Then, define the random variable
\[
\nonumber
\mathcal{I}=\left\{(i,j) \in \range{n}\times \{0,1\} \Big| \inner{\hat{\theta}-\mu}{Z_{j,i}-\mu}\geq \frac{\beta_{\text{scvx}}}{n}~\text{and}~\inner{\hat{\theta}-\mu}{Z_{1-j,i}-\mu}< \frac{\beta_{\text{scvx}}}{n}\right\},
\]
where $\trainset_n = (Z_{U_1,1},\dots,Z_{U_n,n})$, $\hat \theta = \Alg(\trainset_n)$, and $\mu = \EE_{Z\sim \Dist}\left[Z\right]$.
Let $\mathcal{P}^{(d)}_{\text{scvx}}$ be the problem instance described in \cref{sec:construction-scvx}. For every $\epsilon \leq \epsilon_0$ and $\delta \leq \delta_0$ and for every $\epsilon$-learner ($\Alg=\{\Alg_n\}_{n \in \Naturals}$), with sample complexity $N(\cdot,\cdot)$ the following holds: for every $n\geq N(\epsilon,\delta)$, $\delta<O(1/n^2)$, and $d= \Omega(n^2\log(n))$, there exists a data distribution $\Dist \in \ProbMeasures{\dataspace}$ such that
\[
\nonumber
\EE\left[\big| \mathcal{I}\big|\right]=\Omega\left(\frac{1}{\epsilon}\right).
\]

\end{corollary}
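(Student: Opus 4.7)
The plan is to extract this bound directly from the intermediate steps already developed in the proof of \cref{thm:main-lower-stronglyconvex}, since the cardinality lower bound $\EE[|\mathcal{I}|] = \Omega(1/\epsilon)$ is established there as the key estimate en route to the CMI lower bound. Concretely, I would choose the data distribution $\Dist$ to be the one produced by the fingerprinting \cref{lem:fingerprinting-scvx}, and fix $\beta_{\mathrm{scvx}}=1/12$ (matching the constant $\beta$ used inside the proof of \cref{thm:main-lower-stronglyconvex}); with these choices the set $\mathcal{I}$ in the corollary statement coincides with the set $\mathcal{I}$ analyzed in that proof.

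First, I would reintroduce the per-sample correlation $c_i \triangleq \inner{\hat\theta-\mu}{Z_{U_i,i}-\mu}$ and the ``good'' event
\[
\nonumber
\mathcal{E} \triangleq \bigl\{\forall i:\inner{\hat\theta-\mu}{Z_{\bar U_i,i}-\mu}<\beta_{\mathrm{scvx}}/n\bigr\} \cap \bigl\{\norm{\hat\theta-\mu}^2\leq\epsilon\bigr\} \cap \Bigl\{\sum_i c_i^2 \leq K\norm{\hat\theta-\mu}^2\Bigr\}.
\]
On $\mathcal{E}$, the first conjunct ensures that every index $i$ with $c_i\geq\beta_{\mathrm{scvx}}/n$ contributes a pair $(i,U_i)$ to $\mathcal{I}$, while the other two conjuncts furnish an upper bound of $K\epsilon$ on $\sum_i c_i^2$. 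Then \cref{lem:card-moments} yields
\[
\nonumber
|\mathcal{I}| \;\ge\; \frac{\bigl(\max\{\sum_i c_i - \beta_{\mathrm{scvx}},\,0\}\bigr)^2}{\sum_i c_i^2}\,\indic{\mathcal{E}} \;\ge\; \frac{(\sum_i c_i - \beta_{\mathrm{scvx}})_+^2}{K\epsilon}\,\indic{\mathcal{E}}.
\]

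Second, I would take expectations, apply Jensen's inequality to the convex map $x\mapsto(\max\{x-\beta_{\mathrm{scvx}},0\})^2$, and invoke the fingerprinting \cref{lem:fingerprinting-scvx} to obtain $\EE[\sum_i c_i]\ge 1/3 - 2\epsilon - 3\delta$, which is bounded below by a positive constant strictly greater than $\beta_{\mathrm{scvx}} = 1/12$ once $\epsilon<\epsilon_0$ and $\delta<\delta_0$ are sufficiently small. This gives the in-expectation contribution of $\Omega(1/\epsilon)$ on $\mathcal{E}$. What remains is to absorb the loss incurred on $\mathcal{E}^c$; here I would use the same three ingredients already deployed in the proof of \cref{thm:main-lower-stronglyconvex}: (i) the sub-Gaussian tail of \cref{lem:indep-sample-corr} combined with $d\ge\Omega(n^2\log n)$ to control the uncorrelatedness event, (ii) the $\epsilon$-learner property together with $\delta<O(1/n^2)$ and \cref{lem:scvx-learner-prop} to control $\norm{\hat\theta-\mu}^2\le\epsilon$, and (iii) the operator-norm bound of \cref{lem:basis} to control $\sum_i c_i^2\le K\norm{\hat\theta-\mu}^2$. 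Each yields probability $O(1/n^2)$, and the trivial bound $|\sum_i c_i|\le 4n$ keeps the $\mathcal{E}^c$ correction at $O(1/\epsilon)\cdot O(1/n^2) = o(1/\epsilon)$.

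Combining these estimates gives $\EE[|\mathcal{I}|]\ge \Omega(1/\epsilon)$, as claimed. The main obstacle, if any, is purely bookkeeping: one must verify that the universal constants $\epsilon_0$, $\delta_0$, and $\beta_{\mathrm{scvx}}$ can be chosen simultaneously so that $(1/3-2\epsilon-3\delta-\beta_{\mathrm{scvx}})^2$ is bounded away from zero and so that the dimension condition $d=\Omega(n^2\log n)$ suffices to make all three tail terms in $\Pr(\mathcal{E}^c)$ of order $O(1/n^2)$. Both are already handled in the proof of \cref{thm:main-lower-stronglyconvex}, so the corollary follows without any new technical input.
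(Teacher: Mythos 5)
Your proposal is correct and follows exactly the same route the paper uses: the corollary is precisely the intermediate bound $\EE[|\mathcal{I}|] = \Omega(1/\epsilon)$ established en route to \cref{thm:main-lower-stronglyconvex}, with the same choice of fingerprinting distribution from \cref{lem:fingerprinting-scvx}, $\beta_{\mathrm{scvx}}=1/12$, the same good event $\mathcal{E}$, \cref{lem:card-moments}, Jensen's inequality, and the same three tail bounds (\cref{lem:indep-sample-corr}, the $\epsilon$-learner guarantee with $\delta<O(1/n^2)$, and \cref{lem:basis}). One minor inaccuracy worth flagging: the $\mathcal{E}^c$ correction is not literally $o(1/\epsilon)$ as you write --- the crude bound $|\sum_i c_i|\le 4n$ together with $\Pr(\mathcal{E}^c)=O(1/n^2)$ gives $\frac{16n^2}{K\epsilon}\cdot O(1/n^2)=O(1/\epsilon)$, i.e., the \emph{same order} as the main term. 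The paper resolves this by tracking an explicit $\gamma$ so that $\Pr(\mathcal{E}^c)\le\gamma/n^2$ with $\gamma$ small enough that the correction's constant is dominated by that of the Jensen/fingerprinting lower bound; your closing remark about ``bookkeeping of universal constants'' correctly identifies where this is hiding, but the $o(1/\epsilon)$ claim as written would not go through without that constant-tracking.
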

\section{Proof of Memorization Results}
\label{appx:memorizatin}
\newcommand{\empmu}{\mu_{\text{emp}}}
\newcommand{\heldout}{X}

\subsection{Adversary Strategy}
We describe the proposed strategy for the adversary in \cref{alg:attacker-convex} and \cref{alg:attacker-strongly-convex}.

\begin{algorithm}
\caption{ $\mathcal{Q}_{\text{cvx}}$: Adversary for Convex Losses}
\label{alg:attacker-convex}
\begin{algorithmic}[1]
\State Inputs: $\hat{\theta}\in \parspace$, $Z\in \dataspace$, $\Dist \in \ProbMeasures{\dataspace}$.
\State $\mu = \EE_{Z\sim \Dist}[Z]$
\State $ A = \mathrm{diag}\left[ \left\{\frac{144\epsilon^2 - d(\mu^{(k)})^2}{1-d(\mu^{(k)})^2}\right\}_{k=1}^{d}  \right] $
\State $\beta = \epsilon$.
\State $\mathcal{B}_{\text{FP}}=\varnothing$
\If{$\displaystyle \inner{\hat\theta }{A\left(Z - \mu\right)}\geq \frac{\beta}{n}$}
 \State $\hat{b}=1$
\Else
\State $\hat{b}=0$
\EndIf
\State Output $\hat{b}$
\end{algorithmic}
\end{algorithm}

\begin{algorithm}
\caption{$\mathcal{Q}_{\text{scvx}}$: Adversary for Strongly Convex Losses}
\label{alg:attacker-strongly-convex}
\begin{algorithmic}[1]
\State Inputs: $\hat{\theta}\in \parspace$, $Z\in \dataspace$, $\Dist \in \ProbMeasures{\dataspace}$.
\State $\mu = \EE_{Z\sim \Dist}[Z]$
\State $\beta = \beta_{\text{scvx}}$. \Comment{$\beta_{\text{scvx}}$ is from \cref{cor:cardI-scvx}.}
\If{$\displaystyle \inner{\hat\theta - \mu}{Z - \mu}\geq \frac{\beta}{4n}$}
 \State $\hat{b}=1$
\Else
\State $\hat{b}=0$
\EndIf
\State Output $\hat{b}$
\end{algorithmic}
\end{algorithm}

\begin{algorithm}
\caption{$\mathsf{FP}_{\text{scvx}}$: Fingerprint detector for Strongly Convex Losses}
\label{alg:fpl-strongly-convex}
\begin{algorithmic}[1]
\State Inputs: $\hat{\theta}\in \parspace$, $(Z_0,\dots,Z_n)\in \dataspace^{n+1}$, $\Dist \in \ProbMeasures{\dataspace}$.
\State $\mu = \EE_{Z\sim \Dist}[Z]$
\State $\beta = \beta_{\text{scvx}}$. \Comment{$\beta_{\text{scvx}}$ is from \cref{cor:cardI-scvx}.}
\State $\mathcal{B}_{\text{FP}}=\varnothing$
\For{$i \in \{0,\dots,n\}$}:
\If{$\displaystyle  \inner{\hat \theta - \mu}{Z_i - \mu}\geq\frac{\beta}{n}$}
\State $\mathcal{B}_{\text{FP}} = \mathcal{B}_{\text{FP}} \cup \{i\}$
\EndIf
\EndFor
\State Output $\mathcal{B}_{\text{FP}}$
\end{algorithmic}
\end{algorithm}

\begin{algorithm}
\caption{$\mathsf{CR}_{\text{scvx}}$: Correlation-Reduction for Strongly Convex Losses}
\label{alg:corrred-strongly-convex}
\begin{algorithmic}[1]
\State Inputs: $\hat{\theta}\in \parspace$, $(Z_1,\dots,Z_n)\in \dataspace^n$, $Z_0 \sim \Dist$
\State $\tilde \mu = Z_0$
\State $\beta = \beta_{\text{scvx}}$. \Comment{$\beta_{\text{scvx}}$ is from \cref{cor:cardI-scvx}.}
\State $\mathcal{B}_{\text{corr-red}}=\varnothing$
\State $w = \hat \theta$
\For{$i \in \range{n}$}:
\If{$ \displaystyle  \inner{\hat \theta}{Z_i - \tilde{\mu}}\geq\frac{\beta}{2n}$}
\State $\mathcal{B}_{\text{corr-red}} = \mathcal{B}_{\text{corr-red}} \cup \{i\}$
    \If{$\big|\mathcal{B}_{\text{corr-red}}\big|=\frac{2}{\epsilon}\log\left(\frac{1}{\delta}\right)$}
        \State Sample $\mathcal{R}\subseteq \range{n}$ a uniform random subset of size $\frac{2}{\epsilon}\log(\frac{1}{\delta})$ from $\range{n}$
        \State $w = \empmu\left(\mathcal{R}\right)$ \Comment{$\empmu\left(\mathcal{R}\right)$ denotes the empirical mean of the data points with the index in $\mathcal{R}$.}
        \State \textbf{Break}
    \EndIf
\EndIf
\EndFor
\State Output $w, \mathcal{B}_{\text{corr-red}}$
\end{algorithmic}
\end{algorithm}

\proofsubsection{thm:membership-cvx}
Let $b = (b_1,\dots,b_n)$ denote the outcome of fair coin at each round of the game described in \cref{def:mem-game}.  Then, let $\hat b_i = \mathcal{Q}_{\text{cvx}}\left(\hat \theta, Z_{b_i,i},\mathcal{D}\right)$ for each round $i\in \range{n}$ and let us denote the output of the adversary as $(\hat b_1,\dots, \hat b_n)\in \{0,1\}^{ n}$ where $\mathcal{Q}_{\text{cvx}}$ is given by \cref{alg:attacker-convex}.

\subsubsection{Soundness Analysis}
Define the following event 
\[
\nonumber
\mathcal{G} = \left\{\forall i \in \range{n}:  \inner{\hat{\theta} }{A\left(Z_{0,i}-\mu\right)}<\beta/n\right\}.
\]

Notice that
\[
\nonumber
&\Pr\left( \exists i \in \range{n}\!:\! \mathcal{Q}_{\text{cvx}}\left(\hat{\theta},Z_{0,i},\Dist\right)=1 \right) \\
&= \Pr\left( \right\{\exists i \in \range{n}\!:\! \mathcal{Q}_{\text{cvx}}\left(\hat{\theta},Z_{0,i},\Dist\right)=1\left\}~\wedge~\mathcal{G} \right)  +  \Pr\left( \right\{\exists i \in \range{n}\!:\! \mathcal{Q}_{\text{cvx}}\left(\hat{\theta},Z_{0,i},\Dist\right)=1\left\}~\wedge~\mathcal{G}^c \right)\\
&\leq \Pr\left( \right\{\exists i \in \range{n}\!:\! \mathcal{Q}_{\text{cvx}}\left(\hat{\theta},Z_{0,i},\Dist\right)=1 \left\}~\wedge~\mathcal{G} \right) + \Pr\left( \mathcal{G}^c \right).
\]
We claim that $\Pr\left( \right\{\exists i \in \range{n}\!:\! \mathcal{Q}_{\text{cvx}}\left(\hat{\theta},Z_{0,i},\Dist\right)=1 \left\}~\wedge~\mathcal{G} \right) = 0$. It follows from the following observation: $\exists i \in \range{n}\!:\! \mathcal{Q}_{\text{cvx}}\left(\hat{\theta},Z_{0,i},\Dist\right)=1$ can happen if and only if there exists $i \in \range{n}$ such that $ \inner{\hat{\theta} }{A\left(Z_{0,i}-\mu\right)}\geq \beta/n$. However, the intersection of this event with $\mathcal{G}$ is empty by the definition of $\mathcal{G}$. Therefore, we can write
\[
\nonumber
\Pr\left(\exists i \in \range{n}\!:\! \mathcal{Q}_{\text{cvx}}\left(\hat{\theta},Z_{0,i},\Dist\right)=1 \right) \leq \Pr\left( \mathcal{G}^c \right).
\]

To upper bound $\Pr\left( \mathcal{G}^c \right)$, notice
\[
\nonumber
\Pr\left(\mathcal{G}^c\right) &= \Pr\left( \exists i \in \range{n}: \inner{\hat{\theta} }{A\left(Z_{0,i}-\mu\right)}\geq \beta/n\right).
\]
By the facts that $\hat \theta \indep Z_{0,i}$ for every $i \in \range{n}$, $A$ is a diagonal matrix, $\norm{A}_2 \leq 144\epsilon^2$, and $\norm{\hat{\theta}}\leq 1$, we can use \cref{lem:indep-sample-corr} to write
\[
\nonumber
\Pr\left( \exists i \in \range{n}: \inner{A\hat{\theta} }{Z_{0,i}-\mu}\geq \beta/n\right)\leq n \exp\left( - \frac{\epsilon^2 d}{2\cdot n^2 \cdot (144 \epsilon^2)^2 } \right) \leq \xi,
\]
given $d \geq \Omega(n^2 \log(n/\xi))$. Notice that by assumption $\epsilon<1$. This concludes the soundness analysis.

\subsubsection{Recall Analysis}
The construction of the hard problem instance is given in \cref{sec:construction-cvx}. Let $\Alg$ be an arbitrary $\epsilon$-learner and let $\Dist$ be a distribution that satisfies \cref{lem:fingerprinting-cvx}. Define the following set
\[
\mathcal{I} = \{i \in \range{n}:  \inner{\hat\theta }{A\left(Z_{1,i} - \mu\right)}\geq \frac{\beta}{n} \}
\]
This set includes the subset of training samples that the adversary could identify. In \cref{cor:num-cor-cvx}, we showed that $\EE\left[\mathcal{I}\right]=\Omega\left(\frac{1}{\epsilon^2}\right)$. Moreover, by the assumption on $n$, we have $n= \Theta\left(\frac{\log(1/\delta)}{\epsilon^2}\right)$. Notice that $\mathcal{I}\leq n$ with probability one. We invoke \cref{lem:reverse-markov} to write
\[
\Pr\left(\mathcal{I}=\Omega\left(\frac{1}{\epsilon^2}\right)\right) \geq p_0 \left(\log(1/\delta)\right)^{-1},
\]
as was to be shown where $p_0$ is a universal constant.

\proofsubsection{thm:membership-scvx}

Let $b = (b_1,\dots,b_n)$ denote the outcome of a fair coin at each round of the game described in \cref{def:mem-game}.  Then, let $\hat b_i = \mathcal{Q}_{\text{scvx}}\left(\hat \theta, Z_{b_i,i},\mathcal{D}\right)$ for each round $i\in \range{n}$ and let us denote the output of the adversary as $(\hat b_1,\dots, \hat b_n)\in \{0,1\}^{ n}$.

\subsubsection{Soundness Analysis}
Define the following event 
\[
\nonumber
\mathcal{G} = \left\{\forall i \in \range{n}: \inner{\hat{\theta} - \mu}{Z_{0,i}-\mu}<\beta/n\right\}.
\]
Notice that
\[
\nonumber
&\Pr\left( \exists i \in \range{n}\!:\! \mathcal{Q}_{\text{scvx}}\left(\hat{\theta},Z_{0,i},\Dist\right)=1\right) \\
&= \Pr\left( \right\{\exists i \in \range{n}\!:\! \mathcal{Q}_{\text{scvx}}\left(\hat{\theta},Z_{0,i},\Dist\right)=1\left\}~\wedge~\mathcal{G} \right)  +  \Pr\left( \right\{\exists i \in \range{n}\!:\! \mathcal{Q}_{\text{scvx}}\left(\hat{\theta},Z_{0,i},\Dist\right)=1\left\}~\wedge~\mathcal{G}^c \right)\\
&\leq \Pr\left( \right\{\exists i \in \range{n}\!:\! \mathcal{Q}_{\text{scvx}}\left(\hat{\theta},Z_{0,i},\Dist\right)=1 \left\}~\wedge~\mathcal{G} \right) + \Pr\left( \mathcal{G}^c \right).
\]
We claim that $\Pr\left( \right\{\exists i \in \range{n}\!:\! \mathcal{Q}_{\text{scvx}}\left(\hat{\theta},Z_{0,i},\Dist\right)=1 \left\}~\wedge~\mathcal{G} \right) = 0$. It follows from the following observation: $\mathcal{Q}_{\text{scvx}}\left(\hat{\theta},Z_{0,i},\Dist\right)=1$ can happen if and only if $\inner{\hat \theta - \mu}{Z_{0,i}-\mu}\geq \beta/n$. However, the intersection of this event with $\mathcal{G}$ is empty by the definition of $\mathcal{G}$. Therefore, we can write
\[
\nonumber
\Pr\left( \exists i \in \range{n}\!:\! \mathcal{Q}_{\text{scvx}}\left(\hat{\theta},Z_{0,i},\Dist\right)=1 \right) \leq \Pr\left( \mathcal{G}^c \right).
\]
Since $Z_{0,i}\indep \hat \theta$, we can use \cref{lem:indep-sample-corr} to write
\[
\nonumber
\Pr\left( \mathcal{G}^c \right) \leq n \exp\left( - \frac{d}{4 n^2} \right).
\]
By setting $d\geq \Omega(n^2 \log(n/\xi))$, we obtain that 
\[
\nonumber
\Pr\left( \exists i \in \range{n}\!:\!\mathcal{Q}_{\text{scvx}}\left(\hat{\theta},Z_{0,i},\Dist\right)=1 \right) \leq \xi.
\]
This concludes the soundness analysis.

\subsubsection{Recall Analysis}

The construction of the hard problem instance is given in \cref{sec:construction-scvx}. Let $\Alg$ be an arbitrary $\epsilon$-learner. The data distribution $\Dist$ is a product distribution over $\{\pm 1/\sqrt{d}\}^d$ and will be determined later. Consider the algorithms given in \cref{alg:attacker-strongly-convex,alg:fpl-strongly-convex,alg:corrred-strongly-convex} and using them define the following random variables:
\[
\nonumber
(Z_0,Z_1,\dots,Z_n) &\sim \Dist^{\otimes (n+1)},\\
\hat \theta & = \Alg_n(Z_1,\dots,Z_n),\\
\mathcal{B}_{\text{adversary}}&=\{i \in \range{n}: \hat b_i=1 \},\\
w,\mathcal{B}_{\text{corr-red}}&=\mathsf{CR}_{\text{scvx}}\left(\hat \theta,(Z_1,\dots,Z_n),Z_0\right),\\
\mathcal{B}_{\text{FP}} &= \mathsf{FP}_{\text{scvx}}\left(w,(Z_0,\dots,Z_n),\Dist\right).
\]
In particular, $Z_0$ is a sample drawn from $\Dist$ which is independent of the training set, i.e., $(Z_1,\dots,Z_n)$. 

Recall that our goal is to show that $\Pr\left(\big|\mathcal{B}_{\text{adversary}}\big|=\Omega(1/\epsilon)\right)$ is greater than a universal constant. Our approach is as follows:
In the first step, we show that, with a high probability, $\mathcal{B}_{\text{corr-red}} \subseteq \mathcal{B}_{\text{adversary}}$. Then, in the second step, we will show that with a high probability $|\mathcal{B}_{\text{FP}}| \leq |\mathcal{B}_{\text{corr-red}}| + 1$. In the third step, we will show that $\EE\left[\big| \mathcal{B}_{\text{FP}}\big|\right]=\Omega(1/\epsilon)$ which gives us  $\EE\left[\big| \mathcal{B}_{\text{corr-red}}\big|\right]=\Omega(1/\epsilon)$. Finally, we argue that $\mathcal{B}_{\text{corr-red}} = \tilde{O}\left(1/\epsilon\right)$ with probability one, and using reverse Markov's inequality, we show that $\Pr(\big|\mathcal{B}_{\text{corr-red}}\big| \geq \Omega(1/\epsilon))$ is greater than a universal constant. Combining this result with Step 1 concludes the proof.

\paragraph{Step 1: with a high probability, $\mathcal{B}_{\text{corr-red}} \subseteq \mathcal{B}_{\text{adversary}}$.} 
Simple calculations show that 
\[
\label{eq:connection-scvx-corr-red}
\inner{\hat \theta}{Z_i - \tilde{\mu}} = \inner{\hat \theta - \mu}{Z_i - \mu} + \inner{\hat \theta}{\mu - \tilde \mu} + \inner{\mu}{Z_i - \mu}.
\]
Then, we can write
\[
\nonumber
\Pr\left(\mathcal{B}_{\text{corr-red}} \not\subseteq \mathcal{B}_{\text{adversary}}\right)&=\Pr\left(\exists i \in \range{n}: \inner{\hat \theta}{Z_i - \tilde{\mu}}  \geq \frac{\beta}{2n} \wedge \inner{\hat \theta - \mu}{Z_i - \mu}  < \frac{\beta}{4n} \right) \\
&\leq \Pr \left( \exists i \in \range{n}:\inner{\hat \theta}{\mu - \tilde \mu}  +  \inner{\mu}{Z_i - \mu}\geq \frac{\beta}{4n}  \right),
\]
where the last step follows from \cref{eq:connection-scvx-corr-red}. Then, 
\[
\nonumber
\Pr\left(\mathcal{B}_{\text{corr-red}} \not\subseteq \mathcal{B}_{\text{adversary}}\right)&\leq  \Pr \left(\inner{\hat \theta}{\mu - \tilde \mu} \geq \frac{\beta}{4n}  \right) + \Pr\left( \exists i \in \range{n}:  \inner{\mu}{Z_i - \mu} \geq \frac{\beta}{4n} \right)\\
&\leq \exp\left(-\frac{d\cdot\beta^2}{32 n^2 \norm{\mu}^2}\right) + n \exp\left(-\frac{d\beta^2}{32 n^2 \norm{\mu}^2}\right)\\
&\leq \left(n+1\right) \exp\left(-\frac{d\cdot\beta^2}{32n^2}\right),
\]
where the first step follows from union bound and the second step follows from \cref{lem:indep-sample-corr}. This shows that setting $d = \Omega\left(n^2 \log(n^2)\right)$, we obtain 
\[
\nonumber
\Pr\left(\exists i \in \range{n}: \inner{\hat \theta}{Z_i - \tilde{\mu}}  \geq \frac{\beta}{2n} \wedge \inner{\hat \theta - \mu}{Z_i - \mu}  < \frac{\beta}{4n} \right) \leq O\left(\frac{1}{n}\right).
\]
This is equivalent to 
\[
\nonumber
\Pr\left(\mathcal{B}_{\text{corr-red}} \subseteq \mathcal{B}_{\text{adversary}}\right) \geq 1 - O\left(\frac{1}{n}\right).
\]
\paragraph{Step 2: with a high probability,  $|\mathcal{B}_{\text{FP}}| \leq |\mathcal{B}_{\text{corr-red}}| + 1$.}

Notice that $\big|\mathcal{B}_{\text{FP}}\big| = \big|\mathcal{B}_{\text{FP}} \cap \{1,\dots,n\}\big| + \big|\mathcal{B}_{\text{FP}} \cap \{0\}\big|$. 
We can write 
\[
\label{eq:bcorr-bfp}
\Pr\left(\big|\mathcal{B}_{\text{FP}} \cap \{1,\dots,n\}\big| > \big|\mathcal{B}_{\text{corr-red}}\big| \right) 
&= \Pr\left(\left\{\big|\mathcal{B}_{\text{FP}} \cap \{1,\dots,n\}\big| > \big|\mathcal{B}_{\text{corr-red}}\big| \right\}\wedge \{w = \hat \theta\}\right) \\
&+ \Pr\left(\left\{\big|\mathcal{B}_{\text{FP}} \cap \{1,\dots,n\}\big| > \big|\mathcal{B}_{\text{corr-red}}\big| \right\} \wedge \{w = \theta_0\}\right),
\]
where $\theta_0$ denotes the output in the case of outputting the empirical mean.
For the first term in \cref{eq:bcorr-bfp}, we can write
\[
\nonumber
&\Pr\left(\left\{\big|\mathcal{B}_{\text{FP}} \cap \{1,\dots,n\}\big| > \big|\mathcal{B}_{\text{corr-red}}\big| \right\}\wedge \{w = \hat \theta\}\right)\\
&\leq \Pr\left(\left\{\exists i \in \{1,\dots,n\}: i \in \mathcal{B}_{\text{FP}} \wedge i \notin \mathcal{B}_{\text{corr-red}}\right\}\wedge \{w = \hat \theta\}\right) \\
&= \Pr\left(\left\{\exists i \in \{1,\dots,n\}: \inner{\hat \theta - \mu}{Z_i - \mu}\geq \frac{\beta}{n} \wedge \inner{\hat \theta }{Z_i - \tilde{\mu}} < \frac{\beta}{2n} \right\}\wedge \{w = \hat \theta\}\right) \\
& \leq \Pr\left(\left\{\exists i \in \{1,\dots,n\}: \inner{\hat \theta - \mu}{Z_i - \mu}\geq \frac{\beta}{n} \wedge \inner{\hat \theta }{Z_i - \tilde{\mu}} < \frac{\beta}{2n} \right\}\right).
\]
Notice that we have
\[
\nonumber
\inner{\hat \theta - \mu}{Z_i - \mu} = \inner{\hat \theta}{Z_i - \tilde \mu} + \inner{\hat \theta}{\tilde \mu - \mu} - \inner{\mu}{Z_i - \mu}.
\]
Using this equality, we can write
\[
\nonumber
&\Pr\left(\left\{\exists i \in \{1,\dots,n\}: \inner{\hat \theta - \mu}{Z_i - \mu}\geq \frac{\beta}{n} \wedge \inner{\hat \theta }{Z_i - \tilde{\mu}} < \frac{\beta}{2n} \right\}\right) \\
&\leq \Pr \left(\inner{\hat \theta}{\mu - \tilde \mu} \geq \frac{\beta}{4n}  \right) + \Pr\left( \exists i \in \range{n}:  \inner{\mu}{ \mu - Z_i} \geq \frac{\beta}{4n} \right)\\
&\leq \exp\left(-\frac{d\cdot\beta^2}{32 n^2 \norm{\mu}^2}\right) + n \exp\left(-\frac{d\beta^2}{32 n^2 \norm{\mu}^2}\right)\\
&\leq \left(n+1\right) \exp\left(-\frac{d\cdot\beta^2}{32n^2}\right),
\]
where the first step follows from the union bound and the  step follows from \cref{lem:indep-sample-corr} since $\tilde \mu \indep \hat \theta$. Therefore, setting $d = \Omega(n^2 \log(n))$, we obtain that this term is at most $O(1/n)$.

Then, for the second term in \cref{eq:bcorr-bfp},
\[
\nonumber
&\Pr\left(\{\big| \mathcal{B}_{\text{FP}}\cap\{1,\dots,n\} \big|> \big| \mathcal{B}_{\text{corr-red}}   \big|\} \wedge \{w=\empmu\left(\mathcal{R}\right)\}\right)\\
&= \Pr\left(\{\big| \mathcal{B}_{\text{FP}}\cap\{1,\dots,n\} \big|> \frac{2}{\epsilon}\log(1/\delta)  \} \wedge \{w=\empmu\left(\mathcal{R}\right)\}\right),
\]
where the last line follows because under the event $w=\empmu\left(\mathcal{R}\right)$, $\big| \mathcal{B}_{\text{corr-red}}   \big| = \frac{2}{\epsilon}\log\left(1/\delta\right)$ by the description of \cref{alg:corrred-strongly-convex}. Notice that $|\mathcal{R}|=\frac{2}{\epsilon}\log(1/\delta)$ and $\mathcal{R}$ is independent of every other random variables. Therefore, the event $\big|\mathcal{B}_{\text{FP}}\cap \{1,\dots,n\}\big|>\frac{2}{\epsilon}\log(1/\delta)$ is a subset of the event that there exists $i \notin \mathcal{R}$ such that $\inner{w-\mu}{Z_i - \mu}>\frac{\beta}{n}$. However, notice that $\empmu(\mathcal{R}) \indep Z_i$ by the description of \cref{alg:corrred-strongly-convex} for $i \neq R$. Therefore, we can write
\[
\nonumber
&\Pr\left(\{\big| \mathcal{B}_{\text{FP}}\cap\{1,\dots,n\} \big|> \frac{2}{\epsilon}\log(1/\delta)  \} \wedge \{w=\empmu\left(\mathcal{R}\right)\}\right)  \\
&\leq \EE\left[\Pr\left( \exists i \notin \mathcal{R}: \inner{\empmu(\mathcal{R}) - \mu}{Z_i -\mu}\geq \frac{\beta}{n} \wedge \{w=\empmu\left(\mathcal{R}\right)\}\bigg| \mathcal{R}\right)\right]
\\
&\leq \EE\left[\Pr\left( \exists i \notin \mathcal{R}: \inner{\empmu(\mathcal{R}) - \mu}{Z_i -\mu}\geq \frac{\beta}{n} \bigg| \mathcal{R}\right)\right].
\]
By an application of \cref{lem:indep-sample-corr}, we have 
\[
\nonumber
\Pr\left( \exists i \notin \mathcal{R}: \inner{\empmu(\mathcal{R}) - \mu}{Z_i -\mu}\geq \frac{\beta}{n} \bigg| \mathcal{R}\right) \leq n\cdot \exp\left(-\frac{d \beta^2}{32 n^2}\right).
\]
It can be seen by setting $d = \Omega(n^2\log(n^2))$, we obtain that this probability is at most $O(1/n)$. Therefore, combining these two upper bounds with \cref{eq:bcorr-bfp} shows that with probability at least $1-O(1/n)$, we have 
\[
\nonumber
\big|\mathcal{B}_{\text{FP}}\big| &= \big|\mathcal{B}_{\text{FP}} \cap \{1,\dots,n\}\big| + \big|\mathcal{B}_{\text{FP}} \cap \{0\}\big|\\
&\leq \big| \mathcal{B}_{\text{corr-red}} \big| + 1,
\]
as was to be shown.
\paragraph{Step 3: $\EE\left[\mathcal{B}_{\text{FP}}\right]=\Omega(1/\epsilon)$.}
In the first step, we claim that $w$ (output of \cref{alg:corrred-strongly-convex}) satisfies the definition of $\epsilon$-learner. The reason is as follows: $w$ can be either $\hat \theta = \Alg_n(\trainset_n)$ or $\empmu\left(\mathcal{R}\right)$. Notice that $\Alg_n$ is an $\epsilon$-learner by assumption. Consider the case that $w = \empmu \left(\mathcal{R}\right)$. Then, 
\[
\nonumber
\Pr\left(\norm{\empmu\left(\mathcal{R}\right) - \mu}^2 > \epsilon\right) &= \EE\left[ \Pr\left(\norm{\empmu\left(\mathcal{R}\right) - \mu}^2 > \epsilon \big| \mathcal{R}\right)\right]\\
&\leq \delta,
\]
where the last step follows from \cref{lem:subgaussian-randomvector}. Also, by the description of the problem instance in \cref{sec:construction-scvx} we have for every $\Dist$ and $\theta$, 
\[
\nonumber
\Popriskcvx{\theta} -  \Popriskcvx{\theta^\star} &= \frac{1}{2}\norm{\theta - \mu}^2.
\]

Therefore, by union bound we see that the output of \cref{alg:corrred-strongly-convex} has an excess error of $\epsilon$, with probability at least $1-2\delta$ with the sample complexity of $N(\epsilon,\delta)$ where $N$ is the sample complexity of $\Alg$. 

In \cref{cor:num-cor-scvx} we showed that for every $\epsilon$-learner, we can find a data distribution $\Dist$ such that 
\[
\nonumber
\EE\left[\big|\mathcal{B}_{\text{FP}}\big|\right]=\Omega\left(\frac{1}{\epsilon}\right).
\]
In particular, we choose $\Dist$ to achieve this lowerbound for $w$  (output of \cref{alg:corrred-strongly-convex}).
\paragraph{Step 4: Conclusion.}
First, we provide a lower bound on the $\EE[|\mathcal{B}_{\text{corr-red}}|]$ as follows
\[
\nonumber
\EE\left[\big| \mathcal{B}_{\text{corr-red}}\big|\right] &= \EE\left[\big| \mathcal{B}_{\text{corr-red}}\big|\cdot \indic{|\mathcal{B}_{\text{corr-red}}| +1 \geq |\mathcal{B}_{\text{FP}}| }\right] + \EE\left[\big| \mathcal{B}_{\text{corr-red}}\big|\cdot \indic{|\mathcal{B}_{\text{corr-red}}| +1 < |\mathcal{B}_{\text{FP}}| }\right]\\
&\geq   \EE\left[\big| \mathcal{B}_{\text{FP}}\big|\cdot \indic{|\mathcal{B}_{\text{corr-red}}| +1 \geq |\mathcal{B}_{\text{FP}}| }\right] -1 + \EE\left[\big| \mathcal{B}_{\text{corr-red}}\big|\cdot \indic{|\mathcal{B}_{\text{corr-red}}| +1 < |\mathcal{B}_{\text{FP}}| }\right]\\
&\geq \EE\left[\big| \mathcal{B}_{\text{FP}}\big|\right] -1  + \EE\left[\left(\big| \mathcal{B}_{\text{corr-red}}\big| - \big| \mathcal{B}_{\text{FL}}\big| \right)\cdot \indic{|\mathcal{B}_{\text{corr-red}}| +1 < |\mathcal{B}_{\text{FP}}| }\right]\\
&\geq \EE\left[\big| \mathcal{B}_{\text{FP}}\big|\right] -1 -n \Pr\left(|\mathcal{B}_{\text{corr-red}}| +1 < |\mathcal{B}_{\text{FP}}|\right)\\
&\geq \Omega\left(\frac{1}{\epsilon}\right),
\]
where the last step follows from Step 2 and Step 3 for sufficiently small $\epsilon$.  By the description of the random variable $\big| \mathcal{B}_{\text{corr-red}}\big|$ in \cref{alg:corrred-strongly-convex}, with probability one $\big| \mathcal{B}_{\text{corr-red}}\big| \leq \frac{2}{\epsilon}\log\left(\frac{1}{\delta}\right)$. 
Then, we invoke reverse Markov's inequality from \cref{lem:reverse-markov} gives
\[
\nonumber
\Pr\left(\big| \mathcal{B}_{\text{corr-red}}\big| = \Omega\left(\frac{1}{\epsilon}\right)\right)\geq p_0 \left(\log(1/\delta)\right)^{-1},
\]
where $p_0$ is a universal constant. Also, we showed in Step 1 that
\[
\nonumber
\Pr\left(\big| \mathcal{B}_{\text{adversary}}\big| \geq \big| \mathcal{B}_{\text{corr-red}}\big| \right) \geq 1 - O\left(\frac{1}{n}\right).
\]
Combining these two facts using union bound gives us
\[
\nonumber
\Pr\left(\big| \mathcal{B}_{\text{adversary}}\big| = \Omega\left(\frac{1}{\epsilon}\right)\right)\geq p_0 \left(\log(1/\delta)\right)^{-1} + O(1/n),
\]
as was to be shown. Note that $n$ is at least $1/\epsilon$, therefore, for a sufficiently small $\epsilon$, we have the desired result.

\section{Proofs of Lower Bound for Individual-Sample CMI}
\label{appx:indiv-sample}

In this part, we show that our proof techniques for \cref{thm:main-lower-convex} and \cref{thm:main-lower-stronglyconvex} easily extend to ISCMI.  First, we begin with the strong convex case. Let $\beta=\beta_{\text{scvx}}$ as in \cref{cor:cardI-scvx} and define  
\[
\nonumber
\mathcal{I}=\left\{(i,j) \in \range{n}\times \{0,1\} \Big| \inner{\hat{\theta}-\mu}{Z_{j,i}-\mu}\geq \beta/n~\text{and}~\inner{\hat{\theta}-\mu}{Z_{1-j,i}-\mu}< \beta/n\right\}.
\]
Also, define $\mathcal{I}^{(1)}$ as follows: $i \in \mathcal{I}^{(1)}$ iff $\exists j \in \{0,1\}$ such that $(i,j)\in \mathcal{I}$. In words, $\mathcal{I}^{(1)}$ represents the set of coulmns for which there is a significant gap between the correlations.

We also introduce the following events
\[
\nonumber
\mathcal{G}_i = \left\{\inner{\hat{\theta}-\mu}{Z_{\bar{U_i},i}-\mu}<\beta/n\right\}, \quad
\mathcal{M}_i = \left\{i \in \mathcal{I}^{(1)}\right\},
\]
where $\bar U_i = 1 - U_i$. 

We can simplify the mutual information term in ISCMI as follows
\[
\nonumber
\sum_{i=1}^{n}\minf{\hat{\theta},U_i \big| Z_{0,i},Z_{1,i}} &= n - \sum_{i=1}^{n} \entr{U_i \big| \hat{\theta},Z_{0,i},Z_{1,i}},
\]
where the last step follows from $U_i \indep (Z_{0,i},Z_{1,i})$.

In the next step, for every $i \in \range{n}$, we provide an upper bound on $ \entr{U_i \big| \hat{\theta},Z_{0,i},Z_{1,i}}$. First, notice that $\indic{\mathcal{M}_i}$ is a $\left(\hat \theta, Z_{0,i},Z_{1,i}\right)$-measurable random variable. Therefore,
\[
\label{eq:expansion-iscmi}
\entr{U_i \big| \hat{\theta},Z_{0,i},Z_{1,i}} = \entr{U_i \big| \hat{\theta},Z_{0,i},Z_{1,i},\indic{\mathcal{M}_i}}.
\]
Using the monotonicity and chain rule of entropy, we can write
\[
\nonumber
\entr{U_i \big| \hat{\theta},Z_{0,i},Z_{1,i},\indic{\mathcal{M}_i}} &\leq \entr{U_i ,\indic{\mathcal{G}_i} \big| \hat{\theta},Z_{0,i},Z_{1,i},\indic{\mathcal{M}_i}}\\
&= \entr{\indic{\mathcal{G}_i} \big| \hat{\theta},Z_{0,i},Z_{1,i},\indic{\mathcal{M}_i}} + \entr{U_i  \big| \hat{\theta},Z_{0,i},Z_{1,i},\indic{\mathcal{M}_i},\indic{\mathcal{G}_i}}\\
&\leq \entr{\indic{\mathcal{G}_i}} + \entr{U_i  \big| \hat{\theta},Z_{0,i},Z_{1,i},\indic{\mathcal{M}_i},\indic{\mathcal{G}_i}} \\
& = \mathrm{H}_b\left(\Pr\left(\mathcal{G}_i^c\right)\right) + \entr{U_i  \big| \hat{\theta},Z_{0,i},Z_{1,i},\indic{\mathcal{M}_i},\indic{\mathcal{G}_i}},
\]
where the third step follows because conditioning does not increase entropy and the last step follows because $\indic{\mathcal{G}_i}$ is a binary random variable. Then, we can write 
\[
\nonumber
\entr{U_i  \big| \hat{\theta},Z_{0,i},Z_{1,i},\indic{\mathcal{M}_i},\indic{\mathcal{G}_i}} &= \entr{U_i  \big| \hat{\theta},Z_{0,i},Z_{1,i},\indic{\mathcal{G}_i},\indic{\mathcal{M}_i}=0} \Pr\left(\indic{\mathcal{M}_i}=0\right)\\
&+ \entr{U_i  \big| \hat{\theta},Z_{0,i},Z_{1,i},\indic{\mathcal{G}_i}=1,\indic{\mathcal{M}_i}=1} \Pr\left(\indic{\mathcal{M}_i}=1 \wedge \indic{\mathcal{G}_i}=1\right) \\
&+\entr{U_i  \big| \hat{\theta},Z_{0,i},Z_{1,i},\indic{\mathcal{G}_i}=0,\indic{\mathcal{M}_i}=1} \Pr\left(\indic{\mathcal{M}_i}=1 \wedge \indic{\mathcal{G}_i}=0\right).
\]
We use the following estimates for each term. Since $U_i$ is a binary random variable, we have
\[
\nonumber
\entr{U_i  \big| \hat{\theta},Z_{0,i},Z_{1,i},\indic{\mathcal{G}_i},\indic{\mathcal{M}_i}=0} \Pr\left(\indic{\mathcal{M}_i}=0\right) \leq \Pr\left(\indic{\mathcal{M}_i}=0\right).
\]
Then, for the second term, conditioned on  $\indic{\mathcal{G}_i}=\indic{\mathcal{M}_i}=1$, $U_i$ is given by $j$ where $(i,j)\in \mathcal{I}$ since
\[
\nonumber
\{(i,j)\in \mathcal{I}\} \cap \mathcal{G}_i &\Rightarrow  \left\{ \inner{\hat{\theta}-\mu}{Z_{j,i}-\mu} \geq \beta/n~\text{and}~\inner{\hat{\theta}-\mu}{Z_{1-j,i}-\mu} < \beta/n \right\}  \cap \left\{ \inner{\hat{\theta}-\mu}{Z_{\bar{U}_i,i}-\mu}< \beta/n \right\}\\
&\Rightarrow \{j=U_i\}.
\]

Therefore,
$$
\entr{U_i  \big| \hat{\theta},Z_{0,i},Z_{1,i},\indic{\mathcal{G}_i}=1,\indic{\mathcal{M}_i}=0}=0.
$$
For the third term, since $U_i$ is a binary random variable, we can write
\[
\nonumber
\entr{U_i  \big| \hat{\theta},Z_{0,i},Z_{1,i},\indic{\mathcal{G}_i}=0,\indic{\mathcal{M}_i}=0}\Pr\left(\indic{\mathcal{M}_i}=1 \wedge \indic{\mathcal{G}_i}=0\right) \leq \Pr\left(\indic{\mathcal{G}_i}=0\right).
\]
In summary, we showed that
\[
\nonumber
\entr{U_i \big| \hat{\theta},Z_{0,i},Z_{1,i}}  \leq \Pr\left(\indic{\mathcal{G}_i}=0\right) + \mathrm{H}_b\left(\Pr\left(\mathcal{G}_i^c\right)\right)  +\Pr\left(\indic{\mathcal{M}_i}=0\right).
\]
Using it, we can upper bound the sum of the conditional entropy as 
\[
\label{eq:iscmi-simpified}
 \sum_{i=1}^{n} \entr{U_i \big| \hat{\theta},Z_{0,i},Z_{1,i}} &\leq \sum_{i=1}^{n}\mathrm{H}_b\left(\Pr\left(\mathcal{G}_i^c\right)\right) + \Pr\left(\mathcal{G}_i^c\right) + \Pr\left(\indic{\mathcal{M}_i}=0\right)\\
 & = \sum_{i=1}^{n} \EE\left[\indic{\indic{\mathcal{M}_i}=0}\right] + \Pr\left(\mathcal{G}_i^c\right) + \mathrm{H}_b\left(\Pr\left(\mathcal{G}_i^c\right)\right)\\
  & = \EE\left[(n-\big| \mathcal{I} \big|)\right]+ \sum_{i=1}^{n}\Pr\left(\mathcal{G}_i^c\right) + \sum_{i=1}^{n}\mathrm{H}_b\left(\Pr\left(\mathcal{G}_i^c\right)\right),
\]
where the last term follows because $\sum_{i=1}^{n}\EE\left[\indic{\indic{\mathcal{M}_i}=0}\right] = n - |\mathcal{I}|$. Next, we provide an estimate for $\Pr\left(\mathcal{G}_i^c\right)$ 
\[
\nonumber
\Pr\left(\mathcal{G}_i^c\right) &= \Pr\left(\inner{\hat{\theta}-\mu}{Z_{\bar{U_i},i}-\mu}\geq\beta/n\right)\\
&=\EE\left[ \Pr\left(\inner{\hat{\theta}-\mu}{Z_{\bar{U_i},i}-\mu}\geq\beta/n\bigg| \hat \theta,U_i\right)\right].
\]
Since conditioned on $U_i$ and $\hat{\theta}$, $Z_{\bar U_i,i}\sim \Dist$ and $\Dist$ is a product measure, using \cref{lem:indep-sample-corr}, we have
\[
\nonumber
\Pr\left(\inner{\hat{\theta}-\mu}{Z_{\bar{U_i},i}-\mu}\geq\beta/n\right) \leq O\left(\frac{1}{n^2}\right).
\]
Also, by the well-known inequality, $\binaryentr{x}\leq -x\log(x)+x$ for $x\in [0,1]$, we have
\[
\nonumber
\mathrm{H}_b\left(\Pr\left(\mathcal{G}_i^c\right)\right)  \leq O\left(\frac{\log(n)}{n^2}\right).
\]
Therefore, using this estimates to simplify \cref{eq:iscmi-simpified}, we obtain
\[
\nonumber
\sum_{i=1}^{n} \entr{U_i \big| \hat{\theta},Z_{0,i},Z_{1,i}} \leq n - \EE[|\mathcal{I}|] + O\left(\frac{\log(n)}{n}\right).
\]
Plugging this upper bound into \cref{eq:expansion-iscmi},
\[
\nonumber
\sum_{i=1}^{n}\minf{\hat{\theta},U_i \big| Z_{0,i},Z_{1,i}} &= n - \sum_{i=1}^{n} \entr{U_i \big| \hat{\theta},Z_{0,i},Z_{1,i}}\\
&\geq \EE[|\mathcal{I}|] - O\left(\frac{\log(n)}{n}\right).
\]
Finally, we use \cref{cor:cardI-scvx}, to conclude that 
\[
\nonumber
\sum_{i=1}^{n}\minf{\hat{\theta},U_i \big| Z_{0,i},Z_{1,i}} &\geq \EE[|\mathcal{I}|] - O\left(\frac{\log(n)}{n}\right)\\
&\geq \Omega\left(\frac{1}{\epsilon}\right) - O\left(\frac{\log(n)}{n}\right)\\
&\geq\Omega\left(\frac{1}{\epsilon}\right),
\]
where the last step follows since the minimum number of samples to $\epsilon$-learn $\mathcal{P}^{(d)}_{\text{scvx}}$ is $n\geq \Omega(1/\epsilon)$.

The proof of the CLB subclass of SCOs is the same: using the same techniques we can lower bound the ISCMI by $\EE[|\mathcal{I}|]$ and then by \cref{cor:cardI-cvx} the result follows. We don't repeat it here.

\end{document}